\DeclareMathOperator*{\argmin}{arg\,min}
\DeclareMathOperator{\Tr}{Tr}
\newcommand{\RR}{{\mathbb{R}}}
\newcommand{\EE}{\ensuremath{\mathbb{E}}}
\newcommand{\FF}{\ensuremath{\mathcal F}}
\newcommand{\Var}{\mathbb{V}\mathrm{ar}}
\def\diag{\mathop{\rm diag}\nolimits}%
\newcommand{\CB}[1]{\color{black} #1} 
\newcommand{\CR}[1]{\color{black} #1} 
\newcommand{\CM}[1]{\color{black} #1} 
\newcommand{\rev}[1]{{\color{black}#1}}
\newtheorem{thm}{Theorem}[section]
\newtheorem{prop}{Proposition}
\newtheorem{rem}{Remark}%
\newtheorem{definition}{Definition}
\numberwithin{equation}{section}
\newtheorem{hyp}{Assumption}[section]
\newtheorem{lem}{Lemma}[section]
\newtheorem{cor}{Corollary}[section]
\title{High-dimensional ridge regression with random features for non-identically distributed data with a variance profile}
\author{Issa-Mbenard Dabo\footnote{New York University Abu Dhabi, UAE (issa.mbenard.dabo@nyu.edu)}  \  \& J\'er\'emie Bigot \footnote{Institut de math\'ematiques de Bordeaux, France (jeremie.bigot@math.u-bordeaux.fr)}}
\begin{document}

\maketitle

\abstract{\rev{Random feature ridge regression is often analyzed in the high-dimensional regime under the homogeneous sampling model $x_i=\Sigma^{1/2}x_i'$, where the vectors $x_i'$ have iid entries and the same covariance matrix $\Sigma$ is shared by all samples. In this paper, we move beyond this setting and study non-identically distributed data through a variance-profile model in which the training and test covariates have row-dependent diagonal covariance matrices $\Sigma_i=\diag(\gamma_{i1}^2,\ldots,\gamma_{ip}^2)$ and $\widetilde{\Sigma}_i=\diag(\tilde\gamma_{i1}^2,\ldots,\tilde\gamma_{ip}^2)$. Our main contribution is the derivation of asymptotic equivalents for the training and test risks of ridge regression with random features when $n$, $p$, and $m$ grow proportionally. The first set of equivalents is obtained by combining the linear-plus-chaos approximation with traffic-probability arguments, whereas the second set is deterministic and follows from operator-valued free probability through an amalgamation-over-the-diagonal argument. These equivalents are sharp in numerical experiments. They also reveal how heterogeneous variance profiles, including mixture-type profiles inspired by MNIST, can modify generalization and exhibit double-descent behavior when the ridge parameter is small.}}

\noindent \emph{Keywords:}   Two-layers neural networks; Random features; Ridge regression; Non-identically distributed data; Double descent; Variance profile; Heteroscedasticity; Random Matrices; Deterministic equivalents;  Mixture models.


\section{Introduction}\label{sec:intro}
This study focuses on the  classical statistical learning problem {\CB in a regression framework}. {\CB In this setup, we have a dataset consisting of independent pairs of training data} $(y_i, x_i)_{1 \leq i \leq n}$, where each input $x_i \in \mathbb{R}^p$ is a vector of random variables representing features (or predictors), and each $y_i \in \mathbb{R}$ is a scalar response variable that we aim to predict. The primary goal in such contexts is to construct {\CB an estimator} $\hat{f} : \mathbb{R}^d \to \mathbb{R}$ that leverages the training data to make accurate  {\CB responses for new inputs, where $\hat{f}$ belongs to a given functional class  $\mathcal{F}$ .
In this study, our focus is on fitting the training data using the Random Features (RF) model \cite{rahimi2007random}. 
This corresponds to the choice of the following functional class of estimators:}
$$
\mathcal{F}_{\text{RF}}^{W} = \left\{ f_{\theta}(x) = \sum_{j=1}^{m} \theta_j h\left( w_j^\top x / \sqrt{p} \right) : (\theta_1, \ldots, \theta_m) \in \mathbb{R}^m, x \in \mathbb{R}^p \right\},
$$
where $\theta \in \mathbb{R}^m$ are the trainable parameters (or weights) of the model, and $h : \mathbb{R} \to \mathbb{R}$ is {\CB an activation function} such as ReLU, sigmoid, or tanh, which introduces non-linearity into the model. The vectors $w_1, \ldots, w_m$ represent the {\CB so-called} random features, and they are organized into an $m \times p$ matrix $W = (w_1 | w_2 | \cdots | w_m)^\top $ where the entries of the matrix $W$ are {\CM centered and independent random variables that are identically distributed}. These random features are fixed during the training process,  and they serve as a way of projecting the input data into a new space where linear methods can be applied more effectively. Importantly, the entries of $W$ are uncorrelated with the training data $(y_i, x_i)_{1 \leq i \leq n}$, meaning that the random feature transformation is independent of the specific realization of the data.

The RF model can be seen as a simplified form of a two-layer neural network, where the weights in the first layer (that is the random features $W$) are kept fixed, and only the second layer's weights $\theta$ are trained.

{\CB This model has  gained considerable attention in recent years, and it serves as a theoretical basis for studying  the generalization error in deep learning models. In particular, the RF model} has become a popular tool for understanding the double descent phenomenon in machine learning \cite{Belkin19}, as discussed in several key studies \cite{adlam2020neural, adlam2022random, NEURIPS2020_a03fa308, 9931146, NIPS2017_6857, triple-descent, MeiMonta22, hastie2022surprises, pmlr-v119-d-ascoli20a,meng2024multiple}.

 Existing theoretical studies generally explore the double descent phenomenon within the traditional framework of random matrix theory (RMT). This framework assumes that the number $n$ of training samples , the number  $p$ of predictors, and the number $m$ of random features  all grow proportionally. This approach has been particularly useful in examining the behavior of {\CB various} models in high-dimensional statistical learning \cite{couillet2022random}, where the complexity of the model increases with the dimensions of the data. However, a {\CB common assumption in this  research area} is that the training data $(y_i, x_i)_{1 \leq i \leq n}$ consists of independent and identically distributed (iid) observations. Specifically, it is often assumed that the random predictors $x_i$  {\CB follow the sampling model}
\begin{equation}\label{eq:data_iid}
x_i = \Sigma^{1/2} x_i',
\end{equation}
where $\Sigma$ is a positive definite covariance matrix, and $x_1', \ldots, x_n'$ are random vectors with  iid  entries that have mean zero and unit variance.
\rev{In contrast, the results proved in the present paper do not cover an arbitrary non-diagonal covariance structure. They are derived for a variance-profile model in which the covariance of each sample is diagonal and may vary from one observation to another. More precisely, our asymptotic analysis is carried out for covariance matrices of the form $\Sigma_i=\diag(\gamma_{i1}^2,\ldots,\gamma_{ip}^2)$.}

{\CB Several works, including \cite{peche19, benigni2021eigenvalue,NIPS2017_6857,adlam2020neural}, have studied the performances of the RF model under these two assumptions  through the prism of RMT}. 
 The results established in \cite{peche19, benigni2021eigenvalue} are focused on the behavior of the spectrum of the random matrix   
 {\CB 
$$
H =  h\left( \frac{WX_n^\top}{\sqrt{p}} \right) \in \mathbb{R}^{\textcolor{black}{m \times n}},
$$
as the dimensions $n,m$ and $p$ tend to infinity comparably, where $X_n \in \mathbb{R}^{n \times p}$ denote  the  matrix of predictors, and the function $h$ is applied element-wise to the matrix $\frac{WX_n^\top}{\sqrt{p}}$}. These works prove the convergence of  {\CB the empirical distribution of singular values} of $H$ {\CB towards a limit that is characterized} by a fixed-point equation for the related Stieltjes transform. Nevertheless, the non-linearity induced by the activation function $h$ makes $H$ difficult to study with {\CB conventional methods in RMT}. {\CB To circumvent this issue}, these works provide a simpler matrix,  that we refer to as the {\CB ``linear-plus-chaos" approximation} of $H$, that shares the same limiting singular-value distribution as that of $H$. This powerful result appeared to be very useful in a statistical context as it was used in various works, see e.g.\ \cite{adlam2020neural,MeiMonta22,triple-descent,9931146}, to {\CB obtain} asymptotic equivalents of the training and prediction risks {\CB in the regression framework for the RF model}.

{\CB In this paper, we} aim at providing asymptotic equivalents of the training and test risks in a broader context involving variance profiles (a notion to be defined later) {\CB to model  settings where {\CM the rows of the random matrix $X_n$ are not necessarily iid}. In order to achieve this goal, our {\CB approach} strongly relies on an extension of the ``linear-plus-chaos" {\CB approximation of the matrix $H$ recently proposed} in \cite{DaboMale} when the {\CB data matrix $X_n$ and the random features matrix} $W$ are endowed with variance profiles. 
\rev{This contribution is complementary to two nearby strands of literature. On the one hand, relative to our previous work on linear ridge regression with a variance profile \cite{BigotDaboMale}, the present paper addresses the genuinely nonlinear random feature matrix and therefore requires the linear-plus-chaos approximation together with traffic-theory and operator-valued free-probability tools. On the other hand, some recent deterministic-equivalent analyses of ridge regression and random feature regression allow generic non-diagonal covariance structures \cite{defilippis2024dimension,cheng2024dimension}. Our focus here is different: we treat row-wise diagonal covariance matrices that vary across samples. This setting is especially natural for mixture models, class-dependent heteroscedasticity, and other independent-but-non-identically distributed designs.}

\subsection{Main contributions and related works}

 \subsubsection{The use of random matrices with a variance profile}
 
 Building on our previous work \cite{BigotDaboMale} {\CB on high-dimensional linear regression,  we now} move beyond the traditional iid assumption by introducing a more flexible framework where the random predictors $x_i$ are independent but not identically distributed.  To this end, we {\CB shall assume} that
$$
x_i = \Sigma_i^{1/2} x_i',
$$
where $\Sigma_i$ is a diagonal matrix defined as
$$
\Sigma_i = \underset{1 \leq j \leq p}{\text{diag}}(\gamma_{ij}^2).
$$
Here, the terms $\gamma_{i,1}^2, \ldots, \gamma_{i,p}^2$ represent the variances of the individual components of the random predictors $x_i$, allowing for non-identical distributions across different {\CB feature} vectors. Under this generalized framework, the  matrix $X_n \in \mathbb{R}^{n \times p}$  of predictors can be expressed as
$$
 X_n = \Upsilon_x \circ X_n'  ,
$$
where $\circ$ denotes the Hadamard product (element-wise multiplication) of two matrices. In this formulation, $X_n' = \begin{pmatrix} x_1'^\top \\ \vdots \\ x_n'^\top \end{pmatrix}$ is a random matrix with iid  entries, centered with variance one, representing the underlying randomness of the predictors, while $\Upsilon_x = (\gamma_{ij})$ is a deterministic matrix that modulates the variances of the individual entries in $X_n$.  The so-called  variance profile matrix
$$
\Gamma_{n} = (\gamma_{ij}^2) \in \mathbb{R}^{n \times p},
$$
 captures the variance of each element in $X_n$, and it allows for heterogeneous distributions across different samples. This generalization is  useful for modeling {\CB elements in a data set for which} the assumption of identical distributions {\CB may} not hold. 

\subsubsection{Ridge regression with random features}
 Our analysis {\CB begins by using} the functional class $\FF_{\mathrm{RF}}^{W}$ to fit the training data using ridge regression. {\CB This amounts to solve} the following optimization problem:

$$
\hat{\theta}_{\lambda} = \argmin_{\theta \in \mathbb{R}^m} \left( \frac{1}{n} \left\| Y_n - h\left( \frac{WX_n^\top}{\sqrt{p}} \right)^\top \theta \right \|^2_{ 2} + \lambda \| \theta \|^2 \right),
$$
{\CB where $\lambda > 0$ is a regularization parameter,  and $Y_n = (y_1, \cdots, y_n)^\top  \in \mathbb{R}^n$}. To assess the performances of RF ridge regression in a context where the data may not be identically distributed, we make the assumption that the response variable follows the  linear model:
\begin{equation}
y_i = x_i^\top \beta_\ast + \epsilon_i, \quad 1 \leq i \leq n. \label{eq:linmod}
\end{equation}
Here, {\CB $\varepsilon_n  = (\epsilon_1,\ldots,\epsilon_n)^\top  \in \mathbb{R}^n $} represents a noise vector that is independent of the {\CB matrix of predictors} $X_n$,  with $\mathbb{E}[\varepsilon_n] = 0$ and $\mathbb{E}[\varepsilon_n \varepsilon_n^T] = \sigma^2 I_n$. The  vector  of regression coefficients $\beta_{\ast}$ is also random, and it is assumed to be independent of both $X_n$ and $\varepsilon_n$, characterized by $\mathbb{E}[\beta_{\ast}] = 0$ and
$
\mathbb{E}[\beta_{\ast} \beta_{\ast}^T] = \frac{\alpha^2}{p} I_p,
$
{\CB where $\alpha > 0$ represents the average amount of signal strength.}  By assuming randomness of the  regression coefficients $\beta_{\ast}$, we thus focus on the random-effect hypothesis, which corresponds to an average case analysis over a set of dense regression coefficients  as argued in \cite{17-AOS1549}. {\CB Note the generalization of our approach to data sampled from the regression model $y_i = f_{\ast}(x_i) + \epsilon_i$, with $f_{\ast}$ a non-linear function,  is beyond the scope of this paper. We leave this perspective for future works, and for hints in this direction we refer to \cite{MeiMonta22} where the estimation of non-linear models using RF ridge regression has been studied the iid setting.}

The primary aim of this paper is then to evaluate the performance of RF ridge regression within the framework of the linear model stated above,  by specifically analyzing the training risk $E_{train}(\lambda)$ and the testing (or predictive) risk $E_{test}(\lambda)$. These risks are defined as follows:
\begin{eqnarray} 
E_{train} (\lambda) &=& \frac{1}{n}\mathbb{E} \left[\left\| Y_n - h\left( \frac{WX_n^\top}{\sqrt{p}} \right)^\top \hat{\theta}_{\lambda} \right\|_2^2\right],  \label{eq:train} \\
E_{test}(\lambda) &=&  \frac{1}{\tilde{n}} \sum_{i=1}^{\tilde{n}} \mathbb{E} \left[\left( \tilde y_i - h\left( \frac{W\tilde{x}_i}{\sqrt{p}} \right)^\top \hat{\theta}_{\lambda}\right)^2\right], \label{eq:test}
\end{eqnarray}
where the pairs  $(\tilde{x_1},\tilde{y}_1),...,(\tilde{x}_{\tilde{n}},\tilde{y}_{\tilde{n}})$  represent new samples that are independent of the training data $(Y_n, X_n)$, and that also follows the linear model $\tilde{y}_i = \tilde{x}_i^\top \beta_\ast + \tilde{\varepsilon}_i$ for $i=1,\ldots,\tilde{n}$. In this context, $\tilde{x}_i$ is also defined as $\widetilde{\Sigma}^{1/2}_i \tilde{x}'_i$, where $\tilde{x}'_i \in \mathbb{R}^p$ is a random vector with iid  centered components, each having unit variance. The covariance structure of $\tilde{x}_i$ is represented by $\widetilde{\Sigma}_i = \text{diag}(\tilde{\gamma}_{i,1}^2, \ldots, \tilde{\gamma}_{i,p}^2)$.

The central contribution of this paper lies in deriving two types of asymptotic equivalents for $E_{train}(\lambda)$ and $E_{test}(\lambda)$. The first (respectively the second) type of asymptotic equivalents  referred to as ``lozenge equivalents'' (respectively ``square equivalents'') consists in two values,  denoted as $E^\lozenge_{train}(\lambda)$ and $E^\lozenge_{test}(\lambda)$ (respectively $E^\square_{train}(\lambda)$ and $E^\square_{test}(\lambda)$), that approximate the training and predictive risks in the high dimensional regime in the following sense:
\begin{eqnarray}\label{eq:asy_equiv}
\lim_{\substack{n \to \infty, \; p/n \to c_p\\ m/n \to c_m }} |E_{train} (\lambda) - E^\bullet_{train}(\lambda)| &=& 0,\label{eq:asy_equiv1} \\
\lim_{\substack{n \to \infty, \; p/n \to c_p\\ m/n \to c_m\; \tilde{n}/n \to \tilde{c}}} |E_{test}(\lambda) - E^\bullet_{test}(\lambda)| &=& 0,\label{eq:asy_equiv2}
\end{eqnarray}
for $\bullet \in \lbrace \lozenge, \ \square \rbrace$.

The derivation of these asymptotic equivalents is facilitated by the fact that $\hat{\theta}_\lambda$ has the explicit expression:
$$
\hat{\theta}_{\lambda} = (H H^\top + n \lambda I_m)^{-1} H Y_n = H(H^\top H + n \lambda I_n)^{-1} Y_n,
$$
where $H = h\left( \frac{WX_n^\top}{\sqrt{p}} \right) \in \mathbb{R}^{n \times m}$. To analyze the predictive risk, we also introduce the notation \textcolor{black}{for the test sample}
$$
\tilde{X}_{\tilde{n}} = \begin{pmatrix} \tilde{x}_1^\top \\ \hdots \\ \tilde{x}_{\tilde{n}}^\top \end{pmatrix} {\CM =  \tilde{\Upsilon}_x \circ \tilde{X}_{\tilde{n}}'}, \; \tilde{Y}_{\tilde n} = \begin{pmatrix} \tilde{y}_1 \\ \hdots \\ \tilde{y}_{\tilde{n}} \end{pmatrix}, \mbox{ and } \tilde{H} = h\Big(\Big\{ \frac{W\tilde{X}_{\tilde{n}}^\top}{\sqrt{p}} \Big\} \Big),
$$
\textcolor{black}{where $\tilde{X}'_{\tilde n}$ is a $\tilde n \times p$ random matrix with iid centered unit-variance entries.}
 Now, introducing the resolvent  
$$
Q(-\lambda) = (H^\top H/n + \lambda I_n)^{-1},
$$
one has that $\hat{\theta}_{\lambda} = HQ(-\lambda)Y_n/n$. The  resolvent matrix $Q(-\lambda)$ is well known in RMT as it is associated with understanding the singular values distribution of the matrix $H/\sqrt{n}$, providing critical insights into the spectral behavior of high-dimensional random matrices. As such, both $\hat{\theta}_{\lambda}$ and the associated training and predictive risks are intrinsically linked to the spectral distribution of the matrix $H^\top H/n$ via its resolvent $Q(-\lambda)$.

{\CB The asymptotic equivalents that are derived in this paper extend the results of \cite{adlam2020neural} to the case where  {\CM $X_n$ is a random matrix with a variance profile}}. {\CB One of our contributions is then to show} that, for some variance profiles, the expression of $E_{train}$ matches with the case of constant variance profiles (corresponding to iid data) dealt by \cite{adlam2020neural}. 
The derivation of   $E^\lozenge_{train}(\lambda)$ and $E^\lozenge_{test}(\lambda)$ mainly rely on the results of \cite{DaboMale} that uses the {\CB so-called traffic theory in free probability \cite{zbMATH07338275}} to prove that {\CB the expressions of $E^\lozenge_{train}(\lambda)$ and $E^\lozenge_{test}(\lambda)$ are simply obtained by replacing $H$ with its ``linear-plus-chaos" approximation $H^{\lozenge}$ (defined in Section \ref{sec:lozenge}) in the closed-form expressions of  $E_{train} (\lambda)$ and $E_{test} (\lambda)$ that are given in Section \ref{sec:fst_equiv}. Nevertheless, these first asymptotic equivalents remain expectations of random quantities that depend on the random matrices $X_n$ and $W$. {\CM To the contrary, the second set of asymptotic equivalents}  $E^\square_{train}(\lambda)$ and $E^\square_{test}(\lambda)$ only depend on the variance profiles $\Gamma_{n} = (\gamma_{ij}^2)$ and $\tilde{\Gamma}_{n} = (\tilde{\gamma}_{ij}^2)$.  The construction of this second set of asymptotic equivalents heavily relies on results related to the notion of amalgamation over the diagonal in free probability \cite{20-AOP1447} of block matrices whose \rev{blocks} are diagonal. These asymptotic equivalents are discussed in more detail in Sections \ref{sec:lozenge} and \ref{sec:square}}.
\rev{At a high level, the lozenge equivalents are obtained by replacing the nonlinear random feature matrix $H$ by its Gaussian surrogate $H^{\lozenge}$ inside the exact risk formulas. The square equivalents then replace the remaining random operator-valued resolvent terms of the lozenge model by the deterministic fixed-point solution $\mathfrak{Q}^{\square}$. In that sense, the lozenge step produces a random surrogate model, whereas the square step yields fully deterministic approximations. Although we do not introduce a separate symbol $H^{\square}$, the square quantities play exactly this deterministic-surrogate role at the level of the risk formulas.}

\subsubsection{Application to features following a mixture model}\label{sec:mixture}
 As an application, our  methodology offers a novel framework for analyzing predictors that arises from  a mixture model, that is {\CB when the data come from multiple underlying subpopulations.  Indeed, consider  latent class variables $C_1, \ldots, C_n$ (assumed to be fixed in our analysis), which determine the class membership of each  feature vector $x_i$ for  $1 \leq i \leq n$.} Within each class, the random predictor $x_i$  is then assumed to have a specific covariance structure as follows. If $x_i$ belongs to the $k$-th class among $K$ possibilities, that is $C_i = k$ for  some $1 \leq k \leq K$,  we model this predictor as
$$
x_i = S_k^{1/2} x_i',
$$
where $S_k = \text{diag}(s_{k,1}^2, \ldots, s_{k,p}^2)$ is a diagonal matrix that characterizes the covariance structure of the predictors within the $k$-th class. The variances $s_{k,j}^2$ of the predictors vary across classes, reflecting potential heterogeneity in the data.  Then, given the class labels $C_1, \ldots, C_n$, the resulting matrix of predictors $X_n$ exhibits a variance profile governed by the matrix
$$
\Gamma_{n} = (s_{C_i,j}^2) \in \mathbb{R}^{n \times p},
$$
where each entry $s_{C_i,j}^2$ corresponds to the variance of the $j$-th coordinate of the $i$-th  feature vector, determined by its class membership $C_i$.

\begin{figure}[htbp]
\begin{center}
{\subfigure[]{\includegraphics[width = 0.55\textwidth]{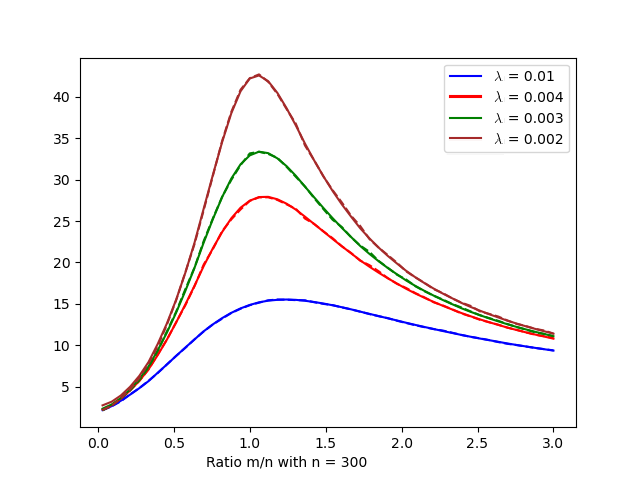}}
\label{fig:db_comparaison_const_db}}
\hfill
{\subfigure[]{\includegraphics[width =0.4\textwidth]{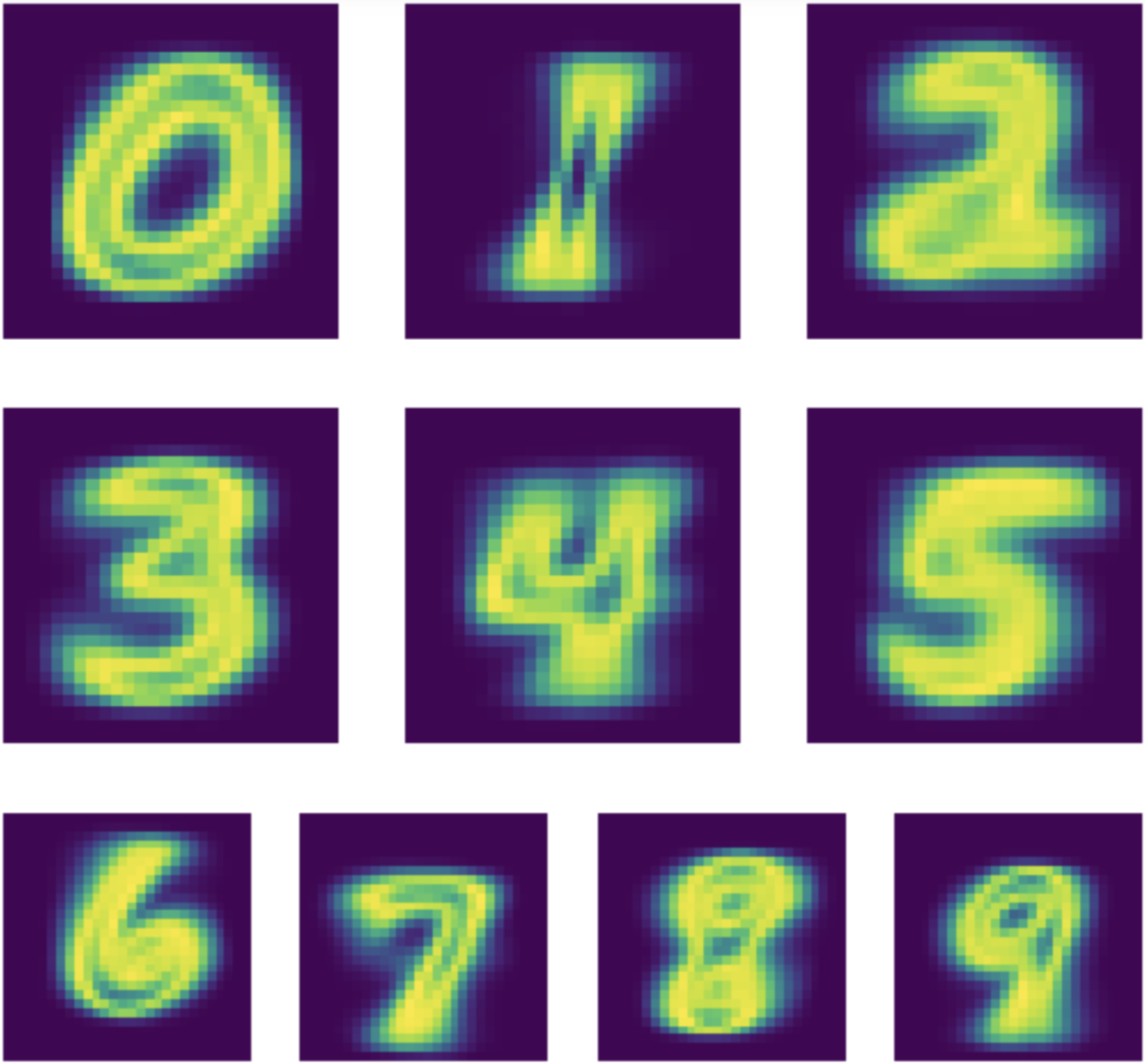}}
\label{fig:db_pw_Q_0.0005}}
\end{center}
\caption{(a) Comparison of the predictive risk $E_{test}(\lambda)$ (dashed curves) with the square asymptotic equivalent $E_{test}^{\square}(\lambda)$ (solid curves) for different values of $\lambda$ and  the ratio $m/n$. (b) Variance profile of the digits from the MNIST dataset within each class.} \label{fig:intro}
\end{figure}

To illustrate the benefits of our approach to handle the setting of predictors following such a mixture model, we have conducted numerical experiments based on the MNIST dataset \cite{lecun2010mnist}, a well-known image classification dataset comprising handwritten digits from 0 to 9. This dataset follows a 10-class mixture model where the  class $C_k$ corresponds to the digit $ k $. Each class exhibits a distinct variance profile, as illustrated in Figure \ref{fig:intro} (b) that displays   images representing the variance of each pixel of the digits within each class. These variances profiles have been computed using the full MNIST dataset made of 60000 images. It can be observed that they vary significantly from one class to another, which motivates the use of the variance profile approach. By vectorizing these images, we extract, for each class $k$, a variance profile $ S_k = \text{diag}(s_{k})$ with $s_k = (s_{k,1}^2, \ldots, s_{k,p}^2) \in \mathbb{R}^p$ of dimension $p=28 \times 28 = 784$. These diagonal matrices are normalized so that they satisfy Assumption \ref{hyp:rowsto} below. Then, we simulate data from the regression model \eqref{eq:linmod} for the variance profiles
\begin{equation} \label{eq:profileMNIST}
\Gamma_n = \begin{pmatrix} \mathbf{1}_{n_0} s_{1}^{\top}\\
\mathbf{1}_{n_0} s_{2}^{\top}\\
\vdots \\ \mathbf{1}_{n_0} s_{K}^{\top}
\end{pmatrix}
\in \RR^{n \times p} \; \mbox{ with } n = K n_0, \quad \mbox{ and } \quad \tilde{\Gamma}_n = \begin{pmatrix} \mathbf{1}_{\tilde{n}_0} s_{1}^{\top}\\
\mathbf{1}_{\tilde{n}_0} s_{2}^{\top}\\
\vdots \\ \mathbf{1}_{\tilde{n}_0} s_{K}^{\top}
\end{pmatrix}
\in \RR^{n \times p} \; \mbox{ with } \tilde{n} = K \tilde{n}_0,
\end{equation}
where $\mathbf{1}_{n_0}$ denotes the vector of length $n_0$ with all entries equal to one.  We then apply RF ridge regression with $h(x) = x^3$  for several values of  $ \lambda $. In all the numerical experiments reported in the paper, the entries of $W'$, $X_n'$ and $\beta$ are chosen as iid random variables sampled from the standard Gaussian distribution $\mathcal{N}(0,1)$.
\rev{We emphasize that MNIST is used here only as a data-driven source of class-dependent diagonal variance profiles. The simulated predictors are not raw MNIST images: they are Gaussian vectors whose coordinate-wise variances are prescribed by the matrices $S_k$. This controlled design isolates the effect of heteroscedasticity and sample-to-sample variability, which is precisely the mechanism captured by our theory. It also makes explicit the limitation of the present model: the coordinates of $X_n'$ are independent, so local pixel correlations of real images are not represented in these experiments.}

In Figure \ref{fig:intro}(a), we compare the  predictive risk $E_{test}(\lambda)$ (approximated by  a Monte Carlo approach) with the square asymptotic equivalent $E_{test}^{\square}(\lambda)$ derived from our theoretical framework, for different values of the ratio $m/n$ ranging from $0.03$ to $3$ with $n = 300$ and $\tilde{n} = 100$.  Notably, the curve corresponding to the asymptotic equivalent closely matches the one for the predictive risk, demonstrating the accuracy of our theoretical results. One can observe in Figure \ref{fig:intro}(a) the presence of a peak in the predictive risk curve whenever $m \approx n$ that is sharper as $\lambda$ goes to $0$. This peak indicates the appearance of the double descent phenomenon for small values of $\lambda$.
{\CB Therefore, the above numerical experiments show that our approach \rev{allows us to investigate} the double descent phenomenon in the context of RF regression when the data follow a mixture model, which is,  to the best of our knowledge, an aspect that has not been previously explored in the literature.}

{\CM
\subsubsection{The linear plus chaos approximation}\label{sec:linchaos}
}

The inherent non-linear randomness of the matrix $H$ induced by the activation function $h$ complicates the analysis of the spectrum of $H^\top H/n$ {\CB when the dimensions $n,p$ and $m$ tend to infinity proportionally}. {\CB For iid predictors, these complexities have already been} addressed in the literature, notably in   \cite{peche19,benigni2021eigenvalue}, which build on the {\CB fundamental} results established in \cite{NIPS2017_6857,adlam2020neural}. These studies focus on the setting where the entries of both matrices $W$ and $X_n$ are iid centered random variables with variances  $\sigma^2_w$ and  $\sigma^2_x$ respectively, effectively addressing scenarios involving iid  data. The results from \cite{benigni2021eigenvalue} reveal that the spectral distribution of $H/\sqrt{n}$ converges to a limiting distribution that coincides with the limiting spectral distribution of the {\CB matrix   $H_{lin}$, referred to as the  ``linear-plus-chaos" approximation of $H$, that is defined by:}
\begin{equation}
\frac{H_{lin}}{\sqrt{n}} = \sqrt{\theta_2(h)}\frac{W{\CB X_n^\top}}{\sqrt{np}} + \sqrt{\theta_1(h) - \theta_2(h)}\frac{Z^G}{\sqrt{n}}, \label{eq:Hlin}
\end{equation}
where the terms $\theta_1(h)$ and $\theta_2(h)$ are defined as
$$
\theta_1(h) = \mathbb{E}\left[h\left( \sigma_x \xi\right)^2\right], \quad \theta_2(h) = \mathbb{E}\left[\sigma_x h^{\prime}\left(  \sigma_x \xi\right)\right]^2,
$$
for $\xi \sim \mathcal{N}(0,1)$, and  $Z^G$ denotes an independent matrix with standard Gaussian iid  entries. The matrix $H_{lin}$ simplifies the analysis as it excludes the non-linear transformations introduced by the activation function $h$ applied to {\CB entries of the random matrix $\frac{W X^\top}{\sqrt{p}}$}. {\CB The ``linear-plus-chaos" approximation  \eqref{eq:Hlin},  also referred to in the literature}  as a Gaussian equivalence model, serves as a foundation in various studies \cite{adlam2020neural,adlam2022random,triple-descent,MeiMonta22,9931146} that analyze the training and predictive risks of RF ridge regression with predictors {\CB following the model} $x_i = \Sigma^{1/2} x_i'$. 

In the recent work \cite{DaboMale}, significant progress have been made {\CB to derive a  ``linear-plus-chaos" approximation of $H$ in the setting where} 
\begin{equation}\label{eq:WX_prime}
 W = {\CB  \Upsilon_w  \circ  W' }    \quad \text{and} \quad X_n = {\CB \Upsilon_x  \circ X_n'}  ,
\end{equation}
with matrices $\Upsilon_w$ and $\Upsilon_x$ inducing arbitrary variance profiles, and $W'$ and $X_n'$ are matrices  made of iid entries, each having  zero expectation and variance one.

Recalling that
$
H = h {\CB \left( \frac{WX_n^\top}{\sqrt{p}}   \right)},
$
the results in \cite{DaboMale}[Theorem 3.1] reveal that the matrix $H/\sqrt{n}$ can be asymptotically approximated {\CB by the following  ``linear-plus-chaos" decomposition}
\begin{equation}
\frac{H^{\lozenge}}{\sqrt{n}} = \Theta_{lin}(h) \circ \left(\frac{\mathcal W \mathcal X_n^{\top}}{\sqrt{np}}\right) + \Theta_{chaos}(h)\circ \frac{Z^G}{\sqrt{n}}. \label{eq:Hlozenge}
\end{equation}
In the above equation, $\mathcal{W}$ and $\mathcal{X}_n$ are random matrices defined as
\begin{equation}\label{eq:WX_G}
\mathcal W = {\CB \Upsilon_w \circ  W^G} \quad \text{and} \quad \mathcal X_n = {\CB  \Upsilon_x \circ X^G},
\end{equation}
where $W^G$, $X^G$, and $Z^G$ are independent matrices, each made of iid entries sampled from a standard Gaussian distribution. The decomposition \eqref{eq:Hlozenge} emphasizes the role of  the variance profiles $ \Upsilon_w^{\circ 2}$ and $\Upsilon_x^{\circ 2}$ in influencing the behavior of the matrix $H$. The matrix terms $\Theta_{lin}(h)$ and $\Theta_{chaos}(h)$ are critical to the approximation of $H$ by $H^{\lozenge}$, and they  are defined as follows
\begin{equation}
\Theta_{lin}(h) = \mathbb{E}\left( h'\left( \xi M_2 \right) \right),
\quad \mbox{and} \quad
\Theta_{chaos}(h) = \sum_{{\CB \ell }\geq 2}\frac{1}{{\CB \ell }!} \mathbb{E}\left( h^{({\CB \ell })}\left( \xi M_2 \right) \right), \label{eq:theta_matrix}
\end{equation}
where $\xi \sim \mathcal{N}(0,1)$. The quantity $M_2$ is a {\CB $m \times n$ matrix} defined as:
$$
M_2 = \sqrt[\circ]{ p^{-1}  \Upsilon_w^{\circ 2}  (\Upsilon_x^\top)^{\circ 2}},
$$
where the notation $A^{\circ 2} = A \circ A$ refers to the Hadamard product of the matrix $A$ with itself, while $\sqrt[\circ]{A} = (\sqrt{a_{ij}})$ indicates an element-wise square root of the matrix $A$. We define $\tilde{H}^{\lozenge}$ accordingly by replacing, in Equation \eqref{eq:Hlozenge}, the quantities  depending on the training data by  equivalent quantities depending on the test data and the variance profile $\tilde{\Gamma}_{n} = (\tilde{\gamma}_{ij}^2)$, that is
\begin{equation}
\frac{\tilde{H}^{\lozenge}}{\sqrt{\tilde{n}}} = \tilde{\Theta}_{lin}(h) \circ \left(\frac{\mathcal W \tilde{\mathcal X_n}^{\top}}{\sqrt{\tilde{n}p}}\right) + \tilde{\Theta}_{chaos}(h)\circ \frac{\tilde{Z}^G}{\sqrt{\tilde{n}}}. \label{eq:tilde_Hlozenge}
\end{equation}

The results from \cite{DaboMale} indicate that {\CB $\frac{H}{\sqrt{n}}$ and  $\frac{H^{\lozenge}}{\sqrt{n}}$ have asymptotically the same spectral distribution}. Furthermore, their asymptotic behavior under polynomial transformations are identical {\CB thanks to the following proposition that is a direct consequence of   \cite{DaboMale}[Theorem 3.1].} 
\begin{prop}\label{prop:H-H_lin}
Under Assumptions \ref{hyp:moments-analytical}, \ref{hyp:h-analytical}, \ref{hyp:bounded_profile}, {\CB given in Section \ref{sec:notations}}, and for any polynomial $P$ in the matrices and their transpose, the following holds true
\begin{equation}\label{thm:H-H_lin+tilde}
\lim_{\substack{n \to \infty, \; p/n \to c_p\\ m/n \to c_m }}     \frac{1}{n}\Tr \left[\mathbb{E}\left[P\left(\frac{H}{\sqrt{n}},\frac{W}{\sqrt{n}},\frac{X_n}{\sqrt{p}}\right) - P\left(\frac{H^{\lozenge}}{\sqrt{n}},\frac{\mathcal W}{\sqrt{n}},\frac{\mathcal X_n}{\sqrt{p}}\right)\right]\right] \rightarrow 0.
\end{equation}
\end{prop}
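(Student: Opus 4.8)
The plan is to obtain the statement as an essentially immediate consequence of the matrix approximation result \cite{DaboMale}[Theorem~3.1], the only genuine work being to convert the assertion ``the two models have the same limiting distribution'' into ``the difference of their normalized expected traces vanishes''. Since $A\mapsto \frac{1}{n}\Tr\,\mathbb{E}[A]$ is linear, I would first reduce to the case where $P$ is a single monomial, that is a word $w$ in the letters $H/\sqrt{n}$, $H^\top/\sqrt{n}$, $W/\sqrt{n}$, $W^\top/\sqrt{n}$, $X_n/\sqrt{p}$, $X_n^\top/\sqrt{p}$ chained so that the products are well defined and yield a square matrix, together with the corresponding word in the $\lozenge$-family (obtained by replacing $H,W,X_n$ by $H^{\lozenge},\mathcal W,\mathcal X_n$, which have the same respective shapes); the general case then follows by taking linear combinations with the coefficients of $P$, the constant term cancelling trivially.

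Next I would invoke \cite{DaboMale}[Theorem~3.1]: under Assumptions~\ref{hyp:moments-analytical}, \ref{hyp:h-analytical} and \ref{hyp:bounded_profile}, and in the regime $p/n\to c_p$, $m/n\to c_m$, the triple $(H/\sqrt{n},\,W/\sqrt{n},\,X_n/\sqrt{p})$ and the ``linear-plus-chaos'' triple $(H^{\lozenge}/\sqrt{n},\,\mathcal W/\sqrt{n},\,\mathcal X_n/\sqrt{p})$ converge to one and the same limit in the sense of traffic distributions used there. Consequently, for the word $w$ fixed above, both $\frac{1}{n}\Tr\,\mathbb{E}\big[w(H/\sqrt{n},\,W/\sqrt{n},\,X_n/\sqrt{p})\big]$ and $\frac{1}{n}\Tr\,\mathbb{E}\big[w(H^{\lozenge}/\sqrt{n},\,\mathcal W/\sqrt{n},\,\mathcal X_n/\sqrt{p})\big]$ converge, and to the same limit --- a normalized expected word-trace being one of the quantities determined by the limiting distribution --- so that their difference tends to $0$. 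Summing over the monomials of $P$ with their coefficients then gives \eqref{thm:H-H_lin+tilde}; the analogous statement for $\tilde H$ versus $\tilde H^{\lozenge}$ follows by the same argument applied to the test-data quantities.

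The main obstacle lies not in this last deduction but in pinning down exactly what \cite{DaboMale}[Theorem~3.1] provides and checking that it covers what is used here. Three points deserve attention. First, the convergence must be available \emph{jointly} for the triples, not only for the spectrum of $H$; this is natural, since $H$ (resp.\ $H^{\lozenge}$) is an entrywise analytic image of a polynomial in $W,X_n$ (resp.\ $\mathcal W,\mathcal X_n$), the activation $h$ entering only through the Hermite data $\Theta_{lin}(h)$ and $\Theta_{chaos}(h)$, so that the limiting distribution of the whole triple is determined by --- and matches across --- the two models; should the cited theorem not already be stated at that joint level, the remedy is to re-run the comparison underlying it (a Hermite expansion of $h$ followed by a Lindeberg-type interpolation between $WX_n^\top/\sqrt{p}$ and $\mathcal W\mathcal X_n^\top/\sqrt{p}$) while simply carrying along the extra letters $W,X_n$, which does not change its structure. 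Second, transposes and rectangular shapes are harmless: transposition is an admissible operation in the traffic framework, and the normalizations ($H,H^{\lozenge}$ by $\sqrt{n}$, $W,\mathcal W$ by $\sqrt{n}$, $X_n,\mathcal X_n$ by $\sqrt{p}$) keep every word-trace bounded, whether the monomial lives in $\mathbb{R}^{n\times n}$, $\mathbb{R}^{m\times m}$ or $\mathbb{R}^{p\times p}$, because $m/n\to c_m$ and $p/n\to c_p$. Third, the expectation inside the trace requires \cite{DaboMale}[Theorem~3.1] in an ``in expectation'' form, or, equivalently, almost sure convergence together with uniform integrability of word-traces, which is guaranteed by the moment control in Assumption~\ref{hyp:moments-analytical}. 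Once these are settled, the proposition is immediate.
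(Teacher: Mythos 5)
Your proposal is correct and follows essentially the same route as the paper: Proposition \ref{prop:H-H_lin} is presented there with no separate proof, being declared a direct consequence of \cite{DaboMale}[Theorem 3.1], which provides exactly the joint (in expectation) trace convergence of the triple $(H/\sqrt{n},W/\sqrt{n},X_n/\sqrt{p})$ and its ``linear-plus-chaos'' counterpart that you invoke after the trivial reduction to monomials by linearity. Your additional caveats (joint convergence, transposes and rectangular words, expectation form) are precisely the features of the cited theorem that the paper implicitly relies on, so nothing further is needed.
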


Proposition \ref{prop:H-H_lin} means that, as  {\CB the dimensions $n,p$ and $m$ tend to infinity proportionally}, the difference in the expected values of a matrix polynomial evaluated either at $\frac{H}{\sqrt{n}}$ or $\frac{H^{\lozenge}}{\sqrt{n}}$ (along with the other random matrices associated to their expressions)  converges to zero in the trace sense. This result remains valid if $H$, $H^{\lozenge}$, and $X_n$ are replaced by quantities associated to the test data, that is by $\tilde{H}$, $\tilde{H}^{\lozenge}$, and $\tilde{\mathcal{X}_n}$.

\subsubsection{Lozenge equivalents}\label{sec:lozenge}

{\CM To derive the asymptotic equivalents of the training and predictive risk, we restrict the analysis, in the rest of the paper,} to the case where the matrix $W$ has a constant variance profile that is
\begin{eqnarray}\label{eq:profil_w}
\Upsilon_w = \sigma_w \begin{pmatrix} 1 & \cdots & 1\\ \vdots & & \vdots \\ 1 & \cdots & 1\end{pmatrix}, \quad \text{with} \quad \sigma_w > 0.
\end{eqnarray}
\rev{Without loss of generality,} we shall assume that $\sigma_w=1$ as a constant variance of the entries of $W$ only represents a scaling effect on the variance profile of $X_n$ when considering the product $W X_n^\top$. 
In this setting, the matrices $\Theta_{lin}(h)$ and $\Theta_{chaos}(h)$ {\CB defined by \eqref{eq:theta_matrix}} reduce to {\CB rank one} matrices, {\CB and} this leads to simpler expressions for $H^{\lozenge}$ and $\tilde{H}^{\lozenge}$. Indeed, {\CM in this case and as discussed in  \cite{DaboMale}[Example 3.2]},   it follows that the ``linear-plus-chaos" approximation becomes:
\begin{eqnarray}\label{eq:H-H_lin-diag}
H^{\lozenge} = \frac{\mathcal W \mathcal X_n^\top D_{lin}(h)}{\sqrt{p}} + Z^G D_{chaos}(h) \mbox{, }
\tilde{H}^{\lozenge} = \frac{\mathcal W \tilde{\mathcal X}_{\tilde{n}}^\top \tilde{D}_{lin}(h)}{\sqrt{p}} + \tilde{Z}^G \tilde{D}_{chaos}(h), \label{eq:Htilde-Htilde_lin-diag}
\end{eqnarray}
where $D_{lin}(h)$ and $D_{chaos}(h)$ are defined as follows (and similarly for $\tilde{D}_{lin}(h)$ and $\tilde{D}_{chaos}(h)$):
$$
D_{lin}(h) = \mathbb{E}\left[h'\left(\xi D_{2}\right)\right], \quad D_{chaos}(h) = \sum_{{\CB \ell}\geq 2}\frac{1}{{\CB \ell}!} \mathbb{E}\left[h^{({\CB \ell})}\left(\xi D_{2}\right)\right],
$$
with $\xi \sim \mathcal{N}(0,1)$ and $D_{2} =  \mathrm{deg}(p^{-1}  \Upsilon_x^{\circ 2})^{1/2}$ 
, where, for a matrix $A$, the notation $\mathrm{deg}(A)$  denotes the diagonal matrix whose $k$-diagonal element is the sum of the entries of the $k$-row of $A$.

 Then, as shown in  Section \ref{sec:fst_equiv}, the training error $E_{train}$ and the predictive error $E_{test}$ can be expressed as expectations involving traces of matrix polynomials depending on $H, W, X_n, \tilde{H}, \tilde{X}$. {\CB Therefore, using Proposition \ref{prop:H-H_lin} {\CM and under the assumption \eqref{eq:profil_w} that $W$ has a constant variance profile, we  prove that}  the asymptotic equivalents $E^\lozenge_{train}$ and $E^\lozenge_{test}$  of $E_{train}$ and $E_{test}$  are obtained} by substituting the quantities $H, W, X_n, \tilde{H}, \tilde{X}_n$ with $H^{\lozenge}, \mathcal{W}, \mathcal{X}_n, \tilde{H}^{\lozenge}, \tilde{\mathcal{X}}_{\tilde{n}}$ in the  expressions for $E_{train}$ and $E_{test}$ given in Section \ref{sec:fst_equiv}. 

\subsubsection{Square equivalents}\label{sec:square}
{\CM When $W$ has a constant variance profile, it can be seen from Equation \eqref{eq:H-H_lin-diag} that} $H^\lozenge$ {\CB becomes a non-commutative matrix polynomial in the variables $ \mathcal X_n, \mathcal W, D_{lin}(h), D_{chaos}(h)$ and $Z^G$}. Therefore, using the so-called linearization trick \cite{mingo2017free}[Chapter 10] in free probability, we are able to obtain a deterministic equivalent of the spectral distribution of the random matrix  $H^\lozenge$, which leads to the construction of the second set of asymptotic equivalents of the training and predictive risk in RF regression. This  linearization trick is detailed in Section \ref{sec:proof_traffic}.

{\CB For specific classes of variance profile and choices of activation function $h$,  these asymptotic equivalents have explicit expressions that are related to known results for RF regression in the case of iid predictors.  For example, let us consider the following assumption on the variance profile matrices $\Gamma_{n}$ and $\tilde{\Gamma}_{n}$.
\begin{hyp}\label{hyp:rowsto}
There exists a  constant $s > 0$ such that the rows of the variance profile of $X_n$ and $\tilde{X}_{\tilde{n}}$   satisfy
$$
\frac{1}{p} \sum_{j=1}^{p}  \gamma_{ij}^2 = s^2 \; \mbox{ and } \; \frac{1}{p} \sum_{j=1}^{p}  \tilde{\gamma}_{ij}^2 = s^2 \; \mbox{ for all } 1 \leq i \leq n.
$$
\end{hyp}
In other words, Assumption \ref{hyp:rowsto} means that the matrices $s^{-2}\Gamma_{n}$ and $s^{-2} \tilde{\Gamma}_{n}$ are row stochastic.} Under Assumption \ref{hyp:rowsto}, 
 the expression of {\CB the ``linear-plus-chaos" approximations} $H^{\lozenge}$ and $\tilde{H}^{\lozenge}$ becomes even simpler, as Equations \eqref{eq:H-H_lin-diag} and \eqref{eq:Htilde-Htilde_lin-diag} simplify to: 
 \begin{eqnarray}\label{eq:H-H_lin-constant}
H^{\lozenge} &=& \theta_{lin}(h)\frac{\mathcal W\mathcal X_n^\top}{\sqrt{p}} + \theta_{chaos}(h) Z^G \\
\tilde{H}^{\lozenge} &=& \theta_{lin}(h)\frac{\mathcal W\tilde{\mathcal X}_{\tilde{n}}^\top}{\sqrt{p}} + \theta_{chaos}(h) \tilde{Z}^G,
\end{eqnarray}
with $\theta_{lin}(h) = \bigg( \mathbb{E}[h'( \xi s ) ] \bigg)$ and $\theta_{chaos}(h) = \sum_{{\CB \ell}\geq 2}\frac 1 {{\CB \ell}!} \bigg( \mathbb{E}[   h^{({\CB \ell})}( \xi s ) ]  \bigg)$, for $\xi \sim \mathcal{N}(0,1)$.

{\CB Then, under the supplementary assumption that  $\theta_{lin}(h) = 0$ (and still those of Proposition \ref{prop:H-H_lin}), we are able to provide explicit expressions for $E_{train}^\square$ and $E_{test}^\square$ as follows}, 
 \begin{eqnarray}
E^\square_{train}(\lambda) &=& \lambda^2\alpha^2s^2m_n'(-\lambda)  + \lambda^2\sigma^2 m_n'(-\lambda), \label{eq:Etrain_explicit} \\
E^\square_{test}(\lambda) &=& \sigma^2 + \alpha^2s^2 + \theta_{chaos}^2(h)(\alpha^2s^2+\sigma^2)(m_n(-\lambda) - \lambda m_n'(-\lambda))  \label{eq:Etest_explicit},
\end{eqnarray}
where
\begin{equation} \label{eq:m_n}
\textcolor{black}{m_n(-\lambda) = \frac{\theta_{chaos}^2(h)(\varphi_m-1)+\lambda-\sqrt{\left(\lambda+\theta_{chaos}^2(\varphi_m+1)\right)^2-4 \varphi_m \theta_{chaos}^4}}{2 \varphi_m \lambda \theta_{chaos}^2(h)}}
\end{equation}
denotes  the Stieltjes transform  of the Marchenko-Pastur distribution with parameter $\varphi_m = \lim_{n,m \rightarrow +\infty} \frac{n}{m}$ \textcolor{black}{\cite{bai2010spectral}(Lemma 3.11)},   and $m_n'$ denotes its derivative. 
{\CB The expression \eqref{eq:Etrain_explicit} of $E_{train}^\square$ matches with the expression of the analogous asymptotic equivalent in \cite{adlam2020neural} in the case where $X_n$ has a constant variance profile. The expression  \eqref{eq:Etest_explicit}  of $E_{test}^\square$ being more complicated, it is  difficult to compare it with the results of \cite{adlam2020neural} that are focused on a slightly different notion of prediction error based on generalized cross-validation.

When  $\theta_{lin}(h) \neq 0$, we anticipate that the expressions of $E_{train}^\square$ and $E_{test}^\square$ are not explicit. However, in this case and  under Assumption \ref{hyp:rowsto},  we are able to provide accurate numerical approximation of these second set of asymptotic equivalents through the resolution of a fixed-point equation that is stated in Section \ref{sec:snd_equiv}, see Theorem \ref{thm:adaptation} and Equation \eqref{eq:fixedpoint}. }

\subsection{Organization of the paper}
The notation needed to derive the results of this paper are presented in Section \ref{sec:notations}, along with the  assumptions  required for the developments of our contributions. The main results are presented and discussed in Section \ref{sec:results}. {\CB We first describe the  asymptotic equivalents $E_{train}^\lozenge$ and $E_{test}^\lozenge$. Then, we introduce free probability results that enable us to provide the second set of asymptotic equivalents,  namely $E_{train}^\square$ and $E_{test}^\square$.  Section \ref{sec:proof_main} is dedicated to {\CB the proof of} these main results. The proof of the derivation of the expressions of $E_{train}^\lozenge$ and $E_{test}^\lozenge$ based on the results of \cite{DaboMale}  is given in Section \ref{sec:proof_traffic}.  Then, we prove in Section \ref{sec:proof_free} and Section \ref{sec:proof_square} that $E_{train}^\square$ and $E_{test}^\square$ are relevant asymptotic equivalents of the  training and predictive risks  using the linearization trick and an adaptation of a free probability result on deformed random matrices with a variance profile from \cite{bigotmale}}. We conduct numerical experiments in Section \ref{sec:num}  that witness the sharpness of $E_{train}^\square$ and $E_{test}^\square$ and the appearance of the double descent phenomenon {\CB in various setting of RF regression from non-identically distributed data with a variance profile}.  Finally, auxiliary results and proofs needed in Section \ref{sec:proof_main}  are gathered in a technical Appendix.

\section{{\CB Notation and preliminaries}}\label{sec:notations}

We let $n,m,p$ be positive integers. We denote $N = n+m+2p$, $c_n = \sqrt{\frac{N}{n}}$, $c_p = \sqrt{\frac{N}{p}}$ and $c_m = \sqrt{\frac{N}{m}}$,  and we assume that there exist constants $\varphi_m,\varphi_p >0$ for which 
$$
 \lim_{n,m,p\rightarrow +\infty} \frac{n}{p} = \varphi_p \quad \and \quad \lim_{n,m,p\rightarrow +\infty} \frac{n}{m} = \varphi_m.
$$
{\CM In the paper, whenever it is stated that $n$ tends to infinity or that it is sufficiently large, it is also understood that $m$ and $p$ grow according to the proportional asymptotic regime described above.}

Let $A = (A_{i,j})$ be a matrix in $\mathbb{C}^{n_1\times n_2}$. We define its operator (or spectral) norm $\| A \|$, its Frobenius norm $\|A\|_F$, its normalized Frobenius norm $\|A\|_{F,n_1}$, {\CM and its max-norm $\|A\|_{\infty}$ as follows}
$$
\| A \|^2 = \sup_{\substack{x\in\mathbbm{C}^{n_2},\\ \|x\|_2 = 1}}\|Ax\|_2^2, \quad \| A \|_F^2 = \Tr[AA^\ast], \quad   \| A \|_{F,n_1}^2 = \frac{1}{n_1}\Tr[AA^\ast],  \quad \mbox{and } {\CM \| A \|_{\infty} = \max_{1 \leq i \leq n_1 ,1 \leq j \leq n_2} |A_{i,j}|}
$$
where $ \| (x_1,...,x_{n_2})\|_2^2 = \sum_{k=1}^{n_2}|x_k|^2$, and $\Tr$ denotes the trace. These norms are equivalents, namely for any matrix $A$, one has that  
$
\| A \|_{F,n_1}^2 \leq \| A \|^2 \leq \| A \|_F^2.
$
We denote by $\Re e(A)$ and $\Im m (A)$, the real and imaginary parts of $A$, defined as 
$
\Re e(A) = \frac{1}{2}(A+A^\ast),  \Im m (A) = \frac{1}{2\mathbf{i}}(A-A^\ast).
$
For any Hermitian matrix $A$, we write $A >0$ (respectively $A \geq 0$) whenever is positive definite (respectively positive semi-definite). We denote by $D_N^+(\mathbb{C})$ the set of diagonal matrices $\Lambda \in \mathbb{C}^{N\times N}$, such that $\Im m (\Lambda) > 0$.
\begin{table}[ht]
\centering
\caption{\rev{Main notation used throughout the paper.}}
\begin{tabular}{p{0.30\textwidth}p{0.60\textwidth}}
\hline
{\color{black}
$\Upsilon_x=(\gamma_{ij}^x)$} & {\color{black}Variance profile of the entries of the data matrix $X$.} \\
{\color{black}
$\Upsilon_w=(\gamma_{ij}^w)$} & {\color{black}Variance profile of the entries of the weight matrix $W$.} \\
{\color{black}
$\gamma_{ij}^x$} & {\color{black}Variance parameter associated with the entry $(i,j)$ of $X$.} \\
{\color{black}
$\gamma_{ij}^w$} & {\color{black}Variance parameter associated with the entry $(i,j)$ of $W$.} \\
\rev{$X_n=\Upsilon_x\circ X_n'$} & \rev{training design matrix with row-wise diagonal variance profile}\\
\rev{$\tilde X_{\tilde n}=\tilde\Upsilon_x\circ \tilde X'_{\tilde n}$} & \rev{test design matrix}\\
\rev{$H=h(WX_n^\top/\sqrt{p})$} & \rev{random feature matrix}\\
\rev{$H^{\lozenge}$} & \rev{linear-plus-chaos / Gaussian surrogate of $H$}\\
\rev{$Q(-\lambda)$} & \rev{resolvent $(H^\top H/n+\lambda I_n)^{-1}$}\\
\rev{$Q^\lozenge(-\lambda)$} & \rev{resolvent $(H^{\lozenge\top} H^\lozenge/n+\lambda I_n)^{-1}$}\\
\rev{$\mathfrak{Q}(\Lambda)$} & \rev{operator-valued resolvent of the linearization matrix $L$}\\
\rev{$E^{\lozenge}_{train},E^{\lozenge}_{test}$} & \rev{random-surrogate asymptotic equivalents obtained from $H^{\lozenge}$}\\
\rev{$E^{\square}_{train},E^{\square}_{test}$} & \rev{deterministic asymptotic equivalents obtained from the fixed-point equation}\\
{\color{black}
$\Theta_{lin}(h), \Theta_{chaos}(h)$} & {\color{black}Matrix coefficient defined in \eqref{eq:theta_matrix}, associated with the linear part of $H^{\lozenge}$.} \\
{\color{black}
$D_{lin}(h),D_{chaos}(h)$} & {\color{black}Diagonal matrix coefficient appearing in \eqref{eq:H-H_lin-diag}, associated with the linear part of $H^{\lozenge}$.} \\
{\color{black}
$\theta_{lin}(h),\theta_{chaos}(h)$} & {\color{black}Scalar coefficient appearing in \eqref{eq:H-H_lin-constant}, associated with the linear part of $H^{\lozenge}$.} \\
\hline
\end{tabular}
\end{table}

\begin{lem}\label{lem:Im_pos}[\cite{haagerup2005new}[Lemma 3.1], \cite{bigotmale}[Lemma 5.1]]
Let $A$ in $M_N(\mathbb{C})$ such that $\Im m(A) > 0$. Then $A$ is invertible and
$
\|A^{-1}\| \leq \|(\Im m(A))^{-1}\|.
$
\end{lem}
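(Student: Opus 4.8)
The plan is to exploit the decomposition $A = \Re e(A) + \mathbf{i}\,\Im m(A)$ together with the strict positivity of the imaginary part. Write $X = \Re e(A)$ and $Y = \Im m(A)$, both Hermitian, with $Y > 0$. The cleanest route is a direct estimate via Cauchy--Schwarz: for any unit vector $x \in \mathbb{C}^N$ one has $\|Ax\|_2 \geq |\langle Ax, x\rangle|$, and taking imaginary parts gives $|\langle Ax, x\rangle| \geq |\Im m \langle Ax, x\rangle| = \langle Yx, x\rangle \geq \lambda_{\min}(Y)$, where $\lambda_{\min}(Y)$ denotes the smallest eigenvalue of $Y$. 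Since $Y$ is Hermitian positive definite, $\lambda_{\min}(Y) = \|Y^{-1}\|^{-1} > 0$, so $\|Ax\|_2 \geq \|Y^{-1}\|^{-1}$ for every unit vector $x$.

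First I would conclude invertibility: the bound just obtained shows $A$ has trivial kernel, and since $A \in M_N(\mathbb{C})$ acts on a finite-dimensional space, $A$ is invertible; no separate surjectivity argument is needed. Then, for the norm bound, I would write an arbitrary vector as $y = Ax$ and combine $\|y\|_2 = \|Ax\|_2 \geq \|Y^{-1}\|^{-1}\|x\|_2$ with $x = A^{-1}y$ to obtain $\|A^{-1}y\|_2 \leq \|Y^{-1}\|\,\|y\|_2$, i.e.\ $\|A^{-1}\| \leq \|(\Im m(A))^{-1}\|$, which is the claim.

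Alternatively, I would give a more structural argument by factoring $A = Y^{1/2}(T + \mathbf{i} I_N) Y^{1/2}$ with $T = Y^{-1/2} X Y^{-1/2}$ Hermitian; then $T + \mathbf{i} I_N$ is normal with spectrum contained in $\{t + \mathbf{i} : t \in \mathbb{R}\}$, hence invertible with $\|(T+\mathbf{i}I_N)^{-1}\| \leq 1$, and one reads off $A^{-1} = Y^{-1/2}(T+\mathbf{i}I_N)^{-1}Y^{-1/2}$, so that $\|A^{-1}\| \leq \|Y^{-1/2}\|^2 = \|Y^{-1}\|$.

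There is no genuine obstacle here; this is a standard resolvent-type bound (attributed to Haagerup--Thorbj\o rnsen). The only points requiring a little care are the identification $\lambda_{\min}(Y) = \|Y^{-1}\|^{-1}$, which is valid precisely because $Y$ is Hermitian and strictly positive definite, and the use of finite-dimensionality to pass from injectivity to invertibility.
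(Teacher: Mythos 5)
Your proof is correct; the paper itself gives no proof of this lemma, citing it from \cite{haagerup2005new} and \cite{bigotmale}, and your Cauchy--Schwarz argument ($\|Ax\|_2 \geq \Im m \langle Ax,x\rangle = \langle \Im m(A)x,x\rangle \geq \|(\Im m(A))^{-1}\|^{-1}$ for unit $x$, plus finite-dimensional injectivity) is exactly the standard proof found in those references. The alternative factorization $A = Y^{1/2}(T+\mathbf{i}I_N)Y^{1/2}$ is also valid and adds nothing problematic.
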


\begin{lem}\label{lem:fro_sub}
Let $n_1,n_2$ and $n_3$ be positive integers and $A\in \mathbbm{C}^{n_1,n_2}$ $B\in\mathbbm{C}^{n_2,n_3}$ be complex random matrices, then one has that
$
\|AB\|_{F,n_1} \leq \|A\|_{F,n_1}\|B\|.
$
\end{lem}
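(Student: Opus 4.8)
For $A \in \mathbb{C}^{n_1 \times n_2}$ and $B \in \mathbb{C}^{n_2 \times n_3}$, $\|AB\|_{F,n_1} \le \|A\|_{F,n_1}\|B\|$.

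The plan is to reduce the claim to the standard submultiplicativity of the Frobenius norm under multiplication by a matrix of bounded operator norm, and then normalize. First I would recall the definition $\|AB\|_{F,n_1}^2 = \frac{1}{n_1}\Tr[(AB)(AB)^\ast] = \frac{1}{n_1}\Tr[ABB^\ast A^\ast]$, and expand this as a sum over the rows of $A$: writing $a_i^\ast$ for the $i$-th row of $A$ (so $a_i \in \mathbb{C}^{n_2}$), one has $\|AB\|_{F,n_1}^2 = \frac{1}{n_1}\sum_{i=1}^{n_1} \|a_i^\ast B\|_2^2$. The key step is then the pointwise bound $\|a_i^\ast B\|_2^2 = \|B^\ast a_i\|_2^2 \le \|B^\ast\|^2 \|a_i\|_2^2 = \|B\|^2 \|a_i\|_2^2$, which is just the defining property of the operator norm together with $\|B^\ast\| = \|B\|$. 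Summing over $i$ gives $\|AB\|_{F,n_1}^2 \le \|B\|^2 \cdot \frac{1}{n_1}\sum_{i=1}^{n_1}\|a_i\|_2^2 = \|B\|^2 \|A\|_{F,n_1}^2$, and taking square roots yields the claim.

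Alternatively, and perhaps more cleanly, I would invoke the cyclicity of the trace and positivity: since $B^\ast B \le \|B\|^2 I_{n_2}$ as Hermitian matrices and $A^\ast A \ge 0$, one has $\Tr[A^\ast A \, B B^\ast] = \Tr[B^\ast A^\ast A B] = \Tr[A^\ast A (BB^\ast)]$; using $BB^\ast \le \|B\|^2 I_{n_1}$ — wait, the cleaner route is $\Tr[ABB^\ast A^\ast] = \Tr[(A^\ast A)(B B^\ast)] \le \|BB^\ast\| \, \Tr[A^\ast A] = \|B\|^2 \Tr[AA^\ast]$, where the inequality uses that $\Tr[MN] \le \|N\| \Tr[M]$ whenever $M = A^\ast A \ge 0$ and $N = BB^\ast \ge 0$ (both Hermitian positive semidefinite). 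Dividing by $n_1$ and taking square roots finishes the proof.

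There is essentially no obstacle here: the only point requiring a moment's care is the choice of which factor absorbs the operator norm (it must be the one whose operator norm appears on the right-hand side, i.e.\ $B$) and keeping track that the normalization factor $1/n_1$ is governed by the number of rows of $A$, which is consistent on both sides since $AB$ also has $n_1$ rows. I would present the row-decomposition argument as the main proof since it is the most transparent, mentioning that it is an immediate consequence of $\|Mv\|_2 \le \|M\|\|v\|_2$ applied row by row.
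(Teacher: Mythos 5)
Your proof is correct; the paper states this lemma without proof, treating it as a standard fact, and your row-decomposition argument (equivalently, the trace inequality $\Tr[(A^\ast A)(BB^\ast)] \leq \|BB^\ast\|\Tr[A^\ast A]$ for positive semidefinite factors) is exactly the standard argument being implicitly invoked. The momentary false start about $BB^\ast \leq \|B\|^2 I_{n_1}$ is harmless since you immediately correct the dimensions and the final chain of inequalities is sound.
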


{\CM
Finally, we have the following lemma that follows from \cite{Horn_Johnson_1991}[Theorem 5.6.2].

\begin{lem} \label{lem:bound Hadamard}
Let $A \in \mathbbm{C}^{n_1,n_2}$  that admits the factorization $A = U^\ast V$ with $U \in \mathbbm{C}^{r,n_1}$ and $V \in \mathbbm{C}^{r,n_2}$ where $U$ and $V$ are matrices such that the Euclidean norm of their columns is bounded by a constant $c_{0} > 0$. Then, $
\|A \circ B \| \leq c_{0}^2 \| B \|
$ for any $B \in \mathbbm{C}^{n_1,n_2}$.
\end{lem}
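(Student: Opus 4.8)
\emph{Proof proposal.} The plan is to bound $\|A\circ B\|$ by testing the Hadamard product against arbitrary pairs of vectors and using the factorization $A=U^\ast V$ to split the resulting sesquilinear form over the $r$ rows of $U$ and $V$. Write $u_1,\dots,u_{n_1}\in\mathbb{C}^{r}$ for the columns of $U$ and $v_1,\dots,v_{n_2}\in\mathbb{C}^{r}$ for the columns of $V$, so that $A_{ij}=u_i^\ast v_j$ and, by hypothesis, $\|u_i\|_2\le c_0$ and $\|v_j\|_2\le c_0$ for all $i,j$. Fix $x\in\mathbb{C}^{n_2}$ and $y\in\mathbb{C}^{n_1}$. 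Expanding $u_i^\ast v_j=\sum_{k=1}^{r}\overline{(u_i)_k}(v_j)_k$ and interchanging the order of summation, I would write
\[
y^\ast(A\circ B)x=\sum_{i=1}^{n_1}\sum_{j=1}^{n_2}\overline{y_i}\,(u_i^\ast v_j)\,B_{ij}\,x_j=\sum_{k=1}^{r}(a^{(k)})^\ast B\,b^{(k)},
\]
where $a^{(k)}\in\mathbb{C}^{n_1}$ and $b^{(k)}\in\mathbb{C}^{n_2}$ are defined by $(a^{(k)})_i=y_i\,(u_i)_k$ and $(b^{(k)})_j=x_j\,(v_j)_k$.

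Next I would apply the Cauchy--Schwarz inequality twice. First, for each fixed $k$, $\bigl|(a^{(k)})^\ast B\,b^{(k)}\bigr|\le\|B\|\,\|a^{(k)}\|_2\,\|b^{(k)}\|_2$; then, summing over $k$,
\[
\sum_{k=1}^{r}\|a^{(k)}\|_2\,\|b^{(k)}\|_2\le\Bigl(\sum_{k=1}^{r}\|a^{(k)}\|_2^2\Bigr)^{1/2}\Bigl(\sum_{k=1}^{r}\|b^{(k)}\|_2^2\Bigr)^{1/2}.
\]
The two remaining sums are controlled by the column-norm hypothesis: interchanging summation again,
\[
\sum_{k=1}^{r}\|a^{(k)}\|_2^2=\sum_{i=1}^{n_1}|y_i|^2\sum_{k=1}^{r}|(u_i)_k|^2=\sum_{i=1}^{n_1}|y_i|^2\,\|u_i\|_2^2\le c_0^2\,\|y\|_2^2,
\]
and likewise $\sum_{k=1}^{r}\|b^{(k)}\|_2^2\le c_0^2\,\|x\|_2^2$. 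Combining these displays yields $\bigl|y^\ast(A\circ B)x\bigr|\le c_0^2\,\|B\|\,\|y\|_2\,\|x\|_2$, and taking the supremum over unit vectors $x$ and $y$ gives $\|A\circ B\|\le c_0^2\,\|B\|$.

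There is no substantial obstacle here: this is the standard argument behind the Schur/Hadamard product bound recorded as \cite{Horn_Johnson_1991}[Theorem 5.6.2]. The only points requiring a little care are the bookkeeping of complex conjugates when expanding $u_i^\ast v_j$, and arranging the two uses of Cauchy--Schwarz so that the \emph{column} norms $\|u_i\|_2,\|v_j\|_2$ (and hence the uniform bound $c_0$) appear in the final estimate, rather than a cruder quantity such as $\|U\|\,\|V\|$.
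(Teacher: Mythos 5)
Your proof is correct. Note that the paper itself gives no argument for this lemma: it simply states that it follows from \cite{Horn_Johnson_1991}[Theorem 5.6.2], the classical bound on the spectral norm of a Hadamard (Schur) product in terms of a factorization $A=U^\ast V$ with column-norm control. What you have written is exactly the standard proof of that cited theorem: expanding $A_{ij}=u_i^\ast v_j$, splitting the sesquilinear form $y^\ast(A\circ B)x$ into the $r$ contributions $(a^{(k)})^\ast B\,b^{(k)}$, and applying Cauchy--Schwarz twice so that only the column norms $\|u_i\|_2,\|v_j\|_2\le c_0$ (and not $\|U\|\,\|V\|$) enter the bound. The bookkeeping of conjugates and the identities $\sum_k\|a^{(k)}\|_2^2=\sum_i|y_i|^2\|u_i\|_2^2$ are handled correctly, so your self-contained argument faithfully reproduces the result the paper imports by citation; it buys independence from the reference at no extra cost, while the paper's route is just shorter.
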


}

{\CB Throughout the paper, we shall make the following assumptions}.
\begin{hyp}\label{hyp:moments-analytical}
There exist constants $\vartheta_x>0$, $\vartheta_w>0$ and $\chi>1$ such that for all $i,j$
$$
\mathbb{P} \big( |X_n'(i,j)| \geq t  \big) \leq e^{-\vartheta_x t^\chi} \quad \mathrm{and} \quad \mathbb{P} \big( |{\CB W'}(i,j)| \geq t  \big) \leq e^{-\vartheta_w t^\chi}.
$$
\end{hyp}

\begin{hyp}\label{hyp:h-analytical}
 The activation function $h:\mathbb R \to \mathbb R$ is an odd polynomial.
\end{hyp}
\rev{In this paper, we restrict the theoretical analysis to odd polynomial activation functions. This assumption is mainly technical: it is the regime covered by the Gaussian-equivalence theorem of \cite{DaboMale}, and the corresponding Hermite expansion is finite, which yields an explicit linear-plus-chaos approximation. The odd-polynomial assumption should therefore be viewed as inherited from the currently available approximation theorem rather than as intrinsic to the phenomenon under study. In the i.i.d.\ setting, namely when $\Gamma_x$ and $\Gamma_w$ are constant variance profiles ($\gamma_{ij}^x=\sigma_x$ and $\gamma_{ij}^w=\sigma_w$), \cite{benigni2021eigenvalue} established Gaussian equivalence for polynomial activations and then extended the result to smooth functions by a density argument. This suggests that the polynomial restriction may be relaxed beyond the present framework. We are currently investigating such an extension to odd smooth activations in the general variance-profile setting, and our numerical experiments support its plausibility. Indeed, the experiments reported in Section~\ref{sec:num} show that the deterministic equivalents remain accurate for several non-polynomial activations, including $\tanh$, ReLU, and $|x|$, which suggests robustness beyond the scope of the current proof strategy.}

{\CM
\begin{hyp} \label{hyp:bounded_profile}
There exists a constant $\gamma > 0$   such that, for all $n,\tilde{n},m,p\geq 1$, the variance profiles of $X_n$ and $\tilde{X}_{\tilde{n}}$ admit the factorizations $\Upsilon_x = U_x^\ast V_x$ and $\tilde{\Upsilon}_x = \tilde{U}_x^\ast \tilde{V}_x$ where the columns of the matrices $U_x \in \RR^{r \times n}$,  $V_x \in \RR^{r \times p}$, $\tilde{U}_x \in \RR^{\tilde{r} \times \tilde{n}}$,  $\tilde{V}_x \in \RR^{\tilde{r} \times p}$ have their Euclidean norm that is bounded by $\sqrt{\gamma}$.
\end{hyp}
Note that Assumption \ref{hyp:bounded_profile} implies, by Cauchy-Schwarz inequality, that the entries of the variance profiles of $X_n$ and $\tilde{X}_{\tilde{n}}$ are bounded by $\gamma > 0$, that is
$
\| \Upsilon_x\|_{\infty} \leq \gamma
$
and
$ \|  \tilde{\Upsilon}_x\|_{\infty}   \leq \gamma.
$
}

{\CM
 \begin{hyp}\label{hyp:h-bounded}
 There exists a constant $C> 0$,   not depending on $n, \tilde{n},p$ and $m$, such that, with probability one,
  \begin{equation} \label{eq:id-bounded}
 \EE\left[\left\|   \frac{WX_n^\top}{\sqrt{n} \sqrt{p}}   \right\|^k\right] \leq C^k,  \; \EE\left[\left\|    \frac{W\tilde{X}_{\tilde{n}}^\top}{\sqrt{\tilde{n}}\sqrt{p}}  \right\|^k\right] \leq C^k,
\end{equation}
and
 \begin{equation}  \label{eq:h-bounded}
\EE\left[\left\|  \frac{1}{\sqrt{n}}h\Big(\Big\{ \frac{WX_n^\top}{\sqrt{p}} \Big\} \Big) \right\|^k\right] \leq C^k,  \; \EE\left[\left\|  \frac{1}{\sqrt{\tilde{n}}}h\Big(\Big\{ \frac{W\tilde{X}_{\tilde{n}}^\top}{\sqrt{p}} \Big\} \Big) \right\|^k\right] \leq C^k,
\end{equation}
 for all sufficiently large values of $n,\tilde{n},p$ and $m$.
 \end{hyp}
 
 \begin{rem} \label{rem:condsimple}
 We  may give simple conditions ensuring that Inequality \eqref{eq:id-bounded} in Assumption \ref{hyp:h-bounded}  holds  as follows.  Using Lemma \ref{lem:bound Hadamard}, Assumption \ref{hyp:bounded_profile} and the facts that $X_n =   \Upsilon_x  \circ X_n' $ and  $\tilde{X}_{\tilde{n}} =   \tilde{\Upsilon}_x  \circ \tilde{X}'_{\tilde{n}}$, we have that 
$$
\quad \left\| \frac{ X_n }{\sqrt{n}}  \right\|   \leq \gamma \left\| \frac{ X_n' }{\sqrt{n}} \right\| \quad \mbox{ and } \quad \left\| \frac{ \tilde{X}_{\tilde{n}} }{\sqrt{\tilde{n}}}  \right\|   \leq \gamma \left\| \frac{ \tilde{X}'_{\tilde n} }{\sqrt{\tilde{n}}} \right\|.
$$
Hence, using the above upper bounds, we finally obtain that
$$
\left\|   \frac{WX_n^\top}{\sqrt{n} \sqrt{p}}   \right\| \leq \gamma \left\|   \frac{W}{\sqrt{p}}   \right\|  \left\| \frac{ X'_n }{\sqrt{n}} \right\| \quad \mbox{ and } \quad \left\|   \frac{W\tilde{X}_{\tilde{n}}^\top}{\sqrt{n} \sqrt{p}}   \right\| \leq \gamma \left\|   \frac{W}{\sqrt{p}}   \right\|  \left\| \frac{ \tilde{X}'_{\tilde n} }{\sqrt{\tilde{n}}} \right\| 
$$

Consequently, provided the distributions of the entries of $W$, $X'_n$ and $\tilde{X}'$ have a finite fourth-moment, Inequality \eqref{eq:id-bounded} in Assumption \ref{hyp:h-bounded} is  satisfied since, by \cite{BS98}[Theorem 1.1] and \cite{vershynin2020high}[Theorem 4.4.3], it is known that,   $ \EE\mathbb[\|\frac{W}{\sqrt{p}}\|] $ and $ \EE\mathbb[\|\frac{ X'_n }{\sqrt{n}}\|]$ (resp.\ $ \EE\mathbb[\|\frac{ \tilde{X}' }{\sqrt{\tilde{n}}}\|]$) are bounded. 
\end{rem}
 }
 
 {\CB In the rest of this section, we introduce notation and tools to apply the linearization trick \cite{mingo2017free}[Chapter 10] in free probability to obtain a deterministic equivalent of the spectral distribution of the random matrix  $H^\lozenge$.}
Let $M\in \mathbbm{C}^{N\times N}$ be a symmetric matrix assumed to be decomposed into $4\times 4$ blocks as follows:
$$
M = \begin{pmatrix}
M_{11} & M_{12} & M_{13} & M_{14}\\
M_{21} & M_{22} & M_{23} & M_{24}\\
M_{31} & M_{32} & M_{33} & M_{34}\\
M_{41} & M_{42} & M_{43} & M_{44}
\end{pmatrix}.
$$
 For $1\leq I,J \leq 4$ we denote by   $M_{IJ}$ the $IJ$ sub-block of $M$. Recalling that $N = n+m+2p$, we assume  that
\begin{itemize}
    \item[-] for $1\leq K \leq 4$, the sub-block $M_{1K}$ has $n$ columns,
    \item[-] for $1\leq K \leq 4$, the sub-block $M_{2K}$ has $m$ rows,
    \item[-] for $1\leq K \leq 4$, the sub-block $M_{3K}$ and $M_{4K}$ have $p$ rows.
\end{itemize}

We define the map $id\otimes \Delta : M \mapsto id\otimes \Delta [M] = (\Delta[M_{IJ}])_{1\leq I,J \leq 4}$, where $\Delta[M_{IJ}] = \diag\limits_{k}(M_{IJ}(k,k))$ if $M_{IJ}$ is a square matrix and $\Delta[M_{IJ}] = 0$  if $M_{IJ}$ is not square. Under the assumptions above, one has that
$$
id\otimes \Delta [M] = \begin{pmatrix}
\Delta[M_{11}] & 0 & 0 & 0\\
0 & \Delta[M_{22}] & 0 & 0\\
0 & 0 & \Delta[M_{33}] & \Delta[M_{34}]\\
0 & 0 & \Delta[M_{43}] & \Delta[M_{44}]
\end{pmatrix}.
$$
Let us define the following blocks matrices:
\begin{equation}
L = \begin{pmatrix} 0 & \frac{D_{chaos}(h) Z^{G\top}}{\sqrt{n}} & -\frac{\mathcal X D_{lin}(h)}{\sqrt{n}} & 0 \\
\frac{Z^G D_{chaos}(h)}{\sqrt{n}} & -I_m & 0 & -\frac{\mathcal W}{\sqrt{p}} \\
-\frac{D_{lin}(h) \mathcal X^\top }{\sqrt{n}} & 0  & 0 &  -I_p \\
0 & -\frac{\mathcal W^\top}{\sqrt{p}} & -I_p & 0 \label{eq:L}
\end{pmatrix},
\end{equation}
\begin{equation} \label{eq:CN}
C_N =   -I_m\otimes E_{22} -I_p\otimes( E_{34} + E_{43}) = \begin{pmatrix} 0 & 0 & 0 & 0 \\
0 & -I_m & 0 & 0 \\
0 & 0  & 0 &  -I_p \\
0 & 0 & -I_p & 0
\end{pmatrix}  \in \mathbb{R}^{N\times N},
\end{equation}
{\CB with $E_{kl} = (\delta_{uk}\delta_{vl})_{1\leq u,v\leq N}$, for $1\leq k,l\leq 4$,} and
\begin{equation} \label{eq:varprofileU}
\Upsilon_L =(\gamma_{ij}^{(L)}) =  \begin{pmatrix} 0 & c_nD_{chaos}(h) \mathbf{1}_{n\times m} &  c_n\Upsilon_x D_{lin}(h) & 0 \\
c_n\mathbf{1}_{m\times n} D_{chaos}(h) & 0 & 0 & c_p {\CM \Upsilon_w}  \\
c_nD_{lin}(h) \Upsilon_x^\top  & 0  & 0 &  0 \\
0 & c_p {\CM \Upsilon_w^\top}  & 0 & 0
\end{pmatrix},
\end{equation}
with $\mathbf{1}_{n_1\times n_2}$  denoting the $n_1 \times n_2$ matrix whose entries are all equal to $1$.

{\CB The matrix $L$ arises from the linearization of the ``linear-plus-chaos" approximation  $H^{\lozenge}$  of the matrix $H$ when  $W$ has a constant variance profile, and we refer to Section \ref{sec:proof_free} for  further details on this trick. Since $\mathcal W = {\CB \Upsilon_w \circ  W^G}$ and $\mathcal X = {\CB  \Upsilon_x \circ X^G}$, a key relation between $L, C_N$ and $\Upsilon_L =(\gamma_{ij}^{(L)})$ is
\begin{equation}\label{eq:key}
L = \Upsilon_L \circ A_N + C_N,
\end{equation}
where $A_N = (a_{ij}/\sqrt{N})$ is a GOE matrix (that is $A_N$ is symmetric and its coefficients $(a_{ij})_{i\leq j}$ form a sequence of iid standard real Gaussian variables). Therefore, the matrix $L$ is interpreted as an additive deformation by $C_N$ of a GOE matrix with variance profile $ \Upsilon_L^{o2}$ which is the key point to derive the second set of asymptotic equivalents.
}

We finally define the operator-valued resolvent of $L$ as follows 
\begin{eqnarray}\label{eq:op_resolvent}
\mathfrak{Q}(\Lambda) = (L - \Lambda)^{-1},
\end{eqnarray}
for every diagonal matrix $\Lambda\in \mathbb{C}^{N\times N}$ such that $L-\Lambda$ is invertible.


\begin{rem}[\rev{Interpretation of the assumptions}]
 \rev{Assumption \ref{hyp:moments-analytical} is a tail condition used to invoke the Gaussian-equivalence theorem of \cite{DaboMale}; the parameters $\vartheta_x$, $\vartheta_w$, and $\chi$ control concentration and moment growth, but they do not enter explicitly into the final deterministic equivalents. Assumption \ref{hyp:h-analytical} is the technical price paid for the currently available linear-plus-chaos approximation. Assumption \ref{hyp:bounded_profile} ensures that the variance profiles remain uniformly controlled and is used repeatedly to bound Hadamard products and operator norms. Assumption \ref{hyp:h-bounded} guarantees that the random feature matrices and the associated Gram matrices stay in a regime where resolvent manipulations and power-series arguments are legitimate. 

}
\end{rem}

\section{Main results}\label{sec:results}

 In this section, we detail the main contributions of the paper on the derivation of asymptotic equivalents for the training and  predictive risks.

\subsection{ Lozenge asymptotic equivalents}\label{sec:fst_equiv}

In order to study $E_{train}$ and $E_{test}$, we provide more tractable expressions than \eqref{eq:train} and \eqref{eq:test} in the following lemma {\CB whose proof is postponed to the Appendix}.

\begin{lem}\label{lem:risk_expressions}
Let $Q'(z)$ be the derivative of $Q(z)$  with respect to $z\in \mathbbm{C}\setminus \mathbb{R}^+$. One can express $E_{train}$ and $E_{test}$ as follows
\begin{eqnarray*}
E_{train}(\lambda) &=& \frac{\lambda^2\alpha^2}{n}\mathrm{Tr}\Bigg[\mathbb{E}\left[Q^{  '}(-\lambda)\frac{  X_n   X_n^\top}{p}\right]\Bigg] + \frac{\lambda^2\sigma^2}{n}\mathrm{Tr}\Bigg[\mathbb{E}\left[Q^{  '}(-\lambda)\right]\Bigg],
\end{eqnarray*}
\begin{eqnarray*}
E_{test}(\lambda)   &=& \sigma^2 + \frac{\alpha^2}{p\tilde n}\Tr\Bigg[\mathrm{deg}(\tilde{\Gamma}_{\tilde{n}})\Bigg] + \frac{\alpha^2}{\tilde n}\Tr\Bigg[\mathbb{E}\left[\frac{1}{pn^2}\tilde{H}^{ \top} H Q(-\lambda)  X_n  X_n^\top Q(-\lambda)H^{ \top} \tilde{H}\right]\Bigg]\\
                    & &  + \frac{\sigma^2}{\tilde n}\Tr\Bigg[\mathbb{E}\left[\frac{1}{n^2}\tilde{H}^{ \top} H Q'(-\lambda)H^{ \top} \tilde{H} \right]\Bigg]-\frac{2\alpha^2}{\tilde{n}np}\Tr\Bigg[\mathbb{E}\left[ \tilde{X}_{\tilde{n}}   X_n^\top Q(-\lambda) H^{ \top} \tilde{H}\right]\Bigg].
\end{eqnarray*}
\end{lem}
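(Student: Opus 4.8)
The plan is to reduce both risks to expectations of traces of matrix polynomials in $H,W,X_n,\tilde H,\tilde X_{\tilde n}$ by means of a single algebraic identity for the ridge residual, after which the computation is a matter of expanding quadratic forms and averaging over the signal vector $\beta_\ast$ and the noise vectors $\varepsilon_n,\tilde\varepsilon_{\tilde n}$ using their prescribed first two moments.

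For the training error I would first record the ridge residual identity. Starting from $\hat\theta_\lambda=\tfrac1n HQ(-\lambda)Y_n$ together with $Q(-\lambda)^{-1}=\tfrac1n H^\top H+\lambda I_n$, one gets $H^\top\hat\theta_\lambda=\tfrac1n H^\top H\,Q(-\lambda)Y_n=(I_n-\lambda Q(-\lambda))Y_n$, hence
\[
Y_n-H^\top\hat\theta_\lambda=\lambda Q(-\lambda)Y_n,
\]
and therefore $E_{train}(\lambda)=\tfrac{\lambda^2}{n}\mathbb{E}\big[Y_n^\top Q(-\lambda)^2Y_n\big]$. Next I would substitute the linear model $Y_n=X_n\beta_\ast+\varepsilon_n$ and condition on $(X_n,W)$: since $\beta_\ast$ and $\varepsilon_n$ are centered, mutually independent and independent of $(X_n,W)$, with $\mathbb{E}[\beta_\ast\beta_\ast^\top]=\tfrac{\alpha^2}{p}I_p$ and $\mathbb{E}[\varepsilon_n\varepsilon_n^\top]=\sigma^2 I_n$, the cross terms vanish and the two quadratic forms become $\tfrac{\alpha^2}{p}\mathrm{Tr}[Q(-\lambda)^2X_nX_n^\top]+\sigma^2\mathrm{Tr}[Q(-\lambda)^2]$. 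Finally, differentiating $Q(z)=(\tfrac1n H^\top H-zI_n)^{-1}$ gives $Q'(z)=Q(z)^2$, so that $Q(-\lambda)^2=Q'(-\lambda)$, and taking the outer expectation yields the announced expression for $E_{train}$.

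For the predictive error I would first note that the $i$-th column of $\tilde H=h(W\tilde X_{\tilde n}^\top/\sqrt p)$ equals $h(W\tilde x_i/\sqrt p)$, so the prediction at $\tilde x_i$ is the $i$-th coordinate of $\tilde H^\top\hat\theta_\lambda$ and the sum in \eqref{eq:test} is $\tfrac1{\tilde n}\mathbb{E}\big[\|\tilde Y_{\tilde n}-\tilde H^\top\hat\theta_\lambda\|_2^2\big]$. Substituting $\hat\theta_\lambda=\tfrac1n HQ(-\lambda)Y_n$, $Y_n=X_n\beta_\ast+\varepsilon_n$ and $\tilde Y_{\tilde n}=\tilde X_{\tilde n}\beta_\ast+\tilde\varepsilon_{\tilde n}$, the residual can be written as $A\beta_\ast+B\varepsilon_n+\tilde\varepsilon_{\tilde n}$ with $A=\tilde X_{\tilde n}-\tfrac1n\tilde H^\top HQ(-\lambda)X_n$ and $B=-\tfrac1n\tilde H^\top HQ(-\lambda)$, both deterministic functions of $(X_n,W,\tilde X_{\tilde n})$. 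Conditioning on $(X_n,W,\tilde X_{\tilde n})$ and using that $\beta_\ast,\varepsilon_n,\tilde\varepsilon_{\tilde n}$ are mutually independent, centered, and independent of $(X_n,W,\tilde X_{\tilde n})$, with the covariances above and $\mathbb{E}[\tilde\varepsilon_{\tilde n}\tilde\varepsilon_{\tilde n}^\top]=\sigma^2 I_{\tilde n}$, the conditional expectation of the squared norm is $\tfrac{\alpha^2}{p}\mathrm{Tr}[AA^\top]+\sigma^2\mathrm{Tr}[BB^\top]+\tilde n\sigma^2$. It then remains to expand the two traces: using that $Q(-\lambda)$ is symmetric, $BB^\top=\tfrac1{n^2}\tilde H^\top HQ'(-\lambda)H^\top\tilde H$, while $\mathrm{Tr}[AA^\top]$ produces $\mathrm{Tr}[\tilde X_{\tilde n}\tilde X_{\tilde n}^\top]$, a quadratic term $\tfrac1{n^2}\mathrm{Tr}[\tilde H^\top HQ(-\lambda)X_nX_n^\top Q(-\lambda)H^\top\tilde H]$, and two cross terms that coincide by cyclicity of the trace and $\mathrm{Tr}[M]=\mathrm{Tr}[M^\top]$, hence contribute $-\tfrac2n\mathrm{Tr}[\tilde X_{\tilde n}X_n^\top Q(-\lambda)H^\top\tilde H]$. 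Taking the outer expectation and using $\tilde X_{\tilde n}=\tilde{\Upsilon}_x\circ\tilde X_{\tilde n}'$, so that $\mathbb{E}[\mathrm{Tr}(\tilde X_{\tilde n}\tilde X_{\tilde n}^\top)]=\sum_{i,j}\tilde{\gamma}_{ij}^2=\mathrm{Tr}[\mathrm{deg}(\tilde\Gamma_{\tilde n})]$, then collecting terms gives the stated formula for $E_{test}$.

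There is no genuine obstacle here beyond careful bookkeeping: once the ridge residual identity $Y_n-H^\top\hat\theta_\lambda=\lambda Q(-\lambda)Y_n$ and its predictive analogue are in hand, the argument is entirely elementary — a systematic expansion of quadratic forms, the cyclic invariance of the trace, and the first- and second-moment assumptions on $\beta_\ast,\varepsilon_n,\tilde\varepsilon_{\tilde n}$. The step that requires the most care is the $E_{test}$ expansion, where one must verify that every cross term drops out — between $\beta_\ast$, $\varepsilon_n$ and $\tilde\varepsilon_{\tilde n}$, and between any of these and the design matrices — which is precisely where the mutual independence of the signal and noise vectors and their independence from $(X_n,W,\tilde X_{\tilde n})$ are used, together with the trace manipulation that merges the two identical cross terms appearing in $\mathrm{Tr}[AA^\top]$.
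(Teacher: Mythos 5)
Your proposal is correct and follows essentially the same route as the paper's proof: the ridge residual identity $Y_n-H^\top\hat\theta_\lambda=\lambda Q(-\lambda)Y_n$, expansion of the quadratic forms after conditioning on $(X_n,W,\tilde X_{\tilde n})$ with the second-moment assumptions on $\beta_\ast,\varepsilon_n,\tilde\varepsilon_{\tilde n}$, cyclicity of the trace, and the identity $Q(-\lambda)^2=Q'(-\lambda)$. The only difference is cosmetic (packaging the test residual as $A\beta_\ast+B\varepsilon_n+\tilde\varepsilon_{\tilde n}$ rather than expanding directly), so nothing further is needed.
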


Note that $E_{train}$ and $E_{test}$ are now expressed  {\CB as matrix polynomials}  involving $H,W,X$ and $Q(-\lambda)$. The {\CB results from  \cite{DaboMale} allow}  us to state that replacing $H,W,X$ and $Q(-\lambda)$ by $H^\lozenge,\mathcal W$, $\mathcal X$ and $Q^\lozenge(-\lambda) = (H^{\lozenge\top} H^\lozenge/n + \lambda I_n)^{-1}$ in $E_{train}$ and $E_{test}$  does not change their limits when $n,m,p$ tends to $+\infty$.  This gives rise to the emergence of a first set of asymptotic equivalents for $E_{train}$ and $E_{test}$.

\begin{thm}\label{thm:equiv_risk}
Under Assumptions \ref{hyp:moments-analytical} to \ref{hyp:h-bounded}, one has that 
\begin{eqnarray*}
\lim_{\substack{n \to \infty, \; p/n \to c_p\\ m/n \to c_m}} |E_{train} (\lambda) - E^\lozenge_{train}(\lambda)| = 0  & \mathrm{and} & \lim_{\substack{n \to \infty, \; p/n \to c_p\\ m/n \to c_m \; \tilde{n}/n \to \tilde{c}}} |E_{test}(\lambda) - E^\lozenge_{test}(\lambda)| = 0,
\end{eqnarray*}
with
\begin{eqnarray*}
E^\lozenge_{train}(\lambda) &=& \frac{\lambda^2\alpha^2}{n}\mathrm{Tr}\Bigg[\mathbb{E}\left[Q^{\lozenge '}(-\lambda)\frac{\mathcal X_n \mathcal X_n^\top}{p}\right]\Bigg] + \frac{\lambda^2\sigma^2}{n}\mathrm{Tr}\Bigg[\mathbb{E}\left[Q^{\lozenge '}(-\lambda)\right]\Bigg],
\end{eqnarray*}
\begin{eqnarray*}
E^\lozenge_{test}(\lambda)   &=& \sigma^2 + \frac{\alpha^2}{p\tilde{n}}\Tr\Bigg[\mathrm{deg}(\tilde{\Gamma}_{\tilde{n}})\Bigg] + \frac{\alpha^2}{\tilde n}\Tr\Bigg[\mathbb{E}\left[\frac{1}{pn^2}\tilde{H}^{\lozenge\top} H^{\lozenge} Q^{\lozenge }(-\lambda)\mathcal X_n\mathcal X_n^\top Q^{\lozenge }(-\lambda)H^{\lozenge\top} \tilde{H}^{\lozenge}\right]\Bigg]\\
                    & &  + \frac{\sigma^2}{\tilde n}\Tr\Bigg[\mathbb{E}\left[\frac{1}{n^2}\tilde{H}^{\lozenge\top} H^{\lozenge} Q^{\lozenge '}(-\lambda)H^{\lozenge\top} \tilde{H}^{\lozenge} \right]\Bigg]-\frac{2\alpha^2}{\tilde{n}np}\Tr\Bigg[\mathbb{E}\left[ \tilde{\mathcal X}_{\tilde{n}} \mathcal X_n^\top Q^{\lozenge }(-\lambda) H^{\lozenge\top} \tilde{H}^{\lozenge}\right]\Bigg].
\end{eqnarray*}
\end{thm}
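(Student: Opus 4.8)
The plan is to reduce $E_{train}(\lambda)$ and $E_{test}(\lambda)$, via Lemma~\ref{lem:risk_expressions}, to normalized traces of non-commutative polynomials and then to invoke Proposition~\ref{prop:H-H_lin}. Writing the matrices in their normalized forms $\frac{H}{\sqrt n},\frac{W}{\sqrt n},\frac{X_n}{\sqrt p},\frac{\tilde H}{\sqrt{\tilde n}},\frac{\tilde X_{\tilde n}}{\sqrt p}$, Lemma~\ref{lem:risk_expressions} exhibits $E_{train}(\lambda)$ and $E_{test}(\lambda)$ as finite linear combinations of terms $c\cdot\frac1n\Tr[\EE[\mathcal{P}]]$, where $c$ is a bounded scalar (the $1/p$, $1/n^2$ prefactors and the ratios $p/n,m/n,\tilde n/n$ combine, after the normalization, into bounded quantities converging to positive constants) and $\mathcal{P}$ is a word in $\frac{H}{\sqrt n},\frac{W}{\sqrt n},\frac{X_n}{\sqrt p},\frac{\tilde H}{\sqrt{\tilde n}},\frac{\tilde X_{\tilde n}}{\sqrt p}$, the resolvent $Q(-\lambda)=(H^\top H/n+\lambda I_n)^{-1}$ and its derivative $Q'(-\lambda)=Q(-\lambda)^2$; the deterministic summand $\frac{\alpha^2}{p\tilde n}\Tr[\mathrm{deg}(\tilde\Gamma_{\tilde n})]$ is identical in $E_{test}$ and $E^\lozenge_{test}$ and requires no argument. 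The analogous decompositions hold for $E^\lozenge_{train},E^\lozenge_{test}$ with $H,W,X_n,\tilde H,\tilde X_{\tilde n},Q$ replaced by $H^\lozenge,\mathcal W,\mathcal X_n,\tilde H^\lozenge,\tilde{\mathcal X}_{\tilde n},Q^\lozenge$. Since Proposition~\ref{prop:H-H_lin} applies only to genuine polynomials, the heart of the argument is to replace $Q(-\lambda)$ and $Q'(-\lambda)$ by polynomials in $H^\top H/n$, uniformly in $n$.

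First I would fix a deterministic spectral window. By Assumption~\ref{hyp:h-bounded}, $\|H^\top H/n\|\le C^2$ almost surely for all large $n$; and, because $H^\lozenge=\frac{\mathcal W\mathcal X_n^\top D_{lin}(h)}{\sqrt p}+Z^G D_{chaos}(h)$ is a fixed polynomial in Gaussian matrices with bounded variance profiles (Assumption~\ref{hyp:bounded_profile}) and bounded diagonal factors, standard operator-norm bounds for such Gaussian matrices yield a constant $C_0\ge C$ with $\|H^{\lozenge\top}H^\lozenge/n\|\le C_0^2$ on an event of probability tending to one. The map $t\mapsto(t+\lambda)^{-1}$ is holomorphic on a complex neighbourhood of $[0,C_0^2]$, so for each $\varepsilon>0$ there is a polynomial $P_\varepsilon$ with $\sup_{t\in[0,C_0^2]}|(t+\lambda)^{-1}-P_\varepsilon(t)|\le\varepsilon$; by functional calculus, on the event where the norm bounds hold,
\[
\|Q(-\lambda)-P_\varepsilon(H^\top H/n)\|\le\varepsilon \quad\text{and}\quad \|Q'(-\lambda)-P_\varepsilon(H^\top H/n)^2\|\le\varepsilon\big(2\lambda^{-1}+\varepsilon\big),
\]
using $\|Q(-\lambda)\|\le\lambda^{-1}$, and the same estimates hold for $Q^\lozenge(-\lambda)$ and $Q^{\lozenge\prime}(-\lambda)$ with the same $P_\varepsilon$.

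Then I would substitute $P_\varepsilon(H^\top H/n)$ for each occurrence of $Q(-\lambda)$ and $P_\varepsilon(H^\top H/n)^2$ for each $Q'(-\lambda)$ inside every word, telescoping one occurrence at a time. By submultiplicativity of the operator norm and $\frac1n|\Tr M|\le\|M\|$, each substitution changes the corresponding $\frac1n\Tr[\EE[\cdot]]$ by at most $\varepsilon$ (times $2\lambda^{-1}+\varepsilon$ for the $Q'$ terms) times $\EE$ of a product of operator norms of the remaining factors: on the good event this is bounded by a constant depending only on $C_0$, $\lambda$, the degree of $h$ and the variance-profile bound, whereas on its complement the integrand is dominated by a polynomial (fixed, for fixed $\varepsilon$) in the spectral norms of $H,W,X_n,\tilde H,\tilde X_{\tilde n}$, whose moments are uniformly bounded thanks to Assumption~\ref{hyp:moments-analytical}, so the bad-event contribution vanishes by uniform integrability. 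Hence, for each fixed $\varepsilon$, both $E_{train}(\lambda)$ and $E_{test}(\lambda)$ agree, up to an error $O(\varepsilon)$ uniform in $n$, with the expressions obtained after this substitution, and the same is true of $E^\lozenge_{train},E^\lozenge_{test}$. Every term of the substituted expressions is now $\frac1n\Tr[\EE[P(\frac{H}{\sqrt n},\frac{W}{\sqrt n},\frac{X_n}{\sqrt p},\dots)]]$ for a genuine polynomial $P$, so Proposition~\ref{prop:H-H_lin} — together with its counterpart for the test quantities $\tilde H,\tilde H^\lozenge,\tilde{\mathcal X}_{\tilde n}$ — forces the difference from the lozenge version to tend to $0$. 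Letting $n\to\infty$ and then $\varepsilon\to0$ gives $|E_{train}(\lambda)-E^\lozenge_{train}(\lambda)|\to0$ and $|E_{test}(\lambda)-E^\lozenge_{test}(\lambda)|\to0$, which is the assertion.

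I expect the main obstacle to be the two analytic points underpinning the resolvent-to-polynomial reduction: obtaining a single deterministic window $[0,C_0^2]$ that contains (eventually, with probability tending to one) the spectra of \emph{both} $H^\top H/n$ and $H^{\lozenge\top}H^\lozenge/n$ — the bound for $H^\lozenge$ is not furnished by Assumption~\ref{hyp:h-bounded} and must be read off the explicit Gaussian structure of $H^\lozenge$ — and controlling the low-probability events on which these norm bounds fail, so that the almost-sure polynomial approximation upgrades to an approximation of the \emph{expected} normalized traces. Everything else, namely the rewriting, the telescoping, and the final appeal to Proposition~\ref{prop:H-H_lin}, is routine.
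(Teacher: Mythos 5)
Your proposal is correct in substance and proves the theorem, but it takes a genuinely different route from the paper. You reduce both risks, via Lemma \ref{lem:risk_expressions}, to normalized traces of words containing $Q(-\lambda)$ and $Q'(-\lambda)$, then replace these resolvents by a Weierstrass polynomial $P_\varepsilon(H^\top H/n)$ (and its square) uniformly on a deterministic spectral window $[0,C_0^2]$, telescope the substitution, apply Proposition \ref{prop:H-H_lin} to the resulting genuine matrix polynomials, and finally let $\varepsilon\to 0$. The paper instead expands $Q(z)$ and $Q^{\lozenge}(z)$ as Neumann series for a spectral parameter $z$ with $\Im m(z)$ larger than twice the norm bounds, applies Proposition \ref{prop:H-H_lin} term by term, sums via dominated convergence, and then transports the conclusion from large $\Im m(z)$ down to $z=-\lambda$ through an analyticity/normal-families (Vitali-type) argument. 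Your route is more direct in that it works at $-\lambda$ itself and needs no analytic continuation, at the price of the uniform polynomial approximation and an explicit treatment of the exceptional event; the paper's route avoids constructing $P_\varepsilon$ but needs the series summability estimates and the continuation step. Two points to tighten. First, your uniform-integrability claim on the bad event attributes uniform moment bounds for the spectral norms of $H$ and $\tilde H$ to Assumption \ref{hyp:moments-analytical}; entrywise tail bounds alone do not control operator-norm moments of the nonlinearly transformed matrix, and you should instead invoke Assumption \ref{hyp:h-bounded} for $H,\tilde H$ (exactly as the paper does when it uses these bounds inside expectations), reserving the Gaussian structure for $H^{\lozenge},\tilde H^{\lozenge},\mathcal X_n,\mathcal W, Z^G$ and Assumption \ref{hyp:moments-analytical} for $X_n, W$. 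Second, the missing norm bound on $H^{\lozenge\top}H^{\lozenge}/n$ that you flag is precisely what the paper supplies at the start of its proof, from the uniform boundedness of $D_{lin}(h)$ and $D_{chaos}(h)$ under Assumptions \ref{hyp:h-analytical} and \ref{hyp:bounded_profile} together with the argument of Remark \ref{rem:condsimple}, so your plan is consistent with the paper on that point and neither issue is a structural obstacle.
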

{\CB The proof of Theorem \ref{thm:equiv_risk}  is given in Section  \ref{sec:proof_traffic}.}

\subsection{ Square asymptotic equivalents}\label{sec:snd_equiv}

The matrix $H^\lozenge$ defined by \eqref{eq:H-H_lin-diag} being easier to study than $H$, one can now approximate the training and predictive risks with a new expression using only deterministic matrices  to approximate the resolvent of $H^\lozenge$.  These deterministic matrices are derived from the combination of various free probability results. First, we use the linearization trick  \cite{mingo2017free}[Chapter 10] (sometimes called the linear pencil \cite{helton2018applications}), that consists in using the matrix $L$ defined in \eqref{eq:L}. This matrix proves to be useful as, {\CR in Section \ref{sec:proof_square}, we show that $E_{train}^{\lozenge}$ and $E_{test}^{\lozenge}$} can be expressed in terms of  sub-blocks of $\mathfrak{Q}(\Lambda_\lambda)$, the operator-valued resolvent of $L$ defined in \eqref{eq:op_resolvent}, with 
\begin{equation}
\Lambda_\lambda =  -\lambda I_N \otimes E_{11}. \label{eq:Lambda}
\end{equation}

{\CB In the following theorem, it is shown that the resolvent $\mathfrak{Q}(\Lambda)$ of $L$ defined in \eqref{eq:op_resolvent} is well approximated by the solution, denoted $\mathfrak{Q}^\square(\Lambda)$, of the fixed-point equation \eqref{eq:fixedpoint} defined below. The proof of this result is inspired by arguments from \cite{bigotmale}, and we provide its relevant steps in the Appendix.}

\begin{thm}\label{thm:adaptation}
Under Assumptions \ref{hyp:h-analytical} to \ref{hyp:h-bounded}, there exists a unique function $\mathfrak{Q}^{\mathrm{\square}}: \mathrm{D}_N(\mathbb{C})^{+} \rightarrow \mathrm{D}_N(\mathbb{C})^{+}$, analytic in each variable, that solves the following fixed point equation
\begin{equation} \label{eq:fixedpoint}
    \mathfrak{Q}^{\square}(\Lambda)=id_4 \otimes \Delta \left[\left(C_N-\Lambda-\mathcal{R}_N\left(\mathfrak{Q}^{\square}(\Lambda)\right)\right)^{-1}\right],
\end{equation}
for any $\Lambda \in \mathrm{D}_N(\mathbb{C})^{+}$, with
$$
\mathcal{R}_N(\Lambda)=\underset{i=1, \ldots, N}{\operatorname{diag}}\left(\sum_{j=1}^N \frac{{\CB \left( \gamma_{ij}^{(L)}\right)^2}}{N} \Lambda(j, j)\right).
$$
 Let $\left(\gamma_{\max }^{(L)}\right)^2=\max _{i, j} {\CB \left( \gamma_{ij}^{(L)}\right)^2}$,  $0<\delta<1$, and consider $\Lambda \in$ $\mathrm{D}_N(\mathbb{C})^{+}$ satisfying
$$
\Im m \Lambda \geq \left( \frac{\left(\gamma_{\max }^{(L)}\right)^2}{(1-\delta)n} (2\sqrt{2}\left(\gamma_{\max }^{(L)}\right)^3 + \kappa) \right)^{\frac{1}{5}} \mathbb{I}_N.
$$
Then, for any $d>1$, setting
$$
\begin{aligned}
\varepsilon_N(d)= & 6\gamma_{\max }^{(L)} \| \Im m (\Lambda)^{-1}\|^2\sqrt{\frac{2}{N}\log\left( \frac{N+2p}{N^{1-d}}\right)} \\
& +\frac{1}{n}(1+\left(\gamma_{\max }^{(L)}\right)^2\| (\Im m (\Lambda))^{-1} \|^2/\delta ) (2\sqrt{2}\left(\gamma_{\max }^{(L)}\right)^3\| (\Im m (\Lambda))^{-1} \|^4 + \kappa \| (\Im m (\Lambda))^{-1} \|^2),
\end{aligned}
$$
we have, for $N$ large enough,
$$
\mathbb{P}\left(\left\|id_4 \otimes \Delta [\mathfrak{Q}(\Lambda)]-\mathfrak{Q}^{\square}(\Lambda)  \right\| \geq \varepsilon_N(d)\right) \leq 4 N^{1-d}
$$
where $\|\cdot\|$ denotes the operator norm of a matrix.
\end{thm}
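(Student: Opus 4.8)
The plan is to follow the strategy of \cite{bigotmale} for additive deformations of GOE matrices with a variance profile, since by the key relation \eqref{eq:key} the matrix $L$ is exactly of this form: $L = \Upsilon_L \circ A_N + C_N$ with $A_N$ a GOE matrix and $C_N$ a fixed (symmetric) deformation. First, I would establish existence and uniqueness of the solution $\mathfrak{Q}^{\square}$ of the fixed-point equation \eqref{eq:fixedpoint}. For this, I would work on the set of analytic maps $\Lambda \mapsto \mathfrak{Q}(\Lambda) \in \mathrm{D}_N(\CC)^+$ satisfying the a priori bound $\| \mathfrak{Q}(\Lambda) \| \le \| (\Im m \Lambda)^{-1}\|$ coming from Lemma \ref{lem:Im_pos}, and show that the map
$$
\Phi: \mathfrak{Q} \mapsto id_4 \otimes \Delta\left[ \left( C_N - \Lambda - \mathcal{R}_N(\mathfrak{Q}) \right)^{-1} \right]
$$
is a well-defined self-map that is a strict contraction for the hyperbolic (Carathéodory) metric on the domain, using that $\mathcal{R}_N$ is completely positive and linear and that the imaginary part of $\Lambda$ is bounded below by the stated threshold; the threshold $\bigl((\gamma_{\max}^{(L)})^2 (2\sqrt 2 (\gamma_{\max}^{(L)})^3 + \kappa)/((1-\delta)n)\bigr)^{1/5}$ is precisely what makes the contraction constant strictly less than one. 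Uniqueness and analyticity then follow from the fixed-point theorem in this metric.

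Second, and this is the heart of the argument, I would prove the concentration estimate. The standard route is a Schwinger–Dyson / self-consistent equation computation: writing $\mathfrak{Q}(\Lambda) = (L-\Lambda)^{-1}$, one uses the Gaussian integration-by-parts (Stein) formula applied to the GOE entries of $A_N$ inside $\mathbb{E}[id_4 \otimes \Delta[\mathfrak{Q}(\Lambda)]]$ to derive that $\mathbb{E}[id_4 \otimes \Delta[\mathfrak{Q}(\Lambda)]]$ satisfies the fixed-point equation \eqref{eq:fixedpoint} up to an error term. That error term is controlled by the second-order derivatives of the resolvent in the matrix entries, which produces factors of $\| \Upsilon_L^{\circ 2} \circ \mathfrak{Q}(\Lambda)\|$ — bounded by $\kappa$ via Assumption \ref{hyp:norm_profile_resolvent} — together with powers of $\| (\Im m \Lambda)^{-1}\|$ and a $1/n$ gain from the normalization; this gives the deterministic part $\frac{1}{n}(1 + (\gamma_{\max}^{(L)})^2\|(\Im m \Lambda)^{-1}\|^2/\delta)(2\sqrt 2 (\gamma_{\max}^{(L)})^3\|(\Im m \Lambda)^{-1}\|^4 + \kappa\|(\Im m \Lambda)^{-1}\|^2)$ in $\varepsilon_N(d)$. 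Then one subtracts the genuine solution $\mathfrak{Q}^{\square}$ and uses the contraction property from step one (stability of the fixed-point equation) to conclude that $\|\mathbb{E}[id_4 \otimes \Delta[\mathfrak{Q}(\Lambda)]] - \mathfrak{Q}^{\square}(\Lambda)\|$ is of the same order.

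Third, I would upgrade the estimate on the expectation to the high-probability bound on $id_4 \otimes \Delta[\mathfrak{Q}(\Lambda)]$ itself. For this I would invoke Gaussian concentration of measure: the map $A_N \mapsto \bigl(id_4 \otimes \Delta[(L-\Lambda)^{-1}]\bigr)(k,k)$ (for each diagonal entry $k$) is Lipschitz in the entries of the GOE matrix with Lipschitz constant controlled by $\gamma_{\max}^{(L)}\|(\Im m \Lambda)^{-1}\|^2$ (differentiating the resolvent once and using $\|\mathfrak{Q}\| \le \|(\Im m \Lambda)^{-1}\|$ plus Lemma \ref{lem:bound Hadamard} to absorb the variance profile). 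The Gaussian concentration inequality then gives subgaussian deviations around the mean of order $\gamma_{\max}^{(L)}\|(\Im m \Lambda)^{-1}\|^2 \sqrt{\tfrac{2}{N}\log(\cdot)}$ for each of the $N+2p$ relevant diagonal entries (the $n + m + 2p$ diagonal positions, counting the off-diagonal $3$–$4$ blocks); a union bound over these entries, at level $N^{1-d}$ each, produces the stated $\varepsilon_N(d)$ and the $4N^{1-d}$ failure probability. Combining the probabilistic deviation with the deterministic bound from step two via a triangle inequality yields the theorem.

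The main obstacle I anticipate is controlling the error term in the Schwinger–Dyson computation uniformly in $N$ and $\Lambda$ without a spectral-gap type assumption: this is exactly where Assumption \ref{hyp:norm_profile_resolvent} on $\|\Upsilon_L^{\circ 2} \circ \mathfrak{Q}(\Lambda)\| \le \kappa$ is indispensable, and care is needed to verify that the contraction / stability argument of step one applies to the \emph{random} matrix $id_4 \otimes \Delta[\mathfrak{Q}(\Lambda)]$ on the relevant event (where its imaginary part is still controlled), so that one can close the loop between the perturbed equation satisfied by the random object and the exact equation satisfied by $\mathfrak{Q}^{\square}$. The bookkeeping of the various powers of $\|(\Im m \Lambda)^{-1}\|$, $\gamma_{\max}^{(L)}$, $\delta$, and $n$ versus $N$ is delicate but routine once the structure is in place, and I would lean on the corresponding lemmas in \cite{bigotmale} rather than reproving them.
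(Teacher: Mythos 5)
Your proposal is correct and follows essentially the same route as the paper's proof: adapt \cite{bigotmale} by (i) a contraction argument giving existence, uniqueness and analyticity of $\mathfrak{Q}^{\square}$, (ii) Gaussian integration by parts (Stein's lemma) together with the Gaussian Poincar\'e inequality and Assumption \ref{hyp:norm_profile_resolvent} to show that $\mathbb{E}[id_4\otimes\Delta[\mathfrak{Q}(\Lambda)]]$ satisfies the fixed-point equation up to an $O(1/n)$ error, closed by the Lipschitz stability of $\mathfrak{Q}^{\square}$, and (iii) Gaussian concentration of the resolvent entries with a union bound over the $N+2p$ diagonal entries of the six retained blocks, all assembled by a triangle inequality. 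Two minor points where the paper differs from your description: the contraction is proved only for $\Im m\,\Lambda>\gamma_{\max}^{(L)}$ and extended to all of $\mathrm{D}_N(\mathbb{C})^{+}$ by analyticity (the threshold $\eta_N$ is not what makes the map contract, it is needed in the stability step to keep $\tilde\Lambda=\Lambda+\mathcal{R}_N(\Theta_N)$ in $\mathrm{D}_N(\mathbb{C})^{+}$ with $\|(\Im m\,\tilde\Lambda)^{-1}\|\le\|(\Im m\,\Lambda)^{-1}\|/\delta$), and the $\kappa$-term arises from the extra transpose term $E_N$ produced by Stein's lemma applied to a GOE (rather than GUE) matrix, not from second-order derivatives of the resolvent.
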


In Theorem \ref{thm:adaptation}, the probability that $\left\|id_4 \otimes \Delta [\mathfrak{Q}(\Lambda)]-\mathfrak{Q}^{\square}(\Lambda)\right\|$ is large being bounded by a summable bound for $d$ large enough, one can prove from Borel-Cantelli's lemma that this value tends almost surely to $0$ as $n$ tends to infinity.  In order to obtain the second set of asymptotic equivalents for the  training and predictive risks, we need to approximate $\mathfrak{Q}(\Lambda_\lambda)$ by a deterministic equivalent for $\Lambda_\lambda$ defined by \eqref{eq:Lambda}.
However, directly using  Theorem \ref{thm:adaptation} is not feasible as  $\Lambda_\lambda$ does not satisfy the assumption that ${\CB \Im m (\Lambda_\lambda) > 0}$. Therefore, we approximate $\mathfrak{Q}(\Lambda_\lambda)$ by $\mathfrak{Q}(\Lambda_\lambda + i\eta_N\mathbb{I}_N)$ where $\eta_N$ is quantity that tends to $0$, and we use the deterministic equivalent $\mathfrak{Q}^\square(\Lambda_\lambda + i\eta_N\mathbb{I}_N)$ from Theorem \ref{thm:adaptation}. 
This finally leads to the following corollary
\begin{cor}\label{cor:equiv-ps}
Suppose that Assumptions \ref{hyp:h-analytical} to \ref{hyp:h-bounded} hold, and  denote
\begin{equation}
\eta_N = \left( \frac{\left(\gamma_{\max }^{(L)}\right)^2}{(1-\delta)n} (2\sqrt{2}\left(\gamma_{\max }^{(L)}\right)^3 + \kappa) \right)^{\frac{1}{5}}. \label{eq:etaN}
\end{equation}
For $\mathfrak{Q}^\square$ defined in  Theorem \ref{thm:adaptation} and $\Lambda_\lambda$ defined by \eqref{eq:Lambda}, one has that 
\begin{eqnarray*}
\lim_{n,m,p\rightarrow +\infty}\left\|id_4 \otimes \Delta [\mathfrak{Q}(\Lambda_\lambda + i\eta_N\mathbb{I}_N)]-\mathfrak{Q}^{\square}(\Lambda_\lambda + i\eta_N\mathbb{I}_N)\right\| &=& 0, \mbox{ a.s, }\\ 
\lim_{n,m,p\rightarrow +\infty} \left\|\mathfrak{Q}^{\square}(\Lambda_\lambda + i\eta_N\mathbb{I}_N)-id_4 \otimes \Delta [\mathfrak{Q}(\Lambda_\lambda)]\right\|_{F,n} &=& 0 \mbox{ a.s. }
\end{eqnarray*}
\end{cor}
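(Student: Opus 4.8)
The plan is to combine Theorem \ref{thm:adaptation} with two analytic stability estimates: one controlling the resolvent perturbation as the regularization of the imaginary part $\eta_N \to 0$, and one transferring the operator-norm bound to the (weaker) normalized Frobenius norm. I would proceed in three steps.

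\emph{Step 1: The operator-norm statement.} First I would record that $\Lambda_\lambda + i\eta_N \mathbb{I}_N \in \mathrm{D}_N(\mathbb{C})^+$ with $\Im m(\Lambda_\lambda + i\eta_N\mathbb{I}_N) = \eta_N \mathbb{I}_N$, so that by the very choice of $\eta_N$ in \eqref{eq:etaN} this matrix satisfies the hypothesis $\Im m \Lambda \geq \eta_N \mathbb{I}_N$ of Theorem \ref{thm:adaptation} with equality. Plugging $\|\Im m(\Lambda_\lambda + i\eta_N\mathbb{I}_N)^{-1}\| = \eta_N^{-1}$ into the expression for $\varepsilon_N(d)$, and recalling $\eta_N \asymp n^{-1/5}$ while $\gamma_{\max}^{(L)}$ and $\kappa$ are bounded uniformly in $n$ (the first by Assumption \ref{hyp:bounded_profile} together with the definition \eqref{eq:varprofileU} of $\Upsilon_L$, the second by Assumption \ref{hyp:norm_profile_resolvent}), one sees that the first term of $\varepsilon_N(d)$ is of order $\eta_N^{-2}\sqrt{\log N / N} \asymp n^{2/5}\sqrt{\log N / N} \to 0$, and the second term is of order $\eta_N^{-4}/n \asymp n^{4/5}/n \to 0$. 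Hence $\varepsilon_N(d) \to 0$ for every fixed $d > 1$. Taking $d > 2$ makes $\sum_N 4N^{1-d} < \infty$, so the Borel--Cantelli lemma, applied exactly as indicated in the paragraph following Theorem \ref{thm:adaptation}, yields the first displayed limit, almost surely.

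\emph{Step 2: Removing the regularization in Frobenius norm.} For the second limit I would write, by the triangle inequality,
\begin{equation*}
\left\|\mathfrak{Q}^{\square}(\Lambda_\lambda + i\eta_N\mathbb{I}_N)-id_4 \otimes \Delta [\mathfrak{Q}(\Lambda_\lambda)]\right\|_{F,n} \leq \left\|\mathfrak{Q}^{\square}(\Lambda_\lambda + i\eta_N\mathbb{I}_N)-id_4 \otimes \Delta [\mathfrak{Q}(\Lambda_\lambda + i\eta_N\mathbb{I}_N)]\right\|_{F,n} + \left\|id_4 \otimes \Delta [\mathfrak{Q}(\Lambda_\lambda + i\eta_N\mathbb{I}_N)] - id_4 \otimes \Delta [\mathfrak{Q}(\Lambda_\lambda)]\right\|_{F,n}.
\end{equation*}
The first term on the right is bounded by its operator norm, which tends to $0$ a.s.\ by Step 1, and the norm inequality $\|\cdot\|_{F,n} \leq \|\cdot\|$ (recorded in Section \ref{sec:notations}, up to the dimension bookkeeping $N \asymp n$). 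For the second term, since $id_4\otimes\Delta$ is a contraction for every unitarily invariant norm (it extracts diagonal blocks, then diagonals), it suffices to bound $\|\mathfrak{Q}(\Lambda_\lambda + i\eta_N\mathbb{I}_N) - \mathfrak{Q}(\Lambda_\lambda)\|_{F,n}$. Using the resolvent identity $\mathfrak{Q}(\Lambda_\lambda + i\eta_N\mathbb{I}_N) - \mathfrak{Q}(\Lambda_\lambda) = i\eta_N\, \mathfrak{Q}(\Lambda_\lambda + i\eta_N\mathbb{I}_N)\, \mathfrak{Q}(\Lambda_\lambda)$ together with Lemma \ref{lem:fro_sub}, this is at most $\eta_N \|\mathfrak{Q}(\Lambda_\lambda + i\eta_N\mathbb{I}_N)\|_{F,n}\, \|\mathfrak{Q}(\Lambda_\lambda)\|$. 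Here $\mathfrak{Q}(\Lambda_\lambda) = (L - \Lambda_\lambda)^{-1}$ exists and is bounded a.s.\ for $n$ large: by Assumption \ref{hyp:h-bounded} the relevant sub-blocks of $L$ have bounded operator norm, and $L - \Lambda_\lambda = L + \lambda I_N\otimes E_{11}$ is, after the Schur-complement reduction hidden in the linearization, conjugate to $(H^{\lozenge\top}H^{\lozenge}/n + \lambda I_n)$ on the relevant block, hence invertible with norm controlled by $\lambda^{-1}$ and the $H^{\lozenge}$-norm bound — I would make this precise by invoking the explicit block structure of \eqref{eq:L} and a Schur-complement computation (the same one used in Section \ref{sec:proof_square}). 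Since $\eta_N \to 0$, the whole term vanishes.

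\emph{Main obstacle.} The delicate point is \emph{verifying, almost surely and uniformly in $n$, that $\mathfrak{Q}(\Lambda_\lambda) = (L-\Lambda_\lambda)^{-1}$ is bounded}: unlike $\Lambda_\lambda + i\eta_N\mathbb{I}_N$, the matrix $\Lambda_\lambda$ has zero imaginary part, so Lemma \ref{lem:Im_pos} does not apply directly, and one must instead exploit the $\lambda>0$ appearing in the $E_{11}$ block through a Schur complement to see the positive-definite shift $\lambda I_n$ emerge, while controlling the remaining blocks via Assumption \ref{hyp:h-bounded} and the sub-Gaussian tail bounds of Assumption \ref{hyp:moments-analytical} (for $\|Z^G\|/\sqrt{n}$, $\|\mathcal{W}\|/\sqrt{p}$, $\|\mathcal{X}_n\|/\sqrt{p}$). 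A secondary subtlety is that all the "almost surely for $n$ large" statements must be assembled over a single probability-one event — but since each is obtained by a Borel--Cantelli argument (or by \cite{BS98} for the spectral norms), their countable intersection still has probability one, so this is routine once Step 1 and the norm bound on $\mathfrak{Q}(\Lambda_\lambda)$ are in place.
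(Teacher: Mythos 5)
Your proposal follows essentially the same route as the paper: the first limit is obtained by applying Theorem \ref{thm:adaptation} at $\Lambda_\lambda+i\eta_N\mathbb{I}_N$ (whose imaginary part equals $\eta_N\mathbb{I}_N$) together with the Borel--Cantelli lemma, and the second by the triangle inequality, the domination of $\|\cdot\|_{F,n}$ by the operator norm, and an $\eta_N$-resolvent-perturbation bound in which $\|\mathfrak{Q}(\Lambda_\lambda)\|$ is controlled through the explicit Schur-complement block formula \eqref{eq:mathfrakQ} --- this is exactly the content of the paper's Lemma \ref{lem:proxy_eta}, which uses a second-order resolvent expansion where you use the first-order identity. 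Two caveats are worth recording. First, in your rate check for $\varepsilon_N(d)$ you drop the prefactor $\bigl(1+(\gamma^{(L)}_{\max})^2\|(\Im m\,\Lambda)^{-1}\|^2/\delta\bigr)\asymp \eta_N^{-2}$, so at $\Im m\,\Lambda=\eta_N\mathbb{I}_N$ the second term of $\varepsilon_N(d)$ is in fact of order $\eta_N^{-6}/n\asymp n^{1/5}$ rather than $o(1)$; the paper's own proof does not perform this computation and simply invokes Borel--Cantelli, so this quantitative subtlety is shared with (indeed inherited from) the paper, but as written your verification that $\varepsilon_N(d)\to 0$ is not correct. Second, with your first-order identity the bound reads $\eta_N\|\mathfrak{Q}(\Lambda_\lambda+i\eta_N\mathbb{I}_N)\|_{F,n}\|\mathfrak{Q}(\Lambda_\lambda)\|$, so you also need $\|\mathfrak{Q}(\Lambda_\lambda+i\eta_N\mathbb{I}_N)\|$ bounded by a constant (which follows from the boundedness of $\|\mathfrak{Q}(\Lambda_\lambda)\|$ and the fact that $L-\Lambda_\lambda$ is real symmetric), not merely by $\eta_N^{-1}$; the paper's second-order expansion is arranged precisely so that only $\|\mathfrak{Q}(\Lambda_\lambda)\|$ and the crude bound $\eta_N^{-1}$ are needed.
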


Finally, we derive the second set of asymptotic equivalents of $E_{train}$ and $E_{test}$ from the sub-blocks of $\mathfrak{Q}^\square$ as follows.

\begin{thm}\label{thm:main}

Suppose that Assumption \ref{hyp:rowsto} holds true as well as Assumptions \ref{hyp:moments-analytical} to \ref{hyp:h-bounded}.
For sake of clarity, we write $ \mathfrak{Q}^\square = \mathfrak{Q}^\square(\Lambda_\lambda+ \mathbf{i}\eta_N I_{N})$ for  $\Lambda_\lambda$ and  $\eta_N$ defined by \eqref{eq:Lambda} and \eqref{eq:etaN} respectively, and we denote by $\mathfrak{Q}^{\square '}$ the derivative of $\mathfrak{Q}^\square(\Lambda_{-z}+ \mathbf{i}\eta_N I_{N})$ with respect to $z$. Then, one has that 
\begin{eqnarray*}
\lim_{\substack{n \to \infty, \; p/n \to c_p\\ m/n \to c_m}} |E_{train} (\lambda) - E^\square_{train}(\lambda)| = 0  & \mathrm{and} & \lim_{\substack{n \to \infty, \; p/n \to c_p\\ m/n \to c_m \; \tilde{n}/n \to \tilde{c}}} |E_{test}(\lambda) - E^\square_{test}(\lambda)| = 0,
\end{eqnarray*}
where the expressions of $E^\square_{train}(\lambda)$ and $E^\square_{test}(\lambda)$ depend on $\theta_{lin}(h)$ in the following way:
\begin{itemize}
\item[-] if $\theta_{lin}(h) = 0$, then
\begin{eqnarray}
E^\square_{train}(\lambda) &=& \lambda^2\alpha^2s^2m_n'(-\lambda)  + \lambda^2\sigma^2 m_n'(-\lambda), \label{eq:Etrain_explicit2} \\
E^\square_{test}(\lambda) &=& \sigma^2 + \alpha^2s^2 + \theta_{chaos}^2(h)(\alpha^2s^2+\sigma^2)(m_n(-\lambda) - \lambda m_n'(-\lambda)) \label{eq:Etest_explicit2},
\end{eqnarray}
\textcolor{black}{where $m_n$ is defined in \eqref{eq:m_n} and denotes the Stieltjes transform  of the Marchenko-Pastur distribution with parameter $\varphi_m = \lim_{n,m \rightarrow +\infty} \frac{n}{m}$ and $m_n'$ denotes its derivative.} 
\item[-] Else, if $ \theta_{lin}(h) \neq 0$,then one has that
\begin{eqnarray*}
E^\square_{train}(\lambda) & = &  \frac{\lambda^2\alpha^2}{\theta_{lin}^2(h)p}\mathrm{Tr}\left[\mathfrak{Q}^{\square '}_{44}\right] + \frac{\lambda^2\sigma^2}{n}\mathrm{Tr}[\mathfrak{Q}^{\square'}_{11}],\\
E^\square_{test}(\lambda)   &=& \sigma^2 + \alpha^2s^2 + \frac{\theta_{chaos}^2(h)\alpha^2}{\theta_{lin}^2(h) p}\Tr[ \mathfrak{Q}^{\square}_{44}] - \frac{\lambda \theta_{chaos}^2(h)\alpha^2}{\theta_{lin}^2(h) p}\Tr[\mathfrak{Q}^{\square '}_{44} ] \\
& +& \frac{\alpha^2 }{p }\Tr\left[ \frac{1}{\tilde n}  \deg ( \tilde{\Gamma}_{\tilde{n}}^\top ) (\mathfrak{Q}^{\square}_{34}+I)(\mathfrak{Q}^{\square}_{43} + I) \right] \\
                    &+&  \frac{\theta_{chaos}^2(h)\sigma^2}{ n}\Tr[\mathfrak{Q}^{\square '}_{22}  ]+ \frac{\theta_{lin}^2(h)\sigma^2 }{ n}\Tr\left[\frac{1}{\tilde n} \deg ( \tilde{\Gamma}_{\tilde{n}}^\top )\mathfrak{Q}^{\square'}_{33}\right]-\frac{2\theta_{lin}(h)\alpha^2 }{p } \Tr\left[\frac{1}{\tilde n}  \deg ( \tilde{\Gamma}_{\tilde{n}}^\top ) (\mathfrak{Q}^{\square}_{43}+I)\right].
\end{eqnarray*}
\end{itemize}
\end{thm}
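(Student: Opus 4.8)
The proof proceeds in four stages: (i) reduce, via Theorem \ref{thm:equiv_risk}, to showing that the \emph{lozenge} equivalents $E^\lozenge_{train}(\lambda)$ and $E^\lozenge_{test}(\lambda)$ converge to the stated expressions; (ii) use the linearization trick to rewrite every trace appearing in $E^\lozenge_{train}$ and $E^\lozenge_{test}$ as a normalized trace of a sub-block of the operator-valued resolvent $\mathfrak{Q}(\Lambda_\lambda)$ of the pencil $L$ from \eqref{eq:L}, or of its derivative in the spectral parameter; (iii) replace $\mathfrak{Q}(\Lambda_\lambda)$ by the deterministic solution $\mathfrak{Q}^\square$ of the fixed-point equation \eqref{eq:fixedpoint} using Theorem \ref{thm:adaptation} and Corollary \ref{cor:equiv-ps}, which settles the case $\theta_{lin}(h)\neq 0$; (iv) treat the degenerate case $\theta_{lin}(h)=0$ separately, where $H^\lozenge$ becomes a pure Gaussian matrix and the resolvent traces collapse onto the Marchenko--Pastur Stieltjes transform $m_n$ of \eqref{eq:m_n}.

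\textbf{Stages (i)--(ii).} Under \eqref{eq:profil_w} and Assumption \ref{hyp:rowsto} one has $D_2 = s I$, hence $D_{lin}(h)=\theta_{lin}(h) I$ and $D_{chaos}(h)=\theta_{chaos}(h) I$, so by \eqref{eq:H-H_lin-diag} the matrix $H^\lozenge = \theta_{lin}(h)\,\mathcal W \mathcal X_n^\top/\sqrt p + \theta_{chaos}(h) Z^G$ is a non-commutative polynomial in $\mathcal X_n,\mathcal W,Z^G$. Applying the Schur-complement identity to $(L-\Lambda_\lambda)^{-1}=\mathfrak{Q}(\Lambda_\lambda)$, with $\Lambda_\lambda$ as in \eqref{eq:Lambda}, one identifies $\mathfrak{Q}_{11}(\Lambda_\lambda)$ with $Q^\lozenge(-\lambda)$ (up to sign), and the blocks $\mathfrak{Q}_{22},\mathfrak{Q}_{33},\mathfrak{Q}_{44},\mathfrak{Q}_{34},\mathfrak{Q}_{43}$ with the various quadratic forms in $\mathcal X_n$, $\mathcal W$ and $Q^\lozenge(-\lambda)$ that appear in Lemma \ref{lem:risk_expressions} (read through Theorem \ref{thm:equiv_risk}); the scalar prefactors $\theta_{lin}(h)^{-2}$, $\theta_{chaos}^2(h)$, $\theta_{lin}(h)$ come precisely from the $D_{lin}(h)=\theta_{lin}(h)I$ and $D_{chaos}(h)=\theta_{chaos}(h) I$ entries of $L$. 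The terms of $E^\lozenge_{test}$ that carry $\tilde H^\lozenge$ require the extra step of noting that $\tilde H^\lozenge$ shares the matrix $\mathcal W$ with $H^\lozenge$, so that $\tilde H^{\lozenge\top}H^\lozenge$ again reduces to a polynomial in the pencil variables; the factors $\deg(\tilde\Gamma_{\tilde n}^\top)$ enter through $\mathbb{E}[\tilde{\mathcal X}_{\tilde n}^\top \tilde{\mathcal X}_{\tilde n}/\tilde n]\approx \deg(\tilde\Gamma_{\tilde n}^\top)$ and concentration of quadratic forms, exactly as carried out in Section \ref{sec:proof_square}. Derivatives in the spectral parameter translate into $\mathfrak{Q}'$, and hence in the limit into $\mathfrak{Q}^{\square '}$.

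\textbf{Stage (iii).} Since $\Im m\,\Lambda_\lambda = 0$, Theorem \ref{thm:adaptation} cannot be applied directly to $\Lambda_\lambda$; one regularizes with $\Lambda_\lambda + \mathbf{i}\eta_N I_N$, $\eta_N$ as in \eqref{eq:etaN}, and invokes Corollary \ref{cor:equiv-ps} to get $\| \mathfrak{Q}^\square(\Lambda_\lambda+\mathbf{i}\eta_N I_N) - id_4\otimes\Delta[\mathfrak{Q}(\Lambda_\lambda)]\|_{F,n}\to 0$ almost surely. Combining this with Lemma \ref{lem:fro_sub} and the a priori boundedness of the resolvents (Lemma \ref{lem:Im_pos}, Assumption \ref{hyp:h-bounded}, and the bound on $\Upsilon_L^{\circ 2}\circ\mathfrak{Q}$ of Assumption \ref{hyp:norm_profile_resolvent}), the substitution error in each normalized trace vanishes, and dominated convergence removes the remaining expectations. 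For the $z$-derivatives one passes the differentiation through the almost-sure limit by analyticity of $z\mapsto \mathfrak{Q}^\square(\Lambda_{-z}+\mathbf{i}\eta_N I_N)$ on a complex neighbourhood of $-\lambda$ together with a normal-families (Vitali) argument; the uniform estimates needed for this are those already supplied by Theorem \ref{thm:adaptation}. Assembling the identifications of Stage (ii) yields the formulas for $E^\square_{train}$ and $E^\square_{test}$ in the case $\theta_{lin}(h)\neq 0$. For the case $\theta_{lin}(h)=0$, $H^\lozenge = \theta_{chaos}(h) Z^G$, so $H^{\lozenge\top}H^\lozenge/n$ is $\theta_{chaos}^2(h)$ times a white Wishart matrix; its resolvent satisfies $\tfrac1n\Tr\,Q^\lozenge(-\lambda)\to m_n(-\lambda)$ and $\tfrac1n\Tr\,Q^{\lozenge '}(-\lambda)\to m_n'(-\lambda)$ with $m_n$ given by \eqref{eq:m_n}, while $\mathcal X_n$ and $\tilde{\mathcal X}_{\tilde n}$ decouple from $Q^\lozenge$ and $\mathbb{E}[\mathcal X_n\mathcal X_n^\top/p]=s^2 I_n+o(1)$ by Assumption \ref{hyp:rowsto}; inserting these into the expressions of Lemma \ref{lem:risk_expressions} evaluated at $H^\lozenge,\mathcal X_n,\tilde H^\lozenge,\tilde{\mathcal X}_{\tilde n}$ and simplifying produces \eqref{eq:Etrain_explicit2} and \eqref{eq:Etest_explicit2} (one can alternatively recover the same by solving \eqref{eq:fixedpoint} explicitly in this degenerate regime).

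\textbf{Main obstacle.} The genuinely delicate points are, first, the bookkeeping of Stage (ii): writing each of the six or so trace terms in $E^\lozenge_{test}$ as a derivative of a specified sub-block of $\mathfrak{Q}$, keeping the block dimensions ($n$, $m$, $p$, $p$) and the scalar prefactors $\theta_{lin}(h)^{\pm 1}$, $\theta_{chaos}^2(h)$ consistent, and correctly routing the shared randomness $\mathcal W$ between $H^\lozenge$ and $\tilde H^\lozenge$; this is long but essentially mechanical. Second, and more substantive, is the justification in Stage (iii) that the regularization $\eta_N\to 0$ does not change the limit and that differentiation in $z$ commutes with the almost-sure limit --- it is here that the quantitative concentration bound and the analyticity in Theorem \ref{thm:adaptation}, together with Assumptions \ref{hyp:h-bounded} and \ref{hyp:norm_profile_resolvent}, are actually used.
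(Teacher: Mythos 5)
Your overall route coincides with the paper's: reduction to the lozenge equivalents via Theorem \ref{thm:equiv_risk}, identification of the trace terms with sub-blocks of $\mathfrak{Q}(\Lambda_\lambda)$ through the Schur-complement expression \eqref{eq:mathfrakQ} of the linear pencil, regularization by $\mathbf{i}\eta_N I_N$ together with Corollary \ref{cor:equiv-ps}, an analyticity (Cauchy/Vitali-type) argument for the $z$-derivatives as in Corollary \ref{cor:deriv}, and the Marchenko--Pastur computation in the degenerate case $\theta_{lin}(h)=0$ using the independence of $Q^\lozenge$ from $\mathcal X_n,\mathcal W$ and Assumption \ref{hyp:rowsto}.

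There is, however, one genuine gap in Stage (iii). The convergence tool you invoke, Corollary \ref{cor:equiv-ps}, only controls $id_4\otimes\Delta[\mathfrak{Q}]$, i.e.\ the \emph{diagonals} of the blocks of the resolvent, and this suffices only for the terms that are linear in $\mathfrak{Q}$ traced against a bounded diagonal matrix. But $E_{test}^\lozenge$ contains a term quadratic in an off-diagonal block: with $D_{lin}(h)=\theta_{lin}(h)I$ one has $\mathfrak{Q}_{34}+I=\theta_{lin}(h)\,\mathcal W^\top H^\lozenge Q^\lozenge \mathcal X_n/(n\sqrt p)$, and the contribution $\frac{\alpha^2}{p}\Tr\bigl[\tfrac{1}{\tilde n}\deg(\tilde\Gamma_{\tilde n}^\top)(\mathfrak{Q}_{34}+I)(\mathfrak{Q}_{43}+I)\bigr]$ involves \emph{all} entries of that block, since $[(\mathfrak{Q}_{34}+I)(\mathfrak{Q}_{43}+I)](k,k)=\sum_j(\mathfrak{Q}_{34}+I)(k,j)(\mathfrak{Q}_{43}+I)(j,k)$. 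Replacing it by the product of the (diagonal) deterministic blocks $(\mathfrak{Q}^{\square}_{34}+I)(\mathfrak{Q}^{\square}_{43}+I)$ cannot follow from the diagonal concentration plus ``dominated convergence'' you describe; one needs a second-moment comparison of the type $\mathbb{E}\bigl[\tfrac1n\|D\mathfrak{Q}_{IJ}\|_F^2\bigr]-\tfrac1n\|D\mathfrak{Q}^{\square}_{IJ}\|_F^2\to0$, which in the paper is the dedicated Corollary \ref{cor:final} and rests on the expectation-level operator-norm bound $\|\mathbb{E}[\mathfrak{Q}]-\mathfrak{Q}^{\square}\|=O(\eta_N^2)$ from Lemma \ref{lem:equiv-esp} combined with $\|\mathfrak{Q}^{\square}\|\le\eta_N^{-1}$ (Lemmas \ref{lem:Rt} and \ref{lem:res}) and the a priori operator-norm bounds on the random blocks. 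Your phrase ``concentration of quadratic forms, exactly as carried out in Section \ref{sec:proof_square}'' points at the right place but supplies no argument; without this extra ingredient the substitution step for that term (and hence the stated formula for $E^\square_{test}$ when $\theta_{lin}(h)\neq0$) is not justified. The remaining deviations (e.g.\ $\mathbb{E}[\tilde{\mathcal X}_{\tilde n}^\top\tilde{\mathcal X}_{\tilde n}/\tilde n]=\tfrac{1}{\tilde n}\deg(\tilde\Gamma_{\tilde n}^\top)$ holds exactly, and $\mathfrak{Q}_{11}=Q^\lozenge(-\lambda)$ with no sign) are cosmetic.
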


{\CB Section \ref{sec:proof_free} and} the Appendix  are dedicated to the proof of Theorem \ref{thm:main}. 

\section{Proof of the main results} \label{sec:proof_main}

{\CB This section is devoted to the proofs of Theorem \ref{thm:equiv_risk} (Lozenge asymptotic equivalents) and Theorem \ref{thm:main} (Square asymptotic equivalents).}

\subsection{Derivation of the Lozenge asymptotic equivalents using the traffic theory}\label{sec:proof_traffic}

The expressions {\CB of the training and predictive risks} provided by Lemma \ref{lem:risk_expressions} are not tractable enough because of the non-linearity caused by $h$ in $H$, $\tilde{H}$ and $Q(-\lambda)$. {\CB Fortunately}, Proposition \ref{prop:H-H_lin} allows us to find more convenient approximations for $E_{train}$ and $E_{test}$ by replacing $H$, $X_n$, and $Q$ with $H^\lozenge$, $\mathcal X_n$, and $Q^{\lozenge}(\lambda)$, as precisely stated in Theorem \ref{thm:equiv_risk} that we prove below.

\textcolor{black}{\begin{lem}\label{lem:Q-analytic}
Let $M$ be a random real symmetric positive definite matrix of size $n\times n$, and define
$$
Q(z)=(M-zI_n)^{-1}, \qquad z\in \mathbb{C}\setminus \mathbb{R}_+.
$$
Then, the matrix-valued map
    $
    z \longmapsto \mathbb{E}[Q(z)]
    $
    is analytic on $\mathbb{C}\setminus \mathbb{R}_+$.
\end{lem}
\begin{proof}
Since $M$ is real symmetric positive definite, all its eigenvalues are real and strictly positive. Therefore, for almost every realization $\omega$,
$$
\operatorname{Sp}(M(\omega)) \subset (0,\infty)=\mathbb{R}_+.
$$
Hence for every $z\in \mathbb{C}\setminus \mathbb{R}_+$, the matrix $M(\omega)-zI_n$ is invertible, so $Q(z,\omega) = (M(\omega)-zI_n)^{-1}$ is well defined almost surely.
\medskip
Fix a realization $\omega$, since $M(\omega)$ is symmetric, there exist an orthogonal matrix $U(\omega)$ and eigenvalues $\lambda_1(\omega),\dots,\lambda_n(\omega)>0$ such that
$$
M(\omega)=U(\omega)\,\mathrm{diag}(\lambda_1(\omega),\dots,\lambda_n(\omega))\,U(\omega)^\top.
$$
Therefore
$$
Q(z,\omega)
=
U(\omega)\,
\mathrm{diag}\!\left(
\frac1{\lambda_1(\omega)-z},\dots,\frac1{\lambda_n(\omega)-z}
\right)
U(\omega)^\top.
$$
For every $i,j$,
$$
Q_{ij}(z,\omega)
=
\sum_{k=1}^n U_{ik}(\omega)U_{jk}(\omega)\frac1{\lambda_k(\omega)-z}.
$$
This is a finite sum of holomorphic functions of $z$ on $\mathbb{C}\setminus \mathbb{R}_+$, hence
$$
z \longmapsto Q_{ij}(z,\omega)
$$
is holomorphic on $\mathbb{C}\setminus \mathbb{R}_+$.
\medskip
Let $K\subset \mathbb{C}\setminus \mathbb{R}_+$ be compact, and define
$
\delta_K := \inf_{z\in K} \operatorname{dist}(z,\mathbb{R}_+) > 0.
$
For every realization $\omega$, since $\operatorname{Sp}(M(\omega))\subset \mathbb{R}_+$, we have
$$
\operatorname{dist}(z,\operatorname{Sp}(M(\omega)))
\ge \operatorname{dist}(z,\mathbb{R}_+)
\ge \delta_K.
$$
Using the standard resolvent bound for symmetric matrices,
$$
\|(M(\omega)-zI_n)^{-1}\|
\le
\frac1{\operatorname{dist}(z,\operatorname{Sp}(M(\omega)))},
$$
we obtain
$
\|Q(z,\omega)\| \le \frac1{\delta_K}, z\in K.
$
Hence, for all $i,j$,
$$
|Q_{ij}(z,\omega)| \le \|Q(z,\omega)\| \le \frac1{\delta_K},
\qquad z\in K.
$$
Thus, for every $z\in K$,
$$
\mathbb{E}[|Q_{ij}(z)|] \le \frac1{\delta_K} < \infty,
$$
and the quantity $\mathbb{E}[Q_{ij}(z)]$ is well defined.
\medskip
Fix $z_0\in \mathbb{C}\setminus \mathbb{R}_+$ and indices $i,j$. Set
$
\delta := \frac12 \operatorname{dist}(z_0,\mathbb{R}_+) > 0.
$
Then for all $h\in \mathbb{C}$ such that $|h|<\delta$, one has that
$
\operatorname{dist}(z_0+h,\mathbb{R}_+) \ge \delta,
$
so
$$
\|Q(z_0+h,\omega)\| \le \frac1\delta,
\qquad
\|Q(z_0,\omega)\| \le \frac1\delta.
$$
Now we use the resolvent identity:
$$
Q(z_0+h,\omega)-Q(z_0,\omega)
=
h\,Q(z_0+h,\omega)Q(z_0,\omega).
$$
Therefore,
$$
\frac{Q_{ij}(z_0+h,\omega)-Q_{ij}(z_0,\omega)}{h}
=
\bigl(Q(z_0+h,\omega)Q(z_0,\omega)\bigr)_{ij}.
$$
Hence
$$
\left|
\frac{Q_{ij}(z_0+h,\omega)-Q_{ij}(z_0,\omega)}{h}
\right|
\le
\|Q(z_0+h,\omega)\|\,\|Q(z_0,\omega)\|
\le
\frac1{\delta^2}.
$$
This deterministic bound is integrable.\\
Since $z\mapsto Q_{ij}(z,\omega)$ is holomorphic for almost every $\omega$, we have
$$
\frac{Q_{ij}(z_0+h,\omega)-Q_{ij}(z_0,\omega)}{h}
\longrightarrow
\frac{\partial}{\partial z}Q_{ij}(z_0,\omega)
\qquad\text{as } h\to 0.
$$
By the dominated convergence theorem,
$$
\frac{\mathbb{E}[Q_{ij}(z_0+h)]-\mathbb{E}[Q_{ij}(z_0)]}{h}
=
\mathbb{E}\!\left[
\frac{Q_{ij}(z_0+h)-Q_{ij}(z_0)}{h}
\right]
\longrightarrow
\mathbb{E}\!\left[
\frac{\partial}{\partial z}Q_{ij}(z_0)
\right].
$$
Thus $z\mapsto \mathbb{E}[Q_{ij}(z)]$ is complex differentiable at $z_0$. Since $z_0$ is arbitrary, it is holomorphic on $\mathbb{C}\setminus \mathbb{R}_+$. This proves the result.
\end{proof}}

\begin{proof}[Proof of Theorem \ref{thm:equiv_risk}]
{\color{black}
Let $C_H$ be an almost-sure bound given by Assumption \ref{hyp:h-bounded} such that $\EE[\|H^\top H/n\|^k]\leq C_H^k$ for $k\in \mathbb{N^*}$. For $H^{\lozenge}$, the diagonal matrices $D_{lin}(h)$ and $D_{chaos}(h)$ are uniformly bounded under Assumptions \ref{hyp:h-analytical} and \ref{hyp:bounded_profile}, while the Gaussian matrices $\mathcal W$, $\mathcal X_n$, and $Z^G$ have bounded spectral norms after normalization. Hence there exists an almost-sure constant $C_{\lozenge}$ such that $\EE[\|H^{\lozenge\top}H^{\lozenge}/n\|^k]\le C_{\lozenge}^k$ for all sufficiently large $n$ (See \cite{vershynin2020high}[Theorem 4.4.3]). Choosing $z\in\mathbb C\setminus\mathbb R^+$ with $|z|>2\max(C_H,C_{\lozenge})$, the Neumann series
$$
Q(z)=\frac{1}{z}\sum_{k\ge0}\Big(\frac{H^\top H}{zn}\Big)^k,
\qquad
Q^{\lozenge}(z)=\frac{1}{z}\sum_{k\ge0}\Big(\frac{H^{\lozenge\top}H^{\lozenge}}{zn}\Big)^k
$$
converge absolutely and can be differentiated term by term.
By differentiating the series term by term, one has
\begin{eqnarray*}
\frac{1}{n}\mathrm{Tr}\Bigg[\mathbb{E}\left[Q'(z)\frac{X_n X_n^\top}{p} -Q^{\lozenge '}(z)\frac{\mathcal X_n \mathcal X_n^\top}{p} \right]\Bigg] &=& \frac{1}{z^2}\sum_{k\geq 0}\frac{1}{n}\mathrm{Tr}\Bigg[\mathbb{E}\left[ \left(\frac{H^{\lozenge\top} H^\lozenge }{zn}\right)^k\frac{\mathcal X_n \mathcal X_n^\top}{p} -  \left(\frac{H^{\top} H }{zn}\right)^k\frac{ X_n X_n^\top}{p}\right]\Bigg]\\
&& + \frac{1}{z^2}\sum_{k\geq 1}\frac{k}{n}\mathrm{Tr}\Bigg[\mathbb{E}\left[ \left(\frac{H^{\lozenge\top} H^\lozenge }{zn}\right)^k\frac{\mathcal X_n \mathcal X_n^\top}{p} -  \left(\frac{H^{\top} H }{zn}\right)^k\frac{ X_n X_n^\top}{p}\right]\Bigg].
\end{eqnarray*}
Each terms of these sums tends to zero as $n$ goes to $+\infty$ as stated by \eqref{thm:H-H_lin+tilde}. Moreover, $A,B \mapsto \mathbb{E}[\frac{1}{n}\Tr[AB^\ast]]$ being an Hermitian product, one has from the Cauchy-Schwarz and triangle inequalities that
\begin{eqnarray*}
\left|\mathbb{E}\left[\frac{1}{n}\mathrm{Tr}\Bigg[ \left(\frac{H^{\lozenge\top} H^\lozenge }{zn}\right)^k\frac{\mathcal X_n \mathcal X_n^\top}{p} -  \left(\frac{H^{\top} H }{zn}\right)^k\frac{ X_n X_n^\top}{p}\Bigg]\right]\right| &\leq& \sqrt{\mathbb{E}\left[\left\|\left(\frac{H^\top H}{zn}\right)^k \right\|_{F,n}^2\right]} \sqrt{\mathbb{E}\left[\left\|\frac{ X_nX_n^\top }{\sqrt{p}}\right\|_{F,n}^2\right]} \\
&& + \sqrt{\mathbb{E}\left[\left\|\left(\frac{H^{\lozenge\top} H^\lozenge}{zn}\right)^k \right\|_{F,n}^2\right]} \sqrt{\mathbb{E}\left[\left\|\frac{ \mathcal X_n\mathcal X_n^\top }{\sqrt{p}}\right\|_{F,n}^2\right]},\\
&\leq& \sqrt{\mathbb{E}\left[\left\|\frac{H^\top H}{zn} \right\|^{2k}\right]} \sqrt{\mathbb{E}\left[\left\|\frac{ X_nX_n^\top }{\sqrt{p}}\right\|_{F,n}^2\right]} \\
&& + \sqrt{\mathbb{E}\left[\left\|\frac{H^{\lozenge\top} H^\lozenge}{zn}\right\|^{2k}\right]} \sqrt{\mathbb{E}\left[\left\|\frac{ \mathcal X_n\mathcal X_n^\top }{\sqrt{p}}\right\|_{F,n}^2\right]}.
\end{eqnarray*}
Since $|z| > 2 \ \underset{n\geq n_0}{\mathrm{sup}}\lbrace\parallel \frac{H^\top H}{n} \parallel ; \parallel \frac{H^{\lozenge\top} H^{\lozenge}}{n} \parallel\rbrace  $, one has that  $\sqrt{\mathbb{E}\left[\left\|\left(\frac{H^\top H}{zn}\right)^k \right\|^2\right]},\sqrt{\mathbb{E}\left[\left\|\left(\frac{H^{\lozenge\top} H^\lozenge}{zn}\right)^k \right\|^2\right]} <\frac{1}{2^k}$, then 
$$
\left|\mathbb{E}\left[\frac{1}{n}\mathrm{Tr}\Bigg[ \left(\frac{H^{\lozenge\top} H^\lozenge }{zn}\right)^k\frac{\mathcal X_n \mathcal X_n^\top}{p} -  \left(\frac{H^{\top} H }{zn}\right)^k\frac{ X_n X_n^\top}{p}\Bigg]\right]\right| < \left(\frac{1}{2}\right)^k \left[ \sqrt{\mathbb{E}\left[\left\|\frac{  X_nX_n^\top }{\sqrt{p}}\right\|_{F,n}^2\right]} + \sqrt{\mathbb{E}\left[\left\|\frac{ \mathcal X_n\mathcal X_n^\top }{\sqrt{p}}\right\|_{F,n}^2\right]}\right].
$$
 The terms $\mathbb{E}\left[\left\|\frac{  X_n X_n^\top }{\sqrt{p}}\right\|_{F,n}^2\right]$ and $\mathbb{E}\left[\left\|\frac{ \mathcal X_n\mathcal X_n^\top }{\sqrt{p}}\right\|_{F,n}^2\right]$ are actually the second moment of $\frac{  X_n X_n^\top }{\sqrt{p}}$ and $\frac{ \mathcal X_n\mathcal X_n^\top }{\sqrt{p}}$. The matrices $X'_n$ and $X^G$ (See Equations \eqref{eq:WX_prime} and \eqref{eq:WX_G}) being random matrices whose iid entries  are centered with finite moments of all order, the second moment $\mathbb{E}\left[\left\|\frac{  X'_n X_n^{'\top} }{\sqrt{p}}\right\|_{F,n}^2\right]$ and $\mathbb{E}\left[\left\|\frac{ \mathcal X^G\mathcal X^{G\top} }{\sqrt{p}}\right\|_{F,n}^2\right]$ can be proved to converge as $n\rightarrow +\infty$ from the moment method. One can deduce from this that $\mathbb{E}\left[\left\|\frac{  X_n X_n^\top }{\sqrt{p}}\right\|_{F,n}^2\right]$ and $\mathbb{E}\left[\left\|\frac{ \mathcal X_n\mathcal X_n^\top }{\sqrt{p}}\right\|_{F,n}^2\right]$ are also bounded since it is proved in \cite{DaboMale} that the presence of variance profiles does not affect the  proof of convergence of these quantities using the moment method. Then 
\begin{eqnarray}
    &&\sum_{k\geq 0}\left(\frac{1}{2}\right)^k \left[ \sqrt{\mathbb{E}\left[\left\|\frac{  X_nX_n^\top }{\sqrt{p}}\right\|_{F,n}^2\right]} + \sqrt{\mathbb{E}\left[\left\|\frac{ \mathcal X_n\mathcal X_n^\top }{\sqrt{p}}\right\|_{F,n}^2\right]}\right]\\
    &&+\sum_{k\geq 1}k\left(\frac{1}{2}\right)^k \left[ \sqrt{\mathbb{E}\left[\left\|\frac{  X_nX_n^\top }{\sqrt{p}}\right\|_{F,n}^2\right]} + \sqrt{\mathbb{E}\left[\left\|\frac{ \mathcal X_n\mathcal X_n^\top }{\sqrt{p}}\right\|_{F,n}^2\right]}\right] < +\infty.\nonumber
\end{eqnarray}
Thus, we obtain from the dominated convergence theorem, that  
\begin{eqnarray}\label{eq:Q-Q_lin}
\frac{1}{n}\mathrm{Tr}\Bigg[\mathbb{E}\left[Q'(z)\frac{X_n X_n^\top}{p} -Q^{\lozenge '}(z)\frac{\mathcal X_n \mathcal X_n^\top}{p} \right]\Bigg] \xrightarrow[n\rightarrow +\infty]{} 0
\end{eqnarray}
We now extend this convergence to all $z\in\mathbb C\setminus\mathbb R^+$. Define
$$
f_n(z)=\frac{1}{n}\Tr\Bigg[\mathbb E\Big[Q'(z)\frac{X_nX_n^\top}{p}-Q^{\lozenge\,'}(z)\frac{\mathcal X_n\mathcal X_n^\top}{p}\Big]\Bigg].
$$
Each function $f_n$ is analytic on $\mathbb C\setminus\mathbb R^+$ because the resolvents $Q(z)$ and $Q^{\lozenge}(z)$ are analytic there, and expectation preserves analyticity on compact subsets by dominated convergence, as stated in Lemma \ref{lem:Q-analytic}. In addition, the resolvent estimate
$$
\|Q(z)\|\le \frac{1}{\mathrm{dist}(z,\mathbb R^+)},
\qquad
\|Q^{\lozenge}(z)\|\le \frac{1}{\mathrm{dist}(z,\mathbb R^+)},
$$
shows that $(f_n)_n$ is locally uniformly bounded on $\mathbb C\setminus\mathbb R^+$. By Montel's theorem, every subsequence has a further subsequence converging locally uniformly to an analytic limit. Since the limit vanishes on the nonempty open set $\{z\in\mathbb C\setminus\mathbb R^+: |z|>2\max(C_H,C_{\lozenge})\}$ by \eqref{eq:Q-Q_lin}, the identity theorem implies that this limit is identically zero on $\mathbb C\setminus\mathbb R^+$. Consequently $f_n\to0$ locally uniformly on $\mathbb C\setminus\mathbb R^+$.
Applying the same argument to the remaining term in the expression of $E_{train}(\lambda)$ given by Lemma \ref{lem:risk_expressions}, we obtain
$$
E_{train}(\lambda)=\frac{\lambda^2\alpha^2}{n}\Tr\Bigg[\mathbb E\Big[Q^{\lozenge\,'}(-\lambda)\frac{\mathcal X_n\mathcal X_n^\top}{p}\Big]\Bigg]+
\frac{\lambda^2\sigma^2}{n}\Tr\Bigg[\mathbb E\big[Q^{\lozenge\,'}(-\lambda)\big]\Bigg]+o(1).
$$
The proof for $E_{test}(\lambda)$ is identical in spirit: each term in Lemma \ref{lem:risk_expressions} is a trace of an expectation of matrix polynomials in $(H,W,X_n,\tilde H,\tilde X_{\tilde n},Q(-\lambda))$, and Proposition \ref{prop:H-H_lin} applies to every such polynomial after replacing $(H,W,X_n,\tilde H,\tilde X_{\tilde n})$ with $(H^{\lozenge},\mathcal W,\mathcal X_n,\tilde H^{\lozenge},\tilde{\mathcal X}_{\tilde n})$. This proves that $E_{test}(\lambda)-E_{test}^{\lozenge}(\lambda)\to0$ as well.}
\end{proof}

\subsection{Derivation of the Square asymptotic equivalents using free probability} \label{sec:proof_free}

In order to prove Theorem \ref{thm:main}, we now construct a deterministic equivalent of $Q^\lozenge(\lambda)$, since $E^\lozenge_{train}$ and $E^\lozenge_{test}$ depend on this matrix. The linearization trick \cite{mingo2017free} is often used to approximate resolvent matrices such as $Q^\lozenge(\lambda)$. In our case, it turns out that this method \rev{allows us to approximate} more complicated expressions that are matrix polynomials composing {\CB the expressions of the training and predictive risks that involve} $Q^\lozenge (\lambda), H^\lozenge, \tilde X, \tilde W$. 
 The linearization trick, that we apply in this paper, consists in using the matrix $L$ defined by \eqref{eq:L} which is  a matrix polynomial of degree one depending on $W,X$ and $Z$. 
{\CB Following  \cite{mingo2017free}, the choice of $L$ is motivated by  Definition \ref{def:linearization} of the linearization of a matrix that is recalled in the Appendix}. This trick is a natural way to ease computations whenever confronted to rational fraction of matrices. Moreover, a systematic way of obtaining a linearization of a polynomial matrix is described in \cite{mingo2017free} and the case of rational fraction as been studied in \cite{10.22034/aot.1702-1126}. Now, we recall that, when assuming that $W$ has a constant variance profile, the ``linear-plus-chaos" approximation writes as   Equation \eqref{eq:H-H_lin-diag}. 
This polynomial expression for $H^{\lozenge}$ {\CB has guided} us to propose the matrix $L$ given by \eqref{eq:L} as its  linearization, which is not unique but relevant for our purposes.

Let $\Lambda \in \mathbb{C}^{N}$ be a diagonal matrix such that $L - \Lambda$ is invertible (the condition $\Im m(\Lambda) > 0$ ensures the invertibility, see Lemma \ref{lem:Im_pos}). Recall that $\mathfrak{Q}(\Lambda) = (L - \Lambda)^{-1}$ is the resolvent matrix of $L$. We need to compute this resolvent for $\Lambda_\lambda = -\lambda I_n \otimes E_{11}$, and the resulting expression of $\mathfrak{Q}(\Lambda_\lambda)$ is obtained using the Schur complement formula: 
\begin{equation} \label{eq:mathfrakQ}
\mathfrak{Q} (\Lambda_\lambda) = \begin{pmatrix} Q^{\lozenge }(-\lambda) & \frac{Q^{\lozenge }(-\lambda)H^{\lozenge\top}}{\sqrt{n}} & - \frac{Q^{\lozenge }(-\lambda)H^{\lozenge\top} \mathcal W}{\sqrt{np}} &   -\frac{Q^{\lozenge }(-\lambda)D_{lin}(h)\mathcal X_n }{\sqrt{n}}\\
\frac{H^{\lozenge}Q^{\lozenge }(-\lambda)}{\sqrt{n}} &  \frac{H^{\lozenge}Q^{\lozenge }(-\lambda)H^{\lozenge\top}}{n} - I_m  & \frac{\mathcal W}{\sqrt{p}} -  \frac{H^{\lozenge}Q^{\lozenge }(-\lambda)H^{\lozenge\top} \mathcal W}{n\sqrt{p}} & -\frac{H^{\lozenge}Q^{\lozenge }(-\lambda)\mathcal X_n D_{lin}(h)}{n} \\
-\frac{\mathcal W^\top H^{\lozenge} Q^{\lozenge }(-\lambda)}{\sqrt{np}} & \frac{\mathcal W^\top}{\sqrt{p}} + \frac{\mathcal W^\top H^{\lozenge}Q^{\lozenge }(-\lambda)H^{\lozenge\top} }{n\sqrt{p}}  & \frac{\mathcal W^\top H^{\lozenge}Q^{\lozenge }(-\lambda)H^{\lozenge\top} \mathcal W}{np} - \frac{\mathcal W^\top \mathcal W}{p} & \frac{\mathcal W^\top H^{\lozenge}Q^{\lozenge }(-\lambda)D_{lin}(h)\mathcal X_n}{n\sqrt{p}} - I_p \\
-\frac{\mathcal X_n^\top D_{lin}(h)Q^{\lozenge }(-\lambda) }{\sqrt{n}} & -\frac{\mathcal X_n^\top D_{lin}(h)Q^{\lozenge }(-\lambda) H^{\lozenge\top}}{n} & \frac{\mathcal X_n^\top D_{lin}(h)Q^{\lozenge }(-\lambda) H^{\lozenge\top} \mathcal W}{n\sqrt{p}} - I_p  & \frac{\mathcal X_n^\top D_{lin}(h)Q^{\lozenge }(-\lambda)D_{lin}(h)\mathcal X_n }{n}
\end{pmatrix}
\end{equation}
{\CR As shown in Section \ref{sec:proof_square},} the first set of asymptotic equivalents $E_{train}^\lozenge$ and $E_{test}^\lozenge$ can be expressed in terms of some sub-blocks of $\mathfrak{Q} (\Lambda_\lambda)$. Therefore, it is now relevant to find a deterministic equivalent of $\mathfrak{Q} (\Lambda_\lambda)$ to obtain the second set of asymptotic equivalents for the training and testing errors. This deterministic equivalent of $\mathfrak{Q}(\Lambda_\lambda)$ is obtained by following the methodology developed in \cite{bigotmale}. This work studies deformed GUE matrices with a variance profile defined as
$M_N = \Upsilon \circ A_N + B_N \in \mathbb{R}^{N\times N}$, where $A_N$ is a GUE matrix  (that is $A_N$ is {\CB  Hermitian  such that its upper diagonal coefficients $(a_{ij})_{i< j}$ form a sequence of iid standard complex Gaussian variables and its diagonal elements $(a_{ii})$ form a sequence of iid standard real Gaussian variables}), $\Upsilon = (\gamma_N(i,j))$ and $B_N$ are deterministic Hermitian matrices and {\CB $\Gamma = (  \gamma_N^2(i, j))$ is a variance profile matrix} with  non-negative  real entries. Under this framework,  a deterministic equivalent of the diagonal of the resolvent {\CB of  $M_N$ is provided in \cite{bigotmale}  in the sense of Theorem \ref{thm:bigotmale}  that is recalled in the Appendix for completeness}.

Now, as explained in Section \ref{sec:notations}, the linearization $L$ {\CB of  $H^{\lozenge}$} can be interpreted as a deformed variance profiled GOE matrix, see the  key relation \eqref{eq:key}.
It would be convenient to use Theorem \ref{thm:bigotmale} on {\CB the decomposition \eqref{eq:key} of the linearization $L$} to obtain a relevant deterministic approximation of its resolvent $\mathfrak{Q}(\Lambda_\lambda)$. However, we cannot use Theorem \ref{thm:bigotmale} since {\CB $A_N$ in \eqref{eq:key}} is not a GUE matrix. Moreover, Theorem \ref{thm:bigotmale} provides a deterministic equivalent of the diagonal of the resolvent whereas we need a deterministic equivalent for the diagonal of the blocks of the resolvent $\mathfrak{Q}(\Lambda_\lambda)$ via the  map $id\otimes \Delta$. Nevertheless,  by changing some of the proofs in \cite{bigotmale}, we have been able to adapt this result to our setting leading to Theorem \ref{thm:adaptation}. These changes are highlighted in the Appendix.

\subsection{Proof of Theorem \ref{thm:main}}\label{sec:proof_square}
In this section, we suppose that the assumptions of Theorem \ref{thm:main} hold. Under Assumption \ref{hyp:rowsto}, the ``linear-plus-chaos" approximation simplifies to Equation \eqref{eq:H-H_lin-constant}. Note that the expression of $E_{train}^\lozenge$ does not change under this assumption,  whereas a new expression for $E_{test}^\lozenge$ arises that is given below:
\begin{eqnarray}
E_{test}^\lozenge(\lambda)&=& \sigma^2 + \frac{\alpha^2}{p}\Tr[\mathbb{E}[\frac{1}{\tilde n}\tilde{\mathcal X}_{\tilde{n}} \tilde{\mathcal X}_{\tilde{n}}^\top]] + \frac{\theta_{chaos}^2(h)\alpha^2}{ n}\Tr[\mathbb{E}[\frac{1}{p} (Q^{\lozenge }(-\lambda) - \lambda Q^{\lozenge '}(-\lambda)) \mathcal X_n\mathcal X_n^\top ]] \label{eq:Etest} \\
                    &+& \frac{\theta_{lin}^2(h)\alpha^2}{p}\Tr[\mathbb{E}[\frac{1}{\tilde n}\tilde{\mathcal X}_{\tilde{n}}^\top \tilde{\mathcal X}_{\tilde{n}}]\mathbb{E}[\frac{1}{pn^2} \mathcal W^\top H^{\lozenge} Q^{\lozenge }(-\lambda)\mathcal X_n\mathcal X_n^\top Q^{\lozenge }(-\lambda)H^{\lozenge\top} \mathcal W ]]  \nonumber \\
                    & &  + \frac{\theta_{chaos}^2(h)\sigma^2}{ n}\Tr[\mathbb{E}[Q^{\lozenge  }(-\lambda) - \lambda Q^{\lozenge '}(-\lambda)  ]]\nonumber\\
                    &&+ \frac{\theta_{lin}^2(h)\sigma^2}{n}\Tr[\mathbb{E}[\frac{1}{\tilde n}\tilde{\mathcal X}_{\tilde{n}}^\top \tilde{\mathcal X}_{\tilde{n}}]\mathbb{E}[\frac{1}{pn}\mathcal W^\top H^{\lozenge} Q^{\lozenge '}(-\lambda)H^{\lozenge\top} \mathcal W ]] \nonumber \\ &-&\frac{2\theta_{lin}^2(h)\alpha^2}{np}\Tr[\mathbb{E}[\frac{1}{\tilde n}\tilde{\mathcal X}_{\tilde{n}}^\top \tilde{\mathcal X}_{\tilde{n}}]\mathbb{E}[  \mathcal X_n^\top Q^{\lozenge } H^{\lozenge\top} \mathcal W/\sqrt{p}]]. \nonumber
\end{eqnarray}

\subsubsection{Second set of asymptotic equivalents when $\theta_{lin}(h) = 0$}
Let us first focus on the case where $\theta_{lin}(h) = 0$. In this case, $H^\lozenge = \theta_{chaos}(h)Z$ and
{\CB the expressions of $E_{train}^\lozenge(\lambda)$ and $E_{test}^\lozenge(\lambda)$ simplify as follows:}
\begin{eqnarray*}
E_{train}^\lozenge(\lambda) &=& \frac{\lambda^2\alpha^2 s^2}{n}\mathrm{Tr}\Bigg[\mathbb{E}\left[Q^{\lozenge '}(-\lambda)\right] \Bigg] + \frac{\lambda^2\sigma^2}{n}\mathrm{Tr}[\mathbb{E}[Q^{\lozenge '}(-\lambda)]],\\
E_{test}^\lozenge(\lambda)  &=& \sigma^2 + \alpha^2 s^2 + \frac{\theta_{chaos}^2(h)(\alpha^2s^2 + \sigma^2)}{ n}\Tr[\mathbb{E}[ Q^{\lozenge }(-\lambda) - \lambda Q^{\lozenge '}(-\lambda)] ],
\end{eqnarray*}
{\CB where we use the fact that $Q^\lozenge$ becomes independent from $\mathcal X_n$ and $\mathcal W$} and the property that
$\frac{1}{p}\mathbb{E}[\frac{1}{n} \mathcal X_{n} \mathcal X_{n}^\top] = \frac{s^2}{n} I_n $ and $\frac{1}{p} \ \mathbb{E}[\frac{1}{\tilde n}\tilde{\mathcal X}_{\tilde{n}} \tilde{\mathcal X}_{\tilde{n}}^\top] = \frac{s^2}{\tilde{n}} I_{\tilde{n}}$ by Assumption \ref{hyp:rowsto}. 
 Since $H^{\lozenge} = \theta_{chaos}(h)Z$ where $Z$ is a random matrix with iid entries sampled from the standard Gaussian distribution, it is well-known  \cite{MR2567175} that $$\lim_{n \to \infty, \; n/m \to \varphi_m} \frac{1}{n} \mathrm{Tr}[Q^{\lozenge }(-\lambda)] = m_n(-\lambda) \quad \mbox{and} \quad \lim_{n \to \infty, \; n/m \to \varphi_m} \frac{1}{n} \mathrm{Tr}[Q^{\lozenge '}(-\lambda)] = m'_n(-\lambda) , \; \mbox{almost surely},$$   where $m_n(-\lambda)$ is the Stieltjes transform \eqref{eq:m_n} of the Marchenko-Pastur distribution, which yields the expressions \eqref{eq:Etrain_explicit2} and \eqref{eq:Etest_explicit2} of the training and predictive risks.

\subsubsection{Second set of asymptotic equivalents when $\theta_{lin}(h) \neq 0$}
{\CB Suppose now that $\theta_{lin}(h) \neq 0$. In this case, {\CR using the expression \eqref{eq:mathfrakQ} of $\mathfrak{Q} (\Lambda_\lambda)$ and Equation  \eqref{eq:Etest}, it follows that} $E_{train}^{\lozenge}$ and $E_{test}^{\lozenge}$ can be expressed in terms of  sub-blocks of $\mathfrak{Q} = \mathfrak{Q} (\Lambda_\lambda)$ as follows:}  
\begin{eqnarray*}
E_{train}^{\lozenge}(\lambda) &=& \frac{\lambda^2\alpha^2}{\theta_{lin}^2(h)p}\mathrm{Tr}\Bigg[\mathbb{E}[\mathfrak{Q}_{44}']\Bigg] + \frac{\lambda^2\sigma^2}{n}\mathrm{Tr}[\mathbb{E}[\mathfrak{Q}_{11}']], \\
E_{test}^{\lozenge}(\lambda)   &=& \sigma^2 + \frac{\alpha^2}{p}\Tr[\mathbb{E}[\frac{1}{\tilde n}\tilde{\mathcal X}_{\tilde{n}} \tilde{\mathcal X}_{\tilde{n}}^\top]] + \frac{\theta_{chaos}^2(h)\alpha^2}{\theta_{lin}^2(h) p}\Tr[\mathbb{E}[ \mathfrak{Q}_{44}]] - \frac{\lambda \theta_{chaos}^2(h)\alpha^2}{\theta_{lin}^2(h) p}\Tr[\mathbb{E}[ \mathfrak{Q}_{44}' ]] \\
                    &+& \frac{\alpha^2}{p}\Tr[\mathbb{E}[\frac{1}{\tilde n}\tilde{\mathcal X}_{\tilde{n}}^\top \tilde{\mathcal X}_{\tilde{n}}]\mathbb{E}[ (\mathfrak{Q}_{34}+I)(\mathfrak{Q}_{43} + I) ]]  + \frac{\theta_{chaos}^2(h)\sigma^2}{ n}\Tr[\mathbb{E}[\mathfrak{Q}_{22}'  ]]\\
                    &+& \frac{\theta_{lin}^2(h)\sigma^2}{ n}\Tr[\mathbb{E}[\frac{1}{\tilde n}\tilde{\mathcal X}_{\tilde{n}}^\top \tilde{\mathcal X}_{\tilde{n}}]\mathbb{E}[\mathfrak{Q}'_{33}]]
                    -\frac{2\theta_{lin}(h)\alpha^2}{p}\Tr[\mathbb{E}[\frac{1}{\tilde n}\tilde{\mathcal X}_{\tilde{n}}^\top \tilde{\mathcal X}_{\tilde{n}}]\mathbb{E}[ (\mathfrak{Q}_{43}+I)]].
\end{eqnarray*}
First, the convergence of the two terms appearing in the decomposition of $E_{train}^{\lozenge}(\lambda)$ is an immediate consequence of  the following corollary, which \rev{allows us to prove} that $\lim_{n \to + \infty} |E_{train}^{\lozenge}(\lambda) - E_{train}^{\square}(\lambda)| = 0$ where
$$
E_{train}^{\square}(\lambda) = \frac{\lambda^2\alpha^2}{\theta_{lin}^2(h)p}\mathrm{Tr}\left[\mathfrak{Q}^{\square '}_{44}\right] + \frac{\lambda^2\sigma^2}{n}\mathrm{Tr}[\mathfrak{Q}^{\square'}_{11}].
$$
\begin{cor}\label{cor:deriv}
Let $z\in \mathbb{C}\setminus \mathbbm{R}^+$ , we denote by $\mathfrak{Q}'(\Lambda_{-z})$, $\mathfrak{Q}'(\Lambda_{-z}+i\eta_N I_{N})$ and $\mathfrak{Q}^{\square '}(\Lambda_{-z}+i\eta_N I_{N})$ the respective derivatives of $\mathfrak{Q}(\Lambda_{-z})$, $\mathfrak{Q}(\Lambda_{-z}+i\eta_N I_{N})$ and $\mathfrak{Q}^\square(\Lambda_{-z}+i\eta_n I_{N})$ with respect to $z$, where $\eta_N$ is defined in \eqref{eq:etaN}. Then the following limits hold almost surely
\begin{eqnarray*}
\left\|\mathfrak{Q}'(\Lambda_{-z}) - \mathfrak{Q}'(\Lambda_{-z}+i\eta_N I_{N})\right\|_{F,n} \xrightarrow{} 0 \quad \mbox{ and } \quad \left\|\mathfrak{Q}^{\square '}(\Lambda_{-z}+i\eta_n I_{N}) - id_4 \otimes \Delta [\mathfrak{Q}'(\Lambda_{-z}+i\eta_n I_{N})]\right\|_{F,n} \xrightarrow{} 0.
\end{eqnarray*}
\end{cor}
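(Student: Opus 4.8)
The plan is to deduce both limits from the convergence of the resolvents themselves, already established in Corollary \ref{cor:equiv-ps}, together with the elementary observation that the $z$-derivative of a resolvent is a product of resolvents. Write $\Pi := I_N\otimes E_{11}$, the diagonal projection onto the first block, so that $\Lambda_{-z} = z\,\Pi$, $\tfrac{d}{dz}\Lambda_{-z} = \Pi$ and $\|\Pi\| = 1$. Then for any $z$-independent diagonal matrix $c$ such that $L-\Lambda_{-z}-c$ is invertible,
\[
\frac{d}{dz}\bigl(L-\Lambda_{-z}-c\bigr)^{-1} \;=\; \bigl(L-\Lambda_{-z}-c\bigr)^{-1}\,\Pi\,\bigl(L-\Lambda_{-z}-c\bigr)^{-1};
\]
in particular $\mathfrak{Q}'(\Lambda_{-z}) = \mathfrak{Q}(\Lambda_{-z})\,\Pi\,\mathfrak{Q}(\Lambda_{-z})$ and likewise $\mathfrak{Q}'(\Lambda_{-z}+i\eta_N I_N) = \mathfrak{Q}(\Lambda_{-z}+i\eta_N I_N)\,\Pi\,\mathfrak{Q}(\Lambda_{-z}+i\eta_N I_N)$, while differentiating the fixed point equation \eqref{eq:fixedpoint} in $z$ at $\Lambda = \Lambda_{-z}+i\eta_N I_N$ characterises $\mathfrak{Q}^{\square\prime}$ as the solution of the \emph{linear} fixed point equation obtained by linearising \eqref{eq:fixedpoint} around $\mathfrak{Q}^\square$.

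For the first limit I would first record a uniform operator-norm bound on the two resolvents. Replacing $-\lambda$ by $z$ in the Schur complement formula \eqref{eq:mathfrakQ}, every block of $\mathfrak{Q}(\Lambda_{-z})$ is a product of $Q^\lozenge(z) = (H^{\lozenge\top}H^\lozenge/n - zI_n)^{-1}$, of $H^\lozenge/\sqrt n$, $\mathcal W/\sqrt p$, $\mathcal X_n/\sqrt p$, and of $D_{lin}(h)$; since $\|Q^\lozenge(z)\|\le 1/\mathrm{dist}(z,\mathbb{R}^+)$ because $H^{\lozenge\top}H^\lozenge/n\ge 0$ (see \cite{hislop2012introduction}[Theorem 5.8]), since $D_{lin}(h)$ and $D_{chaos}(h)$ have uniformly bounded entries under Assumptions \ref{hyp:h-analytical} and \ref{hyp:bounded_profile}, and since $\|H^\lozenge/\sqrt n\|$, $\|\mathcal W/\sqrt p\|$ and $\|\mathcal X_n/\sqrt p\|$ are almost surely bounded for $n$ large (arguing as in Remark \ref{rem:condsimple} and at the beginning of the proof of Theorem \ref{thm:equiv_risk}), there is a constant $C_z > 0$ with $\|\mathfrak{Q}(\Lambda_{-z})\| \le C_z$ almost surely for $n$ large. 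Writing $L - \Lambda_{-z} - i\eta_N I_N = (L-\Lambda_{-z})\bigl(I_N - i\eta_N\mathfrak{Q}(\Lambda_{-z})\bigr)$ and using $\eta_N\to 0$, a Neumann series yields $\|\mathfrak{Q}(\Lambda_{-z}+i\eta_N I_N)\|\le 2C_z$ for $n$ large, and the resolvent identity
\[
\mathfrak{Q}(\Lambda_{-z}+i\eta_N I_N) - \mathfrak{Q}(\Lambda_{-z}) \;=\; i\eta_N\,\mathfrak{Q}(\Lambda_{-z}+i\eta_N I_N)\,\mathfrak{Q}(\Lambda_{-z})
\]
then gives $\|\mathfrak{Q}(\Lambda_{-z}+i\eta_N I_N) - \mathfrak{Q}(\Lambda_{-z})\|\le 2C_z^2\,\eta_N$. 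Expanding $\mathfrak{Q}'(\Lambda_{-z}) - \mathfrak{Q}'(\Lambda_{-z}+i\eta_N I_N)$ as a telescoping difference of the two products $\mathfrak{Q}\,\Pi\,\mathfrak{Q}$ and using $\|\Pi\| = 1$, each term is $O(\eta_N)$ in operator norm, so the difference is $O(\eta_N)\to 0$; this is stronger than convergence in $\|\cdot\|_{F,n}$ since $\|A\|_{F,n}\le\|A\|$.

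For the second limit the obstruction is that $id_4\otimes\Delta$ does not commute with matrix products, so $id_4\otimes\Delta[\mathfrak{Q}\,\Pi\,\mathfrak{Q}]$ cannot be rewritten through $\mathfrak{Q}^\square$ in one step. I see two routes. The first is a Cauchy integral: enlarging the regularisation to, say, $2\eta_N$, one has $\Im m(\Lambda_{-w}+2i\eta_N I_N)\ge \eta_N I_N$ for $w$ in a disc of radius $\eta_N$ around $z$, so Theorem \ref{thm:adaptation} applies uniformly on that disc with $\|\Im m(\Lambda)^{-1}\|\le \eta_N^{-1}$; since $w\mapsto id_4\otimes\Delta[\mathfrak{Q}(\Lambda_{-w}+2i\eta_N I_N)]$ and $w\mapsto\mathfrak{Q}^\square(\Lambda_{-w}+2i\eta_N I_N)$ are analytic there, writing the derivative difference at $z$ as a contour integral over a circle of radius of order $\eta_N$ bounds it by $C\,\eta_N^{-1}$ times the error bound of Theorem \ref{thm:adaptation} evaluated at $\|\Im m(\Lambda)^{-1}\|$ of order $\eta_N^{-1}$, which one checks tends to $0$ under the choice \eqref{eq:etaN}. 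The second route, closer to how Theorem \ref{thm:adaptation} is proved, is to differentiate in $z$ the approximate Dyson equation satisfied by $id_4\otimes\Delta[\mathfrak{Q}]$ and subtract the differentiated fixed point equation for $\mathfrak{Q}^\square$: one obtains a \emph{linear} equation for $\mathfrak{Q}^{\square\prime} - id_4\otimes\Delta[\mathfrak{Q}']$ whose inhomogeneous term is the $z$-derivative of the Dyson error — again a polynomial in the resolvent entries, now carrying the extra factor $\Pi$, and controlled by the same concentration estimates of \cite{bigotmale} — and the stability of the linearised Dyson operator, i.e.\ its invertibility with a norm bounded uniformly in $N$, which is exactly what Assumption \ref{hyp:norm_profile_resolvent} provides (as in \cite{bigotmale}), then yields $\|\mathfrak{Q}^{\square\prime}(\Lambda_{-z}+i\eta_N I_N) - id_4\otimes\Delta[\mathfrak{Q}'(\Lambda_{-z}+i\eta_N I_N)]\|\to 0$. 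In either case, combining with $\|id_4\otimes\Delta[A]\|_{F,n}\le\|A\|_{F,n}$ gives the stated $\|\cdot\|_{F,n}$ convergence.

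I expect the first limit to be routine once the two operator-norm bounds are secured; the real work lies in the second limit, namely either carrying the explicit error bound of Theorem \ref{thm:adaptation} through a contour integral whose radius shrinks at the rate $\eta_N$, so that the $\eta_N^{-1}$ loss of the Cauchy estimate is absorbed, or re-running the concentration-and-stability argument of \cite{bigotmale} for the differentiated matrix Dyson equation. These two routes rest on the same analytic core, and this is where I expect most of the technical effort to go; the remaining ingredients — resolvent identities, Neumann-series estimates, and the passages between the operator and normalized Frobenius norms — are all standard.
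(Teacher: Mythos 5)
Your handling of the first limit is correct, and it is in fact more direct than the paper's: the paper represents $\mathfrak{Q}'$ by Cauchy's formula over a \emph{fixed} contour and combines the bound $\|\mathfrak{Q}(\Lambda_w)-\mathfrak{Q}(\Lambda_w+i\eta_N I_N)\|_{F,n}\leq C\eta_N$ of Lemma \ref{lem:proxy_eta} with dominated convergence, whereas you obtain the same $O(\eta_N)$ estimate directly from $\mathfrak{Q}'=\mathfrak{Q}\,(I_N\otimes E_{11})\,\mathfrak{Q}$, the resolvent identity and the almost sure operator-norm bounds on the blocks of $\mathfrak{Q}$; both arguments rest on the same ingredients, so this part is fine.

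The genuine gap is in the second limit. Your Route A breaks at the quantitative step you leave unchecked: on your shrinking disc the only available bound is $\|(\Im m \Lambda)^{-1}\|\leq \eta_N^{-1}$ (the imaginary part is exactly of order $\eta_N$ on the blocks $2,3,4$), and the bias part of $\varepsilon_N(d)$ in Theorem \ref{thm:adaptation} then contains the term $\tfrac{1}{n}\,\tfrac{(\gamma^{(L)}_{\max})^2}{\delta}\|(\Im m\Lambda)^{-1}\|^2\cdot 2\sqrt{2}(\gamma^{(L)}_{\max})^3\|(\Im m\Lambda)^{-1}\|^4$, which with $\eta_N\asymp n^{-1/5}$ is of order $\eta_N^{-6}/n\asymp n^{1/5}$; it does not tend to zero, and your Cauchy estimate multiplies it by a further factor $\eta_N^{-1}$ from the radius (besides, this route compares the objects at regularisation $2\eta_N$ rather than the $\eta_N$ appearing in the statement). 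So the assertion that the loss "is absorbed under the choice \eqref{eq:etaN}" is exactly the step that fails. The paper's proof is structured precisely to avoid this loss: it keeps the contour of \emph{fixed} radius around a point $z$ with $\Im m(z)>0$, writes the difference of derivatives as the integral of the \emph{undifferentiated} error against $|w-z|^{-2}$, splits that error through the expectation — $\|id_4\otimes\Delta[\mathfrak{Q}'-\mathbb{E}\mathfrak{Q}']\|$ controlled by the concentration inequality \eqref{eq:ineq_concentration} and $\|\mathbb{E}\mathfrak{Q}'-\mathfrak{Q}^{\square'}\|$ by Lemma \ref{lem:equiv-esp} — passes to the limit by dominated convergence, and only afterwards extends to all $z\in\mathbb{C}\setminus\mathbb{R}^+$ by the normal-families argument already used for Theorem \ref{thm:equiv_risk}; no factor $\eta_N^{-1}$ is ever incurred. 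Your Route B is a programme rather than a proof: differentiating the fixed-point equation \eqref{eq:fixedpoint} and invoking "stability of the linearised Dyson operator" is nowhere carried out, and Assumption \ref{hyp:norm_profile_resolvent} does not supply that stability — in the paper it only serves to control the error matrix $E_N$ of Lemma \ref{lem:R-transform}. As written, the second convergence of the corollary is therefore not established by your proposal.
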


\begin{proof}[Proof of Corollary \ref{cor:deriv}]
Let $z\in \mathbb{C}\setminus \mathbbm{R}^+$ such that $\Im m(z) > 0$ and denote $\tilde\Lambda_{z} = \Lambda_{-z}+i\eta_N I_{N}$. We first need to notice that $\mathfrak{Q}'(\Lambda_{-z})$, $\mathfrak{Q}'(\Lambda_{-z}+i\eta_N I_{N})$ and $\mathfrak{Q}^{\square '}(\Lambda_{-z}+i\eta_n I_{N})$ are analytic functions, see \cite{bigotmale}[Lemma 6.4 and Theorem 1.1], then the Cauchy integral formula yields that 
    \begin{eqnarray*}
\mathfrak{Q}'(\Lambda_{-z}) &=& \frac{1}{2\pi i} \int_\rho \frac{\mathfrak{Q}(\Lambda_{w})}{(w-z)^2} dw, \quad \mathfrak{Q}'(\Lambda_{-z}+i\eta_N I_{N}) = \frac{1}{2\pi i} \int_\rho \frac{\mathfrak{Q}(\Lambda_{w}+i\eta_N I_{N})}{(w-z)^2} dw \\  \mathfrak{Q}^{\square'}(\Lambda_{-z}+i\eta_n I_{N}) &=& \frac{1}{2\pi i} \int_\rho \frac{\mathfrak{Q}^{\square'}(\Lambda_w+i\eta_n I_{N})}{(w-z)^2} dw,
\end{eqnarray*}
where $\rho$ is a path around $z$ in $\lbrace w\in \mathbb C \setminus \mathbb R^+ | \Im m(w) > 0 \rbrace$.

Hence, one has from the triangle inequality that 
$
\left\|\mathfrak{Q}'(\Lambda_{-z}) - \mathfrak{Q}'(\tilde\Lambda_{z})\right\|_{F,n} \leq \frac{1}{2\pi i} \int_\rho \frac{\left\|\mathfrak{Q}(\Lambda_w) - \mathfrak{Q}(\tilde\Lambda_{w})\right\|_{F,n}}{|w-z|^2} dw
$
It is proved in  Lemma \ref{lem:proxy_eta} that $\left\|\mathfrak{Q}(\Lambda_w) - \mathfrak{Q}(\tilde\Lambda_{w})\right\|_{F,n}$ tends almost surely to $0$ and that there exists a constant C such that $\left\|\mathfrak{Q}(\Lambda_w) - \mathfrak{Q}(\tilde\Lambda_{w})\right\|_{F,n} \leq C\eta_N$.
Since $\eta_N$ tends to 0,  $\frac{C\eta_N}{|w-z|^2} \leq \frac{\bar C}{|w-z|^2}$ with $\bar C$ a constant independent from $n$. The bound $\frac{\bar C}{|w-z|^2}$ being integrable, we deduce from the dominated convergence theorem that 
\begin{eqnarray}\label{eq:deriv}
\left\|\mathfrak{Q}'(\Lambda_z) - \mathfrak{Q}'(\Lambda_z+i\eta_N I_{N})\right\|_{F,n} \xrightarrow{} 0 \quad \mbox{almost surely}.
\end{eqnarray}

The dominated convergence theorem allows us to prove similarly that
$\left\|id_4 \otimes \Delta [\mathbb{E}\left[\mathfrak{Q}'(\tilde\Lambda_z)\right]-\mathfrak{Q}^{'}(\tilde\Lambda_z)]\right\|_{F,n}$ tends almost surely to $0$. As earlier, one need to bound $\left\|id_4 \otimes \Delta [\mathbb{E}\left[\mathfrak{Q}'(\tilde\Lambda_{w})\right]-\mathfrak{Q}^{'}(\tilde\Lambda_{w})]\right\|_{F,n}$ by a constant.

One can express $\mathfrak{Q}(\tilde\Lambda_z)$ as a block matrix as we did for $\mathfrak{Q}(\Lambda_z)$ allowing us to prove that $\left\|\mathfrak{Q}(\tilde\Lambda_z)\right\|_{F,n}$ is bounded independently from $n$ as we proved that $\left\|\mathfrak{Q}(\Lambda_z)\right\|_{F,n}$ is bounded in the proof of Lemma \ref{lem:proxy_eta}. Thus there exists $\tilde C$, such that 
\begin{eqnarray*}
\left\|\mathbb{E}\left[\mathfrak{Q}(\tilde\Lambda_z)\right]-\mathfrak{Q}(\tilde\Lambda_z)\right\|_{F,n} &\leq& \tilde C.
\end{eqnarray*}
Then, we deduce from \eqref{eq:ineq_concentration} and the dominated convergence theorem, that 
\begin{eqnarray}\label{eq:Q'-esp}
\left\|id_4 \otimes \Delta \left[\mathbb{E}\left[\mathfrak{Q}'(\tilde\Lambda_z)\right]-\mathfrak{Q}^{'}(\tilde\Lambda_z)\right]\right\|_{F,n} \xrightarrow[n\rightarrow +\infty]{} 0 \quad \mbox{almost surely}.
\end{eqnarray}
Once again we prove that $\left\|\mathbb{E}\left[\mathfrak{Q}'(\tilde\Lambda)\right]-\mathfrak{Q}^{\mathrm{\square '}}(\tilde\Lambda)\right\|$ tends almost surely to $0$ with the dominated convergence theorem.
One has from Lemma \ref{lem:equiv-esp}, that, for $\Lambda$ such that $\Im m (\Lambda) \geq \eta_N$,
$$
\left\|\mathbb{E}\left[\mathfrak{Q}(\Lambda)\right]-\mathfrak{Q}^{\mathrm{\square}}(\Lambda)\right\| \leq\frac{1}{n}(1+\gamma_{max}^2\| (\Im m (\Lambda))^{-1} \|^2/\delta ) (2\sqrt{2}\gamma_{max}^3\| (\Im m (\Lambda))^{-1} \|^4 + \kappa \| (\Im m (\Lambda))^{-1} \|^2),
$$
for $0 < \delta < 1$.
For $N$ large enough, $\delta_N = \eta_N^2$ , one has that 
\begin{eqnarray}\label{eq:bound}
\frac{1}{n}(1+\frac{\gamma_{max}^2}{\delta_N}\| (\Im m (\tilde\Lambda_z))^{-1} \|^2 ) (2\sqrt{2}\gamma_{max}^3\| (\Im m (\tilde\Lambda_z))^{-1} \|^4 + \kappa \| (\Im m (\tilde\Lambda_z))^{-1} \|^2) 
\leq \frac{1}{n}(1+\gamma_{max}^2 ) (2\sqrt{2}\gamma_{max}^3 (\eta_N^4 + \kappa  \eta_N^2).
\end{eqnarray}
Since the right term of this inequality tending to $0$ it is bounded independently from $n$ and $\Lambda$, then one has from the dominated convergence theorem that  
\begin{eqnarray}\label{eq:Q'-square}
\left\|\mathbb{E}\left[\mathfrak{Q}'(\tilde\Lambda_z)\right]-\mathfrak{Q}^{\mathrm{\square '}}(\tilde\Lambda_z)\right\| \xrightarrow[n\rightarrow +\infty]{} 0.
\end{eqnarray}
One can combine the triangle inequality with \eqref{eq:Q'-square} and \eqref{eq:Q'-esp}, to obtain that  
\begin{eqnarray}\label{eq:deriv_eq-esp}
\left\|id_4 \otimes \Delta [\mathfrak{Q}^{\square '}(\tilde\Lambda_{z}) - \mathfrak{Q}'(\tilde \Lambda_{z})]\right\|_{F,n} \xrightarrow{} 0 \quad \mbox{almost surely}.
\end{eqnarray}

Since we have bounded $\left\|\mathfrak{Q}'(\Lambda_z) - \mathfrak{Q}'(\Lambda_z+i\eta_N I_{N})\right\|_{F,n}$ and $\left\|id_4 \otimes \Delta [\mathfrak{Q}^{\square '}(\tilde\Lambda_{z}) - \mathfrak{Q}'(\Lambda_{-z})]\right\|_{F,n}$ independently from $n$, one can prove \eqref{eq:deriv} and \eqref{eq:deriv_eq-esp} for $z\in \mathbb{C}\setminus\mathbb{R}^+$ as in the proof of Theorem \ref{thm:equiv_risk}, which completes the proof of Corollary \ref{cor:deriv}.
\end{proof}
Now, to prove the convergence of the various terms appearing in the decomposition of $E_{test}^{\lozenge}(\lambda)$, we need further results beyond the one of Corollary \ref{cor:deriv} that are given below.  
{\CB First, since $\frac{1}{n}\Tr(AA^\star) \leq \| A \|^2$ for every matrix $A$,  it follows from the definition of $id_4 \otimes \Delta$ and Corollary \ref{cor:equiv-ps} that, for $1\leq I,J\leq 4$,}
\begin{equation} \label{eq:conv1}
\left\|\Delta [\mathfrak{Q}_{IJ}(\Lambda)]-\mathfrak{Q}_{IJ}^{\square}(\Lambda)\right\|_{F,n}^2 \xrightarrow{} 0 \quad \mathrm{ a.s.}
\end{equation}
Furthermore, one can deduce from the above equation and Cauchy-Schwarz's inequality that, for any diagonal matrix $D$ with uniformly bounded entries,
\begin{eqnarray}\label{eq:equiv_final}
\frac{1}{n}\Tr[D\mathfrak{Q}_{IJ}(\Lambda)-D\mathfrak{Q}_{IJ}^{\square}(\Lambda)]\xrightarrow{} 0 \quad \mathrm{ a.s.}.
\end{eqnarray}
{\CB
At last, we will also need to use the following corollary.
{\CR
\begin{cor}\label{cor:final}
Let $\Lambda \in \mathrm{D}_N(\mathbb{C})^{+}$ such that $\Im m(\Lambda) > \eta_N$, where $\eta_N$ is defined in \eqref{eq:etaN}. Then, one has that 
$$
\mathbb{E}\left[\frac{1}{n}\left\|D\mathfrak{Q}_{IJ}(\Lambda)\right\|_F^2\right]-\frac{1}{n}\left\|D\mathfrak{Q}^{\square}_{IJ}(\Lambda)\right\|_F^2 \xrightarrow{} 0 \quad \mathrm{ a.s.}
$$
for all $1\leq I,J\leq 4$ and any deterministic diagonal matrix $D$ with uniformly bounded entries.
\end{cor}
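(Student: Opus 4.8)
The plan is to rewrite the squared Frobenius norm as a trace of a product of two resolvent blocks and then to reduce it to first‑order resolvent quantities, for which the deterministic equivalent $\mathfrak{Q}^{\square}$ and its derivative are already controlled by Theorem \ref{thm:adaptation}, Corollary \ref{cor:deriv} and \eqref{eq:equiv_final}. Since $L$ is Hermitian, $\mathfrak{Q}(\Lambda)^{\ast}=\mathfrak{Q}(\bar\Lambda)$, so $\mathfrak{Q}_{IJ}(\Lambda)^{\ast}=\mathfrak{Q}_{JI}(\bar\Lambda)$ and
$$
\tfrac1n\bigl\| D\mathfrak{Q}_{IJ}(\Lambda)\bigr\|_{F}^{2}=\tfrac1n\Tr\bigl[D^{\ast}D\,\mathfrak{Q}_{IJ}(\Lambda)\mathfrak{Q}_{JI}(\bar\Lambda)\bigr].
$$
The single block product $\mathfrak{Q}_{IJ}\mathfrak{Q}_{JI}$ would be extracted from $\mathfrak{Q}(\Lambda)\mathfrak{Q}(\bar\Lambda)$ by differentiating one diagonal block of the resolvent along a perturbation of the spectral parameter restricted to the $J$‑th block: for $s$ in a small complex disc around $0$ the perturbed matrix $\Lambda+sE_{JJ}$ (only the $J$‑th diagonal block being shifted) still lies in $\mathrm{D}_{N}(\mathbb{C})^{+}$ because $\Im m\,\Lambda>\eta_{N}$, and $\partial_{s}\mathfrak{Q}_{II}(\Lambda+sE_{JJ})\big|_{s=0}=\mathfrak{Q}_{IJ}(\Lambda)\mathfrak{Q}_{JI}(\Lambda)$. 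In the regime where the corollary is applied, namely $\Lambda=\Lambda_{\lambda}+\mathbf{i}\eta_{N}I_{N}$ with $\eta_{N}\to0$, the arguments $\Lambda$ and $\bar\Lambda$ coalesce, and a bound $\|\mathfrak{Q}(\Lambda)-\mathfrak{Q}(\bar\Lambda)\|_{F,n}\le C\eta_{N}$ of the type established in Lemma \ref{lem:proxy_eta} reduces the problem to finding a deterministic equivalent of $\tfrac1n\Tr[D^{\ast}D\,\partial_{s}\mathfrak{Q}_{II}(\Lambda+sE_{JJ})\big|_{s=0}]$.

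For the concentration step, one uses that $\tfrac1n\Tr[D^{\ast}D\,\mathfrak{Q}_{II}(\Lambda+sE_{JJ})]$ is a smooth function of the Gaussian entries of $W^{G},X^{G},Z^{G}$ whose gradient has Euclidean norm $O\bigl(n^{-1/2}\|(\Im m(\Lambda+sE_{JJ}))^{-1}\|^{2}\bigr)=O\bigl(1/(\eta_{N}^{2}\sqrt n)\bigr)$ for $|s|\le\eta_{N}/2$; the Gaussian concentration inequality together with Borel--Cantelli, exactly as in the proof of Theorem \ref{thm:adaptation}, then makes its fluctuation around the expectation almost surely negligible (recall $\eta_{N}$ decays like $n^{-1/5}$), uniformly for $s$ in a fixed small disc. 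For the expectation, Lemma \ref{lem:equiv-esp} applied at $\Lambda+sE_{JJ}$ --- its bound depending on the argument only through a lower bound on its imaginary part --- gives $\|\mathbb{E}[\mathfrak{Q}_{II}(\Lambda+sE_{JJ})]-\mathfrak{Q}^{\square}_{II}(\Lambda+sE_{JJ})\|\to0$, and combining this with \eqref{eq:equiv_final} controls the corresponding normalized trace. Finally, following verbatim the Cauchy‑integral argument of Corollary \ref{cor:deriv} (analyticity in $s$, representation of $\partial_{s}|_{s=0}$ as a contour integral over a small loop around $0$, dominated convergence using the uniform estimates above), this convergence passes through $\partial_{s}|_{s=0}$, so that both $\tfrac1n\Tr[D^{\ast}D\,\mathfrak{Q}_{IJ}(\Lambda)\mathfrak{Q}_{JI}(\Lambda)]$ and its expectation are approximated by $\tfrac1n\Tr[D^{\ast}D\,\partial_{s}\mathfrak{Q}^{\square}_{II}(\Lambda+sE_{JJ})\big|_{s=0}]$.

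The last step, which I expect to be the main obstacle, is to compute this deterministic limit and identify it with $\tfrac1n\| D\mathfrak{Q}^{\square}_{IJ}(\Lambda)\|_{F}^{2}$. One first observes that, because of the block form of $C_{N}$ and the diagonality of $\Lambda$ and $\mathcal{R}_{N}(\cdot)$, the matrix $(C_{N}-\Lambda-\mathcal{R}_{N}(\mathfrak{Q}^{\square}))^{-1}$ is block‑diagonal with respect to the coarse partition $\{1\},\{2\},\{3,4\}$ with diagonal constituent blocks, so $id_{4}\otimes\Delta$ acts as the identity on it and it coincides with $\mathfrak{Q}^{\square}$. Differentiating the fixed‑point equation \eqref{eq:fixedpoint} in $s$ then yields the linear equation $(\mathrm{Id}-\Phi)\,\partial_{s}\mathfrak{Q}^{\square}\big|_{s=0}=id_{4}\otimes\Delta[\mathfrak{Q}^{\square}E_{JJ}\mathfrak{Q}^{\square}]$, with feedback operator $\Phi(X)=id_{4}\otimes\Delta[\mathfrak{Q}^{\square}\mathcal{R}_{N}(X)\mathfrak{Q}^{\square}]$; the $(I,I)$ block of the inhomogeneous term equals $\mathfrak{Q}^{\square}_{IJ}\mathfrak{Q}^{\square}_{JI}=(\mathfrak{Q}^{\square}_{IJ})^{2}$ (using that $\mathfrak{Q}^{\square}$ is symmetric with diagonal blocks), so that $\tfrac1n\Tr[D^{\ast}D\,\mathfrak{Q}^{\square}_{IJ}\mathfrak{Q}^{\square}_{JI}]=\tfrac1n\|D\mathfrak{Q}^{\square}_{IJ}\|_{F}^{2}$ exactly. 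The point that remains to be checked --- and where the precise structure of the linearisation must be exploited, rather than generic resolvent bounds --- is that, for the blocks $(I,J)\in\{(3,4),(4,3)\}$ and the diagonal weights $D$ built from $\tfrac1{\tilde n}\deg(\tilde{\Gamma}_{\tilde n}^{\top})$ that actually enter $E_{test}^{\square}$, the contribution of the feedback operator $\Phi$ to $\tfrac1n\Tr[D^{\ast}D\,\partial_{s}\mathfrak{Q}^{\square}_{II}\big|_{s=0}]$ vanishes in the limit $n\to\infty,\ \eta_{N}\to0$; this uses Assumption \ref{hyp:rowsto}, the bounds of Assumptions \ref{hyp:bounded_profile} and \ref{hyp:norm_profile_resolvent}, and the fact that the $-I_{p}$ shifts in \eqref{eq:mathfrakQ} turn $\mathfrak{Q}_{34}+I,\mathfrak{Q}_{43}+I$ into the bounded ``random part'' $R_{34},R_{34}^{\top}$. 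Once this identification is in place, the passage from large $\Im m\,\Lambda$ to all admissible $\Lambda$ with $\Im m\,\Lambda>\eta_{N}$ is routine, by the normal‑family argument already used in the proof of Theorem \ref{thm:equiv_risk}.
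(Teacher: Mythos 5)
There is a genuine gap, and it sits exactly where you flag it yourself: after differentiating the fixed-point equation \eqref{eq:fixedpoint} along the perturbation $\Lambda+sE_{JJ}$ you obtain $(\mathrm{Id}-\Phi)\,\partial_s\mathfrak{Q}^{\square}\big|_{s=0}=id_4\otimes\Delta[\mathfrak{Q}^{\square}E_{JJ}\mathfrak{Q}^{\square}]$, and your identification of the limit with $\tfrac1n\|D\mathfrak{Q}^{\square}_{IJ}\|_F^2$ requires the feedback contribution $\Phi$ to be negligible. You leave this unproven ("the point that remains to be checked"), but it is precisely the nontrivial content of the statement: for quadratic (two-resolvent) functionals the feedback term generically does \emph{not} vanish, and nothing in Assumptions \ref{hyp:rowsto}, \ref{hyp:bounded_profile} or \ref{hyp:norm_profile_resolvent} makes it obviously disappear, so the proof as proposed does not close. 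There are also secondary problems you do not track: the Frobenius norm produces $\mathfrak{Q}_{JI}(\bar\Lambda)$ while the block-derivative identity produces $\mathfrak{Q}_{JI}(\Lambda)$, and your fix via a Lemma \ref{lem:proxy_eta}-type bound uses boundedness of the resolvent that is only available at the specific point $\Lambda_\lambda+\mathbf{i}\eta_N I_N$, not for the general $\Lambda$ of the statement; moreover the Cauchy contour in $s$ must have radius $O(\eta_N)$ to stay in $\mathrm{D}_N(\mathbb{C})^{+}$, so every approximation error gets divided by powers of $\eta_N$ after differentiation, and you never verify that the resulting quantities still tend to zero given that $\eta_N\sim n^{-1/5}$.

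The paper's proof avoids any second-order or derivative analysis altogether and shows why no fixed-point differentiation is needed: writing $a^2-b^2=(a-b)(a+b)$ for $a=\|D\mathfrak{Q}^{\square}_{IJ}(\Lambda)\|$, $b=\|\mathbb{E}[D\mathfrak{Q}_{IJ}(\Lambda)]\|$ (after a Jensen-type step replacing $\mathbb{E}\|\cdot\|_F^2$ by $\|\mathbb{E}[\cdot]\|_F^2$), the difference is bounded by $\|D\|^2\,\|\mathfrak{Q}^{\square}_{IJ}(\Lambda)-\mathbb{E}[\mathfrak{Q}_{IJ}(\Lambda)]\|\bigl(\|\mathfrak{Q}^{\square}_{IJ}(\Lambda)\|+\|\mathbb{E}[\mathfrak{Q}_{IJ}(\Lambda)]\|\bigr)$; then Lemma \ref{lem:equiv-esp} together with \eqref{eq:bound} gives the operator-norm estimate $\|\mathfrak{Q}^{\square}_{IJ}(\Lambda)-\mathbb{E}[\mathfrak{Q}_{IJ}(\Lambda)]\|=O(\eta_N^2)$, Lemmas \ref{lem:Rt} and \ref{lem:res} give $\|\mathfrak{Q}^{\square}_{IJ}(\Lambda)\|\leq\eta_N^{-1}$, and the blocks $\mathfrak{Q}_{IJ}$ are bounded in operator norm because they are products of $Q^{\lozenge}$, $H^{\lozenge}/\sqrt n$, $\mathcal X/\sqrt n$, $\mathcal W/\sqrt p$ (Remark \ref{rem:condsimple}); the product $O(\eta_N^2)\cdot O(\eta_N^{-1})=O(\eta_N)$ then tends to zero. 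If you want to salvage your route, you would have to quantify the feedback term with exactly this kind of $\eta_N$-scaling, at which point the direct norm comparison is both shorter and strictly stronger.
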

}
}
\begin{proof}[Proof of Corollary \ref{cor:final}] {\CB We first remark that}
\begin{eqnarray*}
    \frac{1}{n}\left\|D\mathfrak{Q}^{\square}_{IJ}(\Lambda)\right\|_F^2 - \mathbb{E}[\frac{1}{n}\left\|D\mathfrak{Q}_{IJ}(\Lambda)\right\|_F^2] &\leq& \frac{1}{n}\left\|D\mathfrak{Q}^{\square}_{IJ}(\Lambda)\right\|_F^2 - \frac{1}{n}\left\|\mathbb{E}\left[D\mathfrak{Q}_{IJ}(\Lambda)\right]\right\|_F^2\\ 
    &\leq& \frac{1}{n}\left(\left\|D\mathfrak{Q}^{\square}_{IJ}(\Lambda)\right\|_F - \left\|\mathbb{E}\left[D\mathfrak{Q}_{IJ}(\Lambda)\right]\right\|_F\right)\left(\left\|D\mathfrak{Q}^{\square}_{IJ}(\Lambda)\right\|_F + \left\|\mathbb{E}\left[D\mathfrak{Q}_{IJ}(\Lambda)\right]\right\|_F\right)\\ 
    &\leq& \left(\left\|D\mathfrak{Q}^{\square}_{IJ}(\Lambda)\right\| - \left\|\mathbb{E}\left[D\mathfrak{Q}_{IJ}(\Lambda)\right]\right\|\right)\left(\left\|D\mathfrak{Q}^{\square}_{IJ}(\Lambda)\right\| + \left\|\mathbb{E}\left[D\mathfrak{Q}_{IJ}(\Lambda)\right]\right\|\right)\\ 
    &\leq& \left\| D \right\|\left\|D\mathfrak{Q}^{\square}_{IJ}(\Lambda) - \mathbb{E}\left[D\mathfrak{Q}_{IJ}(\Lambda)\right]\right\|\left(\left\|\mathfrak{Q}^{\square}_{IJ}(\Lambda)\right\| + \left\|\mathbb{E}\left[\mathfrak{Q}_{IJ}(\Lambda)\right]\right\|\right)\\ 
    &\leq& \left\| D \right\|^2\left\|\mathfrak{Q}^{\square}_{IJ}(\Lambda) - \mathbb{E}\left[\mathfrak{Q}_{IJ}(\Lambda)\right]\right\|\left(\left\|\mathfrak{Q}^{\square}_{IJ}(\Lambda)\right\| + \left\|\mathbb{E}\left[\mathfrak{Q}_{IJ}(\Lambda)\right]\right\|\right)
\end{eqnarray*}
One has from Lemma \ref{lem:equiv-esp} and Equation \eqref{eq:bound} that \begin{eqnarray}\label{eq:bound_minus}
\left\|\mathfrak{Q}^{\square}_{IJ}(\Lambda) - \mathbb{E}\left[\mathfrak{Q}_{IJ}(\Lambda)\right]\right\| = O (\eta_N^2).
\end{eqnarray}
Note that the blocks $\mathfrak{Q}(\Lambda)_{IJ}$ are made of products of $Q^\lozenge(-\lambda)$, $H^\lozenge/\sqrt{n}$, $\mathcal X/\sqrt{n}$ and $\mathcal W/\sqrt{p}$.
These matrices can be proved to be bounded in operator norm, indeed $\|H^\lozenge/\sqrt{n}\|$, $\|\mathcal X/\sqrt{n}\|$ and $\|\mathcal W/\sqrt{p}\|$ are bounded {\CR thanks to Assumption \ref{hyp:bounded_profile} and by \cite{BS98}[Theorem 1.1]} as argued in Remark \ref{rem:condsimple}, and $\|Q^\lozenge(-\lambda)\| \leq \lambda^{-1}$ by \cite{hislop2012introduction}[Theorem 5.8]. Hence, we deduce by sub-multiplicativity of the operator norm that the blocks $\|\mathfrak{Q}(\Lambda)_{IJ}\|$ are bounded independently from n. Moreover, one has from Lemmas \ref{lem:Rt} and \ref{lem:res} that 
\begin{eqnarray}\label{eq:bound_plus}
\|\mathfrak{Q}^{\square}_{IJ}(\Lambda)\| \leq \|\mathfrak{Q}^{\square}(\Lambda)\| \leq \eta_N^{-1}.
\end{eqnarray}

Then one can conclude the proof of Corollary \ref{cor:final} by noting that $\eta_N$ tends to $0$ and by combining Equations \eqref{eq:bound_minus} and \eqref{eq:bound_plus}.
\end{proof}

 To obtain the convergence of the various terms appearing in the expression of  $E_{test}^{\lozenge}(\lambda)$, we can now combine the above results.
Thanks to Assumption \ref{hyp:bounded_profile},  the diagonal matrix $D = \mathbb{E}[\frac{1}{\tilde n}\tilde{\mathcal X}_{\tilde{n}}^\top \tilde{\mathcal X}_{\tilde{n}}] = \frac{1}{\tilde n} \deg ( \tilde{\Gamma}_{\tilde{n}}^\top )$  has uniformly bounded entries, and  $\frac{1}{p} \ \mathbb{E}[\frac{1}{\tilde n}\tilde{\mathcal X}_{\tilde{n}} \tilde{\mathcal X}_{\tilde{n}}^\top] = \frac{s^2}{\tilde{n}} I_{\tilde{n}}$ by Assumption \ref{hyp:rowsto}. Hence, combining Equations \eqref{eq:conv1} and \eqref{eq:equiv_final} with Theorem \ref{thm:adaptation} and Corollaries \ref{cor:deriv} and \ref{cor:final} yields  that $\lim_{n \to + \infty} |E_{test}^{\lozenge}(\lambda) - E_{test}^{\square}(\lambda)| = 0$ where
\begin{eqnarray*}
E_{test}^{\square}(\lambda) &=& \sigma^2 + \alpha^2s^2 + \frac{\theta_{chaos}^2(h)\alpha^2}{\theta_{lin}^2(h) p}\Tr[ \mathfrak{Q}^{\square}_{44}] - \frac{\lambda \theta_{chaos}^2(h)\alpha^2}{\theta_{lin}^2(h) p}\Tr[\mathfrak{Q}^{\square '}_{44} ] +\frac{\alpha^2}{p}  \Tr[\frac{1}{\tilde n} \deg ( \tilde{\Gamma}_{\tilde{n}}^\top )(\mathfrak{Q}^{\square}_{34}+I)(\mathfrak{Q}^{\square}_{43} + I) ] \\
                    &+&  \frac{\theta_{chaos}^2(h)\sigma^2}{ n}\Tr[\mathfrak{Q}^{\square '}_{22}  ]+ \frac{\theta_{lin}^2(h)\sigma^2}{ n} \Tr[\frac{1}{\tilde n} \deg ( \tilde{\Gamma}_{\tilde{n}}^\top )\mathfrak{Q}^{\square'}_{33}]-\frac{2\theta_{lin}(h)\alpha^2}{p}  \Tr[\frac{1}{\tilde n} \deg ( \tilde{\Gamma}_{\tilde{n}}^\top ) (\mathfrak{Q}^{\square}_{43}+I)].
\end{eqnarray*}
{\CB This completes the proof of Theorem \ref{thm:main}.}

\section{Numerical experiments}\label{sec:num}

\textcolor{black}{ This section gathers numerical experiments that illustrate the theoretical results of this work. These experiments are conducted for simulated data following a mixture model with  variance profiles as in \eqref{eq:profileMNIST} that are computed from the MNIST database as explained in Section \ref{sec:mixture}, but with images that are subsampled to be of size $p=50$. Apart from Figure~\ref{fig:Laplace}, all the numerical experiments reported in this section are conducted with i.i.d.\ standard Gaussian entries for $X_n'$. We also repeated these experiments with other input distributions satisfying Assumption~\ref{hyp:moments-analytical}; Figure~\ref{fig:Laplace}, for instance, presents the case of the Laplace distribution. In all such cases, we obtained quantitatively very similar results. This observation provides additional empirical support for the robustness of our theoretical findings with respect to the distribution of the design entries, as long as Assumption~\ref{hyp:moments-analytical} is satisfied.}

In order to compute $ E^{\square}_{\text{train}} $ and $ E^{\square}_{\text{test}} $, it is needed to calculate the matrix $ \mathfrak{Q}^\square $. Although we do not possess an explicit analytical expression for this matrix, it can be effectively approximated using a fixed-point algorithm to solve Equation \eqref{eq:fixedpoint}. This algorithm plays a central role in our numerical experiments, as it enables us to obtain accurate approximation of $ E^{\square}_{\text{train}} $ and $ E^{\square}_{\text{test}} $. Moreover, it provides a basis for comparison between different error metrics, namely $ E_{\text{train}} $, $ E^{\square}_{\text{train}} $, and $ E^{\lozenge}_{\text{train}} $ (respectively $ E_{\text{test}} $, $ E^{\square}_{\text{test}} $, and $ E^{\lozenge}_{\text{test}} $). In particular, the construction of $ E^{\square}_{\text{test}} $ follows from a structured modification of the original expression of $ E_{\text{test}} $. Specifically, we replace the sub-blocks of the matrix $ \mathfrak{Q} $ with their corresponding sub-blocks in $ \mathfrak{Q}^\square $. This substitution ensures that $ E^{\square}_{\text{test}} $ incorporates the refined approximation encoded in $ \mathfrak{Q}^\square $, thereby allowing for a direct comparison between the original and approximated errors.
In all the figures of this section, dashed curves correspond to the true predictive error $ E_{\text{test}} $, while solid curves represent the asymptotic equivalent error $ E^{\square}_{\text{test}} $. Remarkably, in all our experiments, the dashed and solid curves overlap almost perfectly, demonstrating the accuracy of this asymptotic equivalent. This near-perfect alignment  further validates the efficacy of the fixed-point algorithm in approximating $ \mathfrak{Q}^\square $ and its impact on predictive error approximation.

\begin{figure}[htbp]
\begin{center}
{\subfigure[]{\includegraphics[width = 0.49\textwidth]{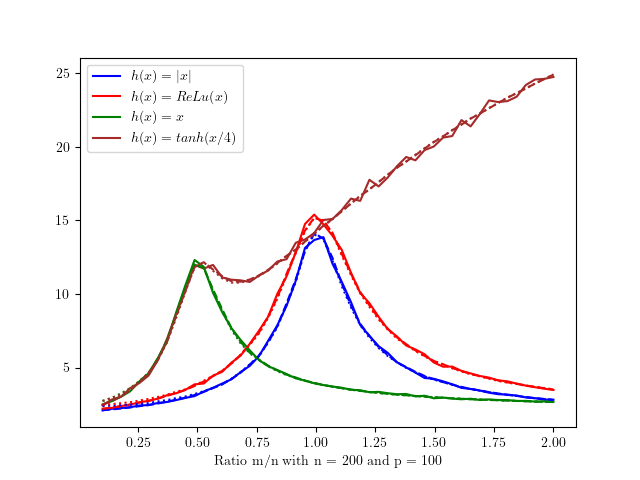}}
}
\hfill
{\subfigure[]{\includegraphics[width =0.49\textwidth]{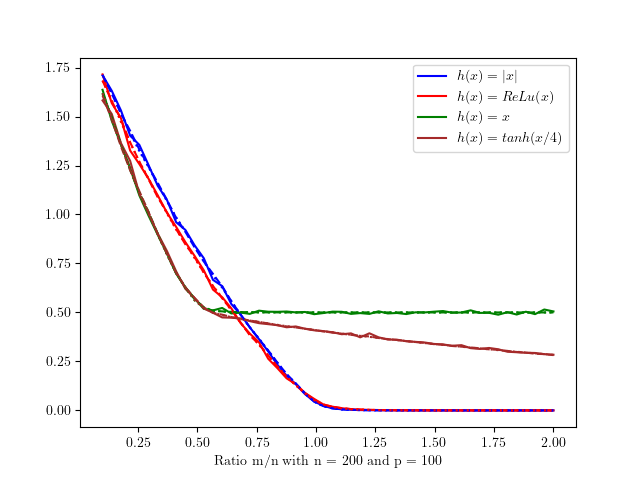}}
}
\end{center}
\caption{\textcolor{black}{(a) Comparison of the predictive risk $E_{test}(\lambda)$ (solid curves) with the lozenge asymptotic equivalent $E_{test}^{\lozenge}(\lambda)$ (dashed curves) the square asymptotic equivalent $E_{test}^{\square}(\lambda)$ (dotted curves) for several activation functions and the ratio $m/n$. (b) Comparison of the training risk $E_{train}(\lambda)$ (solid curves) with the lozenge asymptotic equivalent $E_{train}^{\lozenge}(\lambda)$ (dashed curves) the square asymptotic equivalent $E_{train}^{\square}(\lambda)$ (dotted curves) for several activation functions and the ratio $m/n$.}}\label{fig:activations}
\end{figure}


\begin{figure}[htbp]\label{fig:db_lambdas}
\begin{center}
{\subfigure[]{\includegraphics[width = 0.49\textwidth]{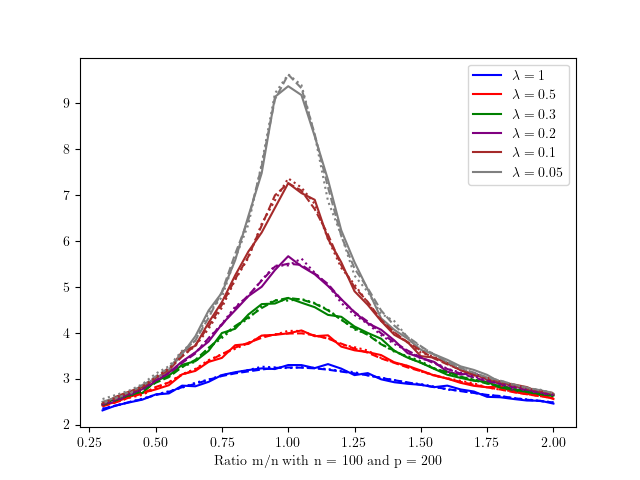}}
}
\hfill
{\subfigure[]{\includegraphics[width =0.49\textwidth]{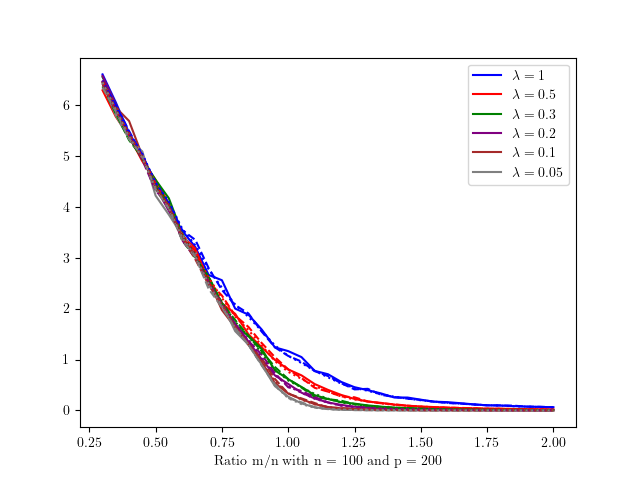}}
}
\end{center}
\caption{\textcolor{black}{(a) Comparison of the predictive risk $E_{test}(\lambda)$ (solid curves) with the lozenge asymptotic equivalent $E_{test}^{\lozenge}(\lambda)$ (dashed curves) the square asymptotic equivalent $E_{test}^{\square}(\lambda)$ (dotted curves)  for different values of $\lambda$ and the ratio $m/n$ with non-gaussian data and $h(x) = x^3$. (b) Comparison of the training risk $E_{train}(\lambda)$ (solid curves) with the lozenge asymptotic equivalent $E_{train}^{\lozenge}(\lambda)$ (dashed curves) the square asymptotic equivalent $E_{train}^{\square}(\lambda)$ (dotted curves) for different values of $\lambda$ and  the ratio $m/n$ with non-gaussian data and $h(x) = x^3$.}} \label{fig:Laplace}
\end{figure}

\begin{figure}[htbp]
\begin{center}
\includegraphics[width=0.8 \textwidth,height=0.5\textwidth]{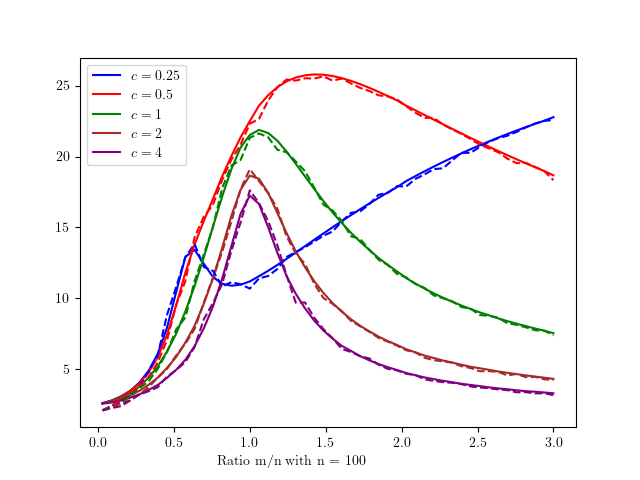}
\caption{Predictive risk $E_{test}(\lambda)$ (dashed curves) and the square asymptotic equivalent $E_{test}^{\square}(\lambda)$ (solid curves) with $\lambda = 0.004$, for  $h(x) = \tanh(c x)$ for different values of $c$ and the ratio $m/n$.}  
\label{fig:tanhs}
\end{center}
\end{figure}

As highlighted in Section \ref{sec:mixture}, the double descent phenomenon still holds for RF regression with a variance profile whenever  $\lambda$ is small enough and for the choice $h(x) = x^3$. This phenomenon is again observed in Figure \ref{fig:activations} that displays the predictive risk of RF ridge regression with $\lambda = 0.004$ for several other activation functions $h$, and the ratio $m/n$ ranging from $0.03$ to $3$ with $n = 100$ and $\tilde{n} = 20$. \textcolor{black}{Our results are established under the assumption that the activation function $h$ is an odd polynomial, which is a limitation compared with the activation functions used in practical neural networks. Nevertheless, we believe that the analysis could be extended to a broader class of functions. To do so, it would be enough to generalize Proposition \ref{prop:H-H_lin} to a wider class of activation functions. We are currently working on such an extension of the results in \cite{DaboMale}, and we are fairly confident that it is feasible. Figure \ref{fig:activations} supports this conjecture: for each of the non-polynomial functions tested, the deterministic equivalents (both lozenge and square) are highly accurate for both the training and the test errors.}

It turns out that there are two locations where the peak in the double descent can appear, namely at $m=n$ and $m=p$ when $m$ varies. This fact has already been  observed in the case of iid data, see e.g.\ \cite{triple-descent}. The authors of \cite{triple-descent} assert that the peak at $m=p$ appears for activation functions $h$ that are close to linear functions, whereas the other peak at $m=n$ appears for highly non-linear functions. This observation seems to still hold in the case of data with a variance profile. Indeed, in Figure \ref{fig:activations}, we observe a peak in the double descent at $p = m$ for $h(x) = x$ and $h(x) = \tanh(x/4)$, whereas the other peak appears for $h(x) = |x|$ and $h(x) = \text{ReLU}(x)$. These results suggest that, for data with a variance profile, the degree of non-linearity of the activation function also plays a fundamental role in determining the location of the double descent peak.

\textcolor{black}{In Figure \ref{fig:Laplace}, we compare the predictive and training risks (respectively $E_{test}(\lambda)$ and $E_{train}(\lambda)$) with their square and lozenge asymptotic equivalents (respectively $E_{test}^{\square}(\lambda)$, $E_{test}^{\lozenge}(\lambda)$ and $E_{train}^{\square}(\lambda)$, $E_{train}^{\lozenge}(\lambda)$), as derived from our theoretical framework, for different values of the ratio $m/n$ ranging from $0.03$ to $3$, with $n = 100$ and $p = 200$. Notably, the curves corresponding to the asymptotic equivalents closely match those of the predictive risk, demonstrating the accuracy of our theoretical results. As shown in Figure \ref{fig:Laplace}(a), the predictive risk curve exhibits a peak whenever $m \approx n$, and this peak becomes sharper as $\lambda$ tends to $0$. This behavior indicates the onset of the double-descent phenomenon for small values of $\lambda$. Moreover, in this experiment, the data were generated exactly as described in Section \ref{sec:mixture} and  the entries of $X'_n$ follow a Laplace distribution  with density
$$
f(x)=\frac{e^{-|x|\sqrt{2}}}{\sqrt{2}}.
$$
The accuracy of the asymptotic equivalents in this setting illustrates the robustness of our results with respect to the distribution of the input data, provided that Assumption \ref{hyp:moments-analytical} is satisfied.}

Figure \ref{fig:tanhs} illustrates the behavior of the predictive risk for  $\lambda = 0.004$, where the  ratio $ m/n $ varies from 0.03 to 3, with fixed values $ n = 100 $ and $ \tilde{n} = 20 $. The considered activation function  is $ h(x) = \tanh(c x) $, where the parameter $ c $ takes values in $\{0.25, 0.5, 1, 2, 4\}$. In this way, as $ c $ decreases, the function $ h(x) $ increasingly behaves as a linear function. This is particularly relevant when analyzing the resulting predictive risk. In agreement with the findings of \cite{triple-descent}, Figure \ref{fig:tanhs} confirms the presence of a double-descent curve whose shape depends on the value of $c$. Specifically, for high values of $ c $, a pronounced peak in the predictive risk is observed around $ m = n $, highlighting the classical interpolation threshold where the number of parameters in the model matches with the number of samples. Conversely, for smaller values of $ c $, the peak shifts towards $ m = p $, reflecting a different regime where the effective model complexity, shaped by the near-linearity of $ h(x) $, influences the behavior of the predictive risk.

In conclusion, our theoretical results and these  numerical results provide, for non-iid data, further insights  into the role of the degree of non-linearity of the activation function  in understanding the generalization error of  neural networks, particularly in over-parameterized regimes.

\appendix

\section*{Appendix}  \label{sec:app}

\renewcommand{\thesection}{\Alph{section}}
\renewcommand{\theequation}{\thesection.\arabic{equation}}
\setcounter{section}{1}
\setcounter{equation}{0}

\subsection{Proof of Lemma \ref{lem:risk_expressions}}

Let us  prove Lemma \ref{lem:risk_expressions} that \rev{allows us to have more tractable expressions} for $E_{train}(\lambda)$ and $E_{test}(\lambda)$

\begin{proof}[Proof of Lemma \ref{lem:risk_expressions}]
{\color{black}
We start from the identity $\hat\theta_\lambda=HQ(-\lambda)Y_n/n$ and from the decomposition $Y_n=X_n\beta_\ast+\varepsilon_n$. Since $Q(-\lambda)=(H^\top H/n+\lambda I_n)^{-1}$, one has
$$
I_n-\frac{1}{n}H^\top HQ(-\lambda)=\lambda Q(-\lambda).
$$
Therefore
\begin{align*}
E_{train}(\lambda)
&=\frac{1}{n}\mathbb E\Big[\|Y_n-H^\top\hat\theta_\lambda\|_2^2\Big]
 =\frac{\lambda^2}{n}\mathbb E\Big[\|Q(-\lambda)Y_n\|_2^2\Big]\\
&=\frac{\lambda^2}{n}\mathbb E\Big[\Tr\big(Q(-\lambda)Y_nY_n^\top Q(-\lambda)\big)\Big].
\end{align*}
Using the independence of $\beta_\ast$, $X_n$, and $\varepsilon_n$, together with
$$
\mathbb E[\beta_\ast\beta_\ast^\top]=\frac{\alpha^2}{p}I_p
\qquad\text{and}\qquad
\mathbb E[\varepsilon_n\varepsilon_n^\top]=\sigma^2 I_n,
$$
we obtain
$$
\mathbb E[Y_nY_n^\top | X_n]=\frac{\alpha^2}{p}X_nX_n^\top+\sigma^2 I_n.
$$
Finally, since $Q'(-\lambda)=Q^2(-\lambda)$, this yields
$$
E_{train}(\lambda)=\frac{\lambda^2\alpha^2}{n}\Tr\Bigg[\mathbb E\Big[Q'(-\lambda)\frac{X_nX_n^\top}{p}\Big]\Bigg]
+\frac{\lambda^2\sigma^2}{n}\Tr\Bigg[\mathbb E\big[Q'(-\lambda)\big]\Bigg].
$$
For the predictive risk, recall that $\tilde Y_{\tilde n}=\tilde X_{\tilde n}\beta_\ast+\tilde\varepsilon$ and that $\hat\theta_\lambda=HQ(-\lambda)Y_n/n$. Hence
\begin{align*}
E_{test}(\lambda)
&=\frac{1}{\tilde n}\mathbb E\Big[\|\tilde Y_{\tilde n}-\tilde H^\top\hat\theta_\lambda\|_2^2\Big]\\
&=\frac{1}{\tilde n}\mathbb E\Big[\big\|\tilde X_{\tilde n}\beta_\ast+\tilde\varepsilon-\frac{1}{n}\tilde H^\top H Q(-\lambda)Y_n\big\|_2^2\Big].
\end{align*}
Expanding the square and using again the independence assumptions removes the mixed noise terms. Moreover,
$$
\frac{1}{\tilde n}\Tr\Bigg[\mathbb E\Big[\frac{1}{p}\tilde X_{\tilde n}\tilde X_{\tilde n}^\top\Big]\Bigg]
=\frac{1}{p\tilde n}\Tr\big[\deg(\tilde\Gamma_{\tilde n})\big].
$$
We therefore obtain
\begin{align*}
E_{test}(\lambda)
&=\sigma^2+\frac{\alpha^2}{p\tilde n}\Tr\big[\deg(\tilde\Gamma_{\tilde n})\big]
 +\frac{\alpha^2}{\tilde n}\Tr\Bigg[\mathbb E\Big[\frac{1}{pn^2}\tilde H^\top H Q(-\lambda)X_nX_n^\top Q(-\lambda)H^\top\tilde H\Big]\Bigg]\\
&\quad+\frac{\sigma^2}{\tilde n}\Tr\Bigg[\mathbb E\Big[\frac{1}{n^2}\tilde H^\top H Q'(-\lambda)H^\top\tilde H\Big]\Bigg]
-\frac{2\alpha^2}{\tilde n n p}\Tr\Bigg[\mathbb E\Big[\tilde X_{\tilde n}X_n^\top Q(-\lambda)H^\top\tilde H\Big]\Bigg],
\end{align*}
which is the announced formula.}
\end{proof}

\subsection{Linearization of a matrix}

{\CB For completeness, we recall below the definition of the linearization of a matrix following the presentation in  \cite{mingo2017free}}.

\begin{definition}\label{def:linearization}
Let $M \in \mathbb{C}\langle X_1, ...,X_N \rangle$ be a polynomial random matrix. We say that $L \in \mathcal{M}_k(\mathbb{C}) \otimes \mathbb{C}\langle X_1, ...,X_N \rangle$ is a linearization of $M$ if it satisfies :
\begin{itemize}
\item[-] $ L = 
 \begin{pmatrix}
0 & A \\
B & C 
\end{pmatrix}
 = a_0\otimes \mathbf{1} + a_1 \otimes X_1 + ... + a_N\otimes X_N$\\
where $a_0,...,a_N \in \mathcal{M}_k(\mathbb{C})$.
\item[-] $A^t,B \in \mathbb{C}^{k-1} \otimes \mathbb{C}\langle X_1, ...,X_N \rangle$.
\item[-] $C \in \mathcal{M}_{k-1}(\mathbb{C}) \otimes \mathbb{C}\langle X_1, ...,X_N \rangle$ is invertible.
\item[-]  $M = AC^{-1}B$.
\end{itemize}
\end{definition}

\subsection{Proof of Theorem \ref{thm:adaptation}}

{\CB The proof of Theorem \ref{thm:adaptation} is  inspired  by results in \cite{bigotmale}. This appendix highlights the main differences between our results and those of \cite{bigotmale} and  the adaptation of their arguments  to the setting of Theorem \ref{thm:adaptation}}. {\CB We first recall in Theorem \ref{thm:bigotmale} below the main result from \cite{bigotmale} on the study of deformed GUE matrices characterized by a variance profile for the model}
$$M_N = \Upsilon \circ A_N + B_N \in \mathbb{R}^{N\times N},$$
 where $A$ represents a GUE matrix,  $\Upsilon$ is a deterministic symmetric matrix and $B$ is a deterministic Hermitian matrix, where {\CB $\Upsilon^{\circ 2} = ({\CB \gamma_N^2(i, j)})$ is a variance profile matrix with real and non-negative entries}. Within this framework, {\CB  a deterministic equivalent of the diagonal of the resolvent of $M_N$ is provided in  \cite{bigotmale} as follows:}


\begin{thm}[\cite{bigotmale} Theorem 1.1 ]\label{thm:bigotmale}
There exists a unique function $G_{M_N}^{\mathrm{\square}}: \mathrm{D}_N(\mathbb{C})^{+} \rightarrow \mathrm{D}_N(\mathbb{C})^{-}$, analytic in each variable, that solves of the following fixed point equation:
$$
G_{M_N}^{\square}(\Lambda)=\Delta\left[\left(\Lambda-\mathcal{R}_N\left(G_{M_N}^{\square}(\Lambda)\right)-Y_N\right)^{-1}\right],
$$
for any $\Lambda \in \mathrm{D}_N(\mathbb{C})^{+}$, with $\mathcal{R}_N(\Lambda)=\underset{i=1, \ldots, N}{\operatorname{diag}}\left(\sum_{j=1}^N \frac{{\CB \gamma_N^2(i, j)}}{N} \Lambda(j, j)\right)$. Let $\gamma_{\max }^2=\max _{i, j} {\CB \gamma_N^2(i, j)}$,  $0<\delta<1$, and consider $\Lambda \in$ $\mathrm{D}_N(\mathbb{C})^{+}$satisfying
$$
\Im m \Lambda \geq \gamma_{\max }\left(\frac{2\sqrt{2}}{N(1-\delta)}\right)^{1 / 5} \mathbb{I}_N.
$$
Then, for any $d>1$, setting
$$
\begin{aligned}
\varepsilon_N(d)= & \sqrt{2} \gamma_{\max } \sqrt{\frac{d \log (N)}{N}}\left\|(\Im m \Lambda)^{-1}\right\|^2 \\
& +\left(1+\frac{\gamma_{\max }^2}{\delta}\left\|(\Im m \Lambda)^{-1}\right\|^2\right) \frac{2 \gamma_{\max }^3\left\|(\Im m \Lambda)^{-1}\right\|^4}{N}
\end{aligned}
$$
we have, for $N \geq 1$,
$$
\mathbb{P}\left(\left\|G_{M_N}(\Lambda)-G_{M_N}^{\square}(\Lambda)\right\| \geq \varepsilon_N(d)\right) \leq 4 N^{1-d}
$$
where $\|\cdot\|$ denotes the operator norm of a matrix, {\CB and $G_{M_N}(\Lambda) = \Delta [(M_N - \Lambda)^{-1}]$.}
\end{thm}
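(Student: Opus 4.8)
The final statement to prove is Theorem~\ref{thm:bigotmale}, the result of \cite{bigotmale} on the deformed GUE model $M_N = \Upsilon \circ A_N + B_N$. Since this is quoted verbatim as a theorem from \cite{bigotmale}, the plan is to reconstruct its proof along the standard lines used for deformed band matrices with a variance profile, combining a Gaussian integration-by-parts (Stein) argument with a concentration estimate and a fixed-point analysis. The first step would be to establish existence and uniqueness of the analytic solution $G_{M_N}^{\square}$ to the self-consistent equation $G = \Delta\big[(\Lambda - \mathcal{R}_N(G) - Y_N)^{-1}\big]$ on the domain $\mathrm{D}_N(\mathbb{C})^{+}$. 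Here one observes that the map $\Phi_\Lambda(G) = \Delta[(\Lambda - \mathcal{R}_N(G) - Y_N)^{-1}]$ sends $\mathrm{D}_N(\mathbb{C})^{-}$ into itself (using Lemma~\ref{lem:Im_pos} to control invertibility and the sign of the imaginary part, since $\mathcal{R}_N$ preserves positivity of the imaginary part and $\gamma_N^2(i,j) \geq 0$), and that it is a strict contraction for $\Im m(\Lambda)$ large enough; a Earle--Hamilton or Banach fixed-point argument on the appropriate Siegel-type domain then yields the unique analytic fixed point.

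The core step is the approximate self-consistent equation for the true resolvent diagonal $G_{M_N}(\Lambda) = \Delta[(M_N - \Lambda)^{-1}]$. Writing $\mathfrak{G} = (M_N - \Lambda)^{-1}$ and expanding $\mathbb{E}[\mathfrak{G}]$ via the Gaussian integration-by-parts formula applied to each entry of the GUE part $\Upsilon \circ A_N$, one produces the identity $\mathbb{E}[\mathfrak{G}] = (\Lambda - Y_N - \mathcal{R}_N(\mathbb{E}[G_{M_N}]))^{-1}$ up to a remainder term built from covariances between $\mathfrak{G}$ and entries of $\mathfrak{G}$. That remainder is controlled by the variance (concentration) of linear statistics of $\mathfrak{G}$: using the Gaussian Poincar\'e inequality together with the resolvent bound $\|\mathfrak{G}\| \leq \|(\Im m \Lambda)^{-1}\|$ from Lemma~\ref{lem:Im_pos}, one shows $\mathrm{Var}\big(\tfrac1N\Tr[A\mathfrak{G}]\big) = O(\gamma_{\max}^2 \|(\Im m\Lambda)^{-1}\|^4 / N)$ for fixed matrices $A$ of bounded norm. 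This is where the lower bound $\Im m \Lambda \geq \gamma_{\max}(2\sqrt2 / (N(1-\delta)))^{1/5}\mathbb{I}_N$ enters: it guarantees that the error of the approximate equation is dominated by the Lipschitz gain of the fixed-point map, so that the deterministic equivalent $G_{M_N}^{\square}$ is close to $\mathbb{E}[G_{M_N}]$ with the stated $\varepsilon_N(d)$. The concentration in operator norm (rather than merely in expectation) is obtained by a standard net argument over test vectors combined with Gaussian concentration of Lipschitz functions, producing the $4N^{1-d}$ probability bound and the $\sqrt{d\log N / N}$ term in $\varepsilon_N(d)$.

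The main obstacle I expect is the stability analysis of the self-consistent equation: one must show that the linearized fixed-point operator $DΦ_\Lambda$ at (or near) the solution has spectral radius bounded away from $1$ uniformly in $N$, so that the small additive error from integration by parts and concentration does not get amplified. This requires quantitative control of $\|(\Lambda - Y_N - \mathcal{R}_N(G))^{-1}\|$ and of the operator $G \mapsto \mathcal{R}_N(G)$ in the relevant matrix norm, which is precisely what the assumption $\Im m \Lambda \geq \gamma_{\max}(2\sqrt2/(N(1-\delta)))^{1/5}\mathbb{I}_N$ is engineered to provide, together with the factor $\delta$ appearing in $\varepsilon_N(d)$. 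Once that Lipschitz estimate is in place, combining it with the integration-by-parts identity and the Poincar\'e-based variance bound gives the result by a perturbation argument; the remaining work is bookkeeping the constants to match the precise form of $\varepsilon_N(d)$. Since the statement is cited from \cite{bigotmale}, the proof here can be a sketch referring to that reference for the full constants.
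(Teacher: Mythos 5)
Your outline is essentially the argument the paper relies on: the statement is quoted from \cite{bigotmale} without proof, and the appendix of the paper reproduces exactly this chain — contraction fixed point for the self-consistent equation, Stein's lemma giving the master equality, a Gaussian Poincar\'e bound for the master inequality, and a concentration step — when adapting it to the GOE/block setting of Theorem \ref{thm:adaptation}. The only small deviation is your last step: since $G_{M_N}(\Lambda)$ is the diagonal of the resolvent, the operator-norm concentration follows from Gaussian concentration of the bilinear forms $e_i^*(\mathfrak{G}-\mathbb{E}[\mathfrak{G}])e_i$ plus a union bound over the $N$ diagonal entries (which is what produces the $\sqrt{d\log N/N}$ term and the $4N^{1-d}$ probability), rather than a net over all test vectors.
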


Unfortunately $A_N$ is a GOE matrix rather than a GUE matrix in our case, {\CB thus we cannot directly apply Theorem \ref{thm:bigotmale}}. {\CB In this Appendix, we thus modify some proofs from \cite{bigotmale} to adapt their results to our setting.} 
To this end, let us first prove the existence of a solution to the fixed point equation \eqref{eq:fixedpoint}.
\begin{lem}\label{lem:fixed_point}
There exists a unique deterministic analytic map $\mathfrak{Q}^{\square}: \mathrm{D}_N(\mathbb{C})^{+} \rightarrow \mathrm{D}_N(\mathbb{C})^{+}$ such that 
$$
\mathfrak{Q}^{\square}(\Lambda)=id_4 \otimes \Delta \left[\left( C_N - \Lambda - \mathcal{R}_N(\mathfrak{Q}^{\square}(\Lambda)) \right)^{-1}\right], 
$$ 
for any $\Lambda \in \mathrm{D}_N(\mathbb{C})^{+}$. Moreover, for any $\Lambda, \Lambda^{\prime} \in \mathrm{D}_N(\mathbb{C})^{+}$,
$$
\left\|\mathfrak{Q}^{\square}(\Lambda)-\mathfrak{Q}^{\square}\left(\Lambda^{\prime}\right)\right\| \leq\left\|(\Im m \Lambda)^{-1}\right\|\left\|\left(\Im m \Lambda^{\prime}\right)^{-1}\right\| \times\left\|\Lambda-\Lambda^{\prime}\right\|
$$
\end{lem}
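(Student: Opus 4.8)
# Proof Proposal for Lemma B.?? (existence/uniqueness of the fixed point $\mathfrak{Q}^\square$)

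The plan is to realize $\mathfrak{Q}^\square$ as the unique fixed point of a strict contraction on a suitable complete metric space, following the classical Helton--Far--Speicher / Haagerup--Thorbjørnsen strategy adapted (as in \cite{bigotmale}) to the block/diagonal setting encoded by the map $id_4\otimes\Delta$. First I would fix $\Lambda\in\mathrm{D}_N(\mathbb{C})^+$ and introduce the candidate map
\[
\mathcal{G}_\Lambda(\mathfrak{X}) = id_4\otimes\Delta\left[\bigl(C_N-\Lambda-\mathcal{R}_N(\mathfrak{X})\bigr)^{-1}\right],
\]
and check it is well defined on the domain
\[
\mathcal{D} = \left\{ \mathfrak{X}\in\mathrm{D}_N(\mathbb{C}) : \Im m\,\mathfrak{X}\le 0,\ \|\mathfrak{X}\|\le \|(\Im m\,\Lambda)^{-1}\| \right\}.
\]
The key point here is that $C_N$ is Hermitian (indeed $C_N^\ast=C_N$ from \eqref{eq:CN}), that $\mathcal{R}_N$ maps matrices with nonpositive imaginary part to matrices with nonpositive imaginary part (because the coefficients $(\gamma^{(L)}_{ij})^2/N$ are nonnegative), and therefore $C_N-\Lambda-\mathcal{R}_N(\mathfrak{X})$ has strictly positive imaginary part $\ge \Im m\,\Lambda$. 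By Lemma \ref{lem:Im_pos} it is invertible with $\|(C_N-\Lambda-\mathcal{R}_N(\mathfrak{X}))^{-1}\|\le \|(\Im m\,\Lambda)^{-1}\|$, and since $id_4\otimes\Delta$ is a norm-one projection onto diagonal matrices that preserves the sign of the imaginary part, $\mathcal{G}_\Lambda(\mathcal{D})\subseteq\mathcal{D}$. Analyticity in $\Lambda$ of any fixed point follows once uniqueness is established, by the standard argument that the fixed point inherits analyticity from the implicit function / uniform-limit structure (as in \cite{bigotmale}[Theorem 1.1]).

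The core estimate is the contraction bound. Writing $\mathfrak{Q}_i=(C_N-\Lambda-\mathcal{R}_N(\mathfrak{X}_i))^{-1}$ for $i=1,2$, the resolvent identity gives
\[
\mathfrak{Q}_1-\mathfrak{Q}_2 = \mathfrak{Q}_1\bigl(\mathcal{R}_N(\mathfrak{X}_1)-\mathcal{R}_N(\mathfrak{X}_2)\bigr)\mathfrak{Q}_2,
\]
hence
\[
\|\mathcal{G}_\Lambda(\mathfrak{X}_1)-\mathcal{G}_\Lambda(\mathfrak{X}_2)\| \le \|\mathfrak{Q}_1\|\,\|\mathcal{R}_N(\mathfrak{X}_1-\mathfrak{X}_2)\|\,\|\mathfrak{Q}_2\| \le \|(\Im m\,\Lambda)^{-1}\|^2\,\|\mathcal{R}_N(\mathfrak{X}_1-\mathfrak{X}_2)\|,
\]
using $\|\mathcal{G}_\Lambda\|$-type bounds above and that $id_4\otimes\Delta$ is contractive. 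Since $\mathcal{R}_N$ is a linear map whose operator norm is controlled by $\max_i\sum_j (\gamma^{(L)}_{ij})^2/N$, which by the boundedness of the variance profile (Assumption \ref{hyp:bounded_profile}, propagated to $\Upsilon_L$ via its explicit block form \eqref{eq:varprofileU}) is $O(1)$, one does \emph{not} immediately get a contraction for all $\Lambda$ — this is the main obstacle. The standard resolution is to first prove existence and uniqueness on the subdomain where $\Im m\,\Lambda$ is large enough that $\|(\Im m\,\Lambda)^{-1}\|^2\cdot\|\mathcal{R}_N\|<1$, invoke the Banach fixed point theorem there, and then extend to all of $\mathrm{D}_N(\mathbb{C})^+$ by analytic continuation: the relation $\mathfrak{Q}^\square(\Lambda)=id_4\otimes\Delta[(C_N-\Lambda-\mathcal{R}_N(\mathfrak{Q}^\square(\Lambda)))^{-1}]$ defines, via the holomorphic implicit function theorem, a unique maximal analytic solution, and a connectedness/maximality argument (identical to \cite{bigotmale}) shows this solution extends to the whole upper half-domain and stays in $\mathrm{D}_N(\mathbb{C})^+$. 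Uniqueness globally then follows from uniqueness on the large-$\Im m\,\Lambda$ region together with analyticity.

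For the Lipschitz bound in $\Lambda$, I would differentiate (or use the resolvent identity directly) the fixed-point relation. Set $\mathfrak{Q}=\mathfrak{Q}^\square(\Lambda)$, $\mathfrak{Q}'=\mathfrak{Q}^\square(\Lambda')$, and $R=(C_N-\Lambda-\mathcal{R}_N(\mathfrak{Q}))^{-1}$, $R'=(C_N-\Lambda'-\mathcal{R}_N(\mathfrak{Q}'))^{-1}$. The resolvent identity gives
\[
R-R' = R\bigl((\Lambda'-\Lambda)+\mathcal{R}_N(\mathfrak{Q}-\mathfrak{Q}')\bigr)R',
\]
so applying $id_4\otimes\Delta$ and taking norms,
\[
\|\mathfrak{Q}-\mathfrak{Q}'\| \le \|R\|\,\|R'\|\,\bigl(\|\Lambda-\Lambda'\| + \|\mathcal{R}_N\|\,\|\mathfrak{Q}-\mathfrak{Q}'\|\bigr) \le \|(\Im m\,\Lambda)^{-1}\|\,\|(\Im m\,\Lambda')^{-1}\|\,\bigl(\|\Lambda-\Lambda'\| + \|\mathcal{R}_N\|\,\|\mathfrak{Q}-\mathfrak{Q}'\|\bigr).
\]
On the large-$\Im m$ region this rearranges to the stated bound with the $\|\mathcal{R}_N\|\|\mathfrak{Q}-\mathfrak{Q}'\|$ term absorbed; to get the clean inequality $\|\mathfrak{Q}^\square(\Lambda)-\mathfrak{Q}^\square(\Lambda')\|\le \|(\Im m\,\Lambda)^{-1}\|\,\|(\Im m\,\Lambda')^{-1}\|\,\|\Lambda-\Lambda'\|$ on the full domain, I would follow \cite{bigotmale} and exploit that $\mathfrak{Q}^\square$ is the diagonal of a genuine operator-valued Cauchy (Stieltjes) transform — monotonicity/Nevanlinna properties of such transforms yield the sharp Lipschitz constant without the extra $\mathcal{R}_N$ term, the spurious term disappearing because the map is a matrix Herglotz function of $\Lambda$. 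I expect the bookkeeping around the map $id_4\otimes\Delta$ (which zeroes out the off-diagonal $M_{34},M_{43}$-type contributions only when blocks are non-square) to be the most delicate routine point, but it is handled exactly as in \cite{bigotmale} and does not affect the estimates above.
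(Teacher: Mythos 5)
Your overall strategy is the same as the paper's, which simply invokes \cite{bigotmale}[Lemma 6.3]: show that $\psi_\Lambda(G)=id_4\otimes\Delta\bigl[(C_N-\Lambda-\mathcal{R}_N(G))^{-1}\bigr]$ is Lipschitz and a contraction when $\Im m\,\Lambda>\gamma^{(L)}_{\max}$, apply the Banach fixed point theorem there, and extend to all of $\mathrm{D}_N(\mathbb{C})^{+}$ by analyticity; your contraction estimate and threshold are the correct ones. However, your stability step contains a sign error that makes it fail as written. In this paper the resolvent convention is $\mathfrak{Q}(\Lambda)=(L-\Lambda)^{-1}$ and the fixed point must lie in $\mathrm{D}_N(\mathbb{C})^{+}$ (positive imaginary part), not in your domain $\{\Im m\,\mathfrak{X}\le 0\}$, which is the convention of \cite{bigotmale} where the equation reads $(\Lambda-\mathcal{R}_N(G)-Y_N)^{-1}$. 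With your choice, $\Im m\bigl(C_N-\Lambda-\mathcal{R}_N(\mathfrak{X})\bigr)=-\Im m\,\Lambda-\Im m\,\mathcal{R}_N(\mathfrak{X})$ is the sum of a negative definite and a positive semidefinite matrix, so it need not be definite at all; your claim that it is $\ge\Im m\,\Lambda>0$ is false, and neither the invertibility/norm bound nor the inclusion $\mathcal{G}_\Lambda(\mathcal{D})\subseteq\mathcal{D}$ follows. The correct invariant set is $\{\Im m\,\mathfrak{X}\ge 0\}$: then $\Im m\,\mathcal{R}_N(\mathfrak{X})\ge 0$ by Lemma \ref{lem:Rt}, hence $\Im m\bigl(C_N-\Lambda-\mathcal{R}_N(\mathfrak{X})\bigr)\le-\Im m\,\Lambda<0$, Lemmas \ref{lem:Im_pos} and \ref{lem:res} give invertibility with norm at most $\|(\Im m\,\Lambda)^{-1}\|$ and an inverse with positive imaginary part, and the pinching $id_4\otimes\Delta$ (an average of conjugations by diagonal unitaries) preserves this positivity --- which is exactly what makes $\mathfrak{Q}^{\square}$ map $\mathrm{D}_N(\mathbb{C})^{+}$ into $\mathrm{D}_N(\mathbb{C})^{+}$ as the lemma asserts. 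This is a repairable slip, but as written the self-map verification is wrong.

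Concerning the Lipschitz inequality, which is half of the statement: your direct resolvent-identity computation indeed leaves the parasitic term $\|\mathcal{R}_N\|\,\|\mathfrak{Q}-\mathfrak{Q}'\|$, and the appeal to ``Herglotz/Nevanlinna monotonicity'' is not by itself a proof. Being a matrix Herglotz function does not yield the constant $\|(\Im m\,\Lambda)^{-1}\|\,\|(\Im m\,\Lambda')^{-1}\|$, and the absorption argument cannot be made global, since $\mathcal{R}_N(\Im m\,\mathfrak{Q}^{\square})$ admits no uniform lower bound in general. What delivers the clean constant in \cite{bigotmale} --- and what the paper relies on by importing its Lemma 6.3 verbatim --- is the realization of $\mathfrak{Q}^{\square}(\Lambda)$ as $id_4\otimes\Delta$ of the resolvent of a genuine self-adjoint element ($C_N$ plus an operator-valued semicircular element with covariance $\mathcal{R}_N$, amalgamated over the diagonal); the resolvent identity applied to that operator, combined with the bound of Lemma \ref{lem:res}, then gives the stated inequality with no extra term. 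You correctly point at this mechanism (``a genuine operator-valued Cauchy transform'') but do not carry it out, so in your write-up the second assertion of the lemma remains unproven.
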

\rev{The proof of Lemma \ref{lem:fixed_point} follows the same lines as \cite{bigotmale}[Lemma 6.3]. More precisely, one studies the map $\psi_\Lambda(G)=id_4\otimes\Delta[(C_N-\Lambda-\mathcal{R}_N(G))^{-1}]$, proves that it is Lipschitz, and shows that it is a contraction whenever $\Im m(\Lambda)>\gamma_{max}$. This yields a unique fixed point in that region; the extension to every $\Lambda\in D_N(\mathbb C)^+$ then follows by analyticity of the resolvent.}


\subsection{Comparison between the expectation of the resolvent and the deterministic equivalent}

{\CB The first step of the proof consists in showing that $\mathfrak{Q}^{\mathrm{\square}}(\Lambda)$ is close to the expectation $\mathbb{E}[\mathfrak{Q}(\Lambda)]$}. To simplify the notation, we write $\gamma_{\max } = \gamma_{\max }^{(L)}$ in this section.

\begin{lem}\label{lem:equiv-esp}
 For all  $\delta \in(0,1)$  and all  {\CB $\Lambda  \in \mathrm{D}_N(\mathbb{C})^{+}$} such that  $\Im m \Lambda \geq \left( \frac{\gamma^2_{max}}{(1-\delta)n} (2\sqrt{2}\gamma_{max}^3 + \kappa) \right)^{\frac{1}{5}} \mathbb{I}_N$  we have \\
$$
\left\|\mathbb{E}\left[\mathfrak{Q}(\Lambda)\right]-\mathfrak{Q}^{\mathrm{\square}}(\Lambda)\right\| \leq\frac{1}{n}(1+\gamma_{max}^2\| (\Im m (\Lambda))^{-1} \|^2/\delta ) (2\sqrt{2}\gamma_{max}^3\| (\Im m (\Lambda))^{-1} \|^4 + \kappa \| (\Im m (\Lambda))^{-1} \|^2)
$$
\end{lem}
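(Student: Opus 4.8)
The plan is to adapt the Gaussian-interpolation / integration-by-parts argument from \cite{bigotmale} (the analogue of their ``comparison lemma'' for deformed GUE matrices) to our linearized matrix $L = \Upsilon_L \circ A_N + C_N$, where $A_N$ is a GOE matrix. The starting point is the resolvent identity $(L-\Lambda)\mathfrak{Q}(\Lambda) = I_N$, which we rewrite, using $L = \Upsilon_L\circ A_N + C_N$, as $\mathfrak{Q}(\Lambda) = (C_N - \Lambda)^{-1} + (C_N-\Lambda)^{-1}(\Upsilon_L\circ A_N)\mathfrak{Q}(\Lambda)$ (this requires $\Im m(\Lambda)>0$ so that $C_N-\Lambda$ is invertible by Lemma \ref{lem:Im_pos}, since $C_N$ is Hermitian). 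Taking expectations and applying the Gaussian integration-by-parts formula to the term $\mathbb{E}[(\Upsilon_L\circ A_N)\mathfrak{Q}(\Lambda)]$ — differentiating $\mathfrak{Q}$ in each entry $a_{ij}$ of $A_N$ via $\partial_{a_{ij}}\mathfrak{Q} = -\mathfrak{Q}\,\partial_{a_{ij}}(\Upsilon_L\circ A_N)\,\mathfrak{Q}$ — produces the ``self-consistent'' term $\mathcal{R}_N(\mathbb{E}[\mathfrak{Q}(\Lambda)])\,\mathbb{E}[\mathfrak{Q}(\Lambda)]$ plus a remainder $\mathcal{E}_N$ coming from the covariance (fluctuation) terms. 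One must be careful here: because $A_N$ is GOE rather than GUE, the second-moment structure of the entries carries the extra symmetric contribution $\mathbb{E}[a_{ij}a_{kl}] = \frac1N(\delta_{ik}\delta_{jl}+\delta_{il}\delta_{jk})$, so integration by parts generates a transpose term in addition to the term present in the GUE case; one checks, exactly as in \cite{bigotmale} but keeping track of this extra term, that it contributes a quantity of the same order $O(1/n)$ and does not affect the structure of the leading equation. This yields
\begin{equation*}
\mathbb{E}[\mathfrak{Q}(\Lambda)] = id_4\otimes\Delta\big[(C_N - \Lambda - \mathcal{R}_N(\mathbb{E}[id_4\otimes\Delta[\mathfrak{Q}(\Lambda)]]))^{-1}\big] + \mathcal{E}_N,
\end{equation*}
where $\|\mathcal{E}_N\|$ is controlled by the variance of the entries of $\mathfrak{Q}$ together with Assumption \ref{hyp:norm_profile_resolvent}.

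The next step is to bound the remainder $\mathcal{E}_N$. This is where the two ingredients $2\sqrt{2}\gamma_{\max}^3$ and $\kappa$ enter. The cubic term $2\sqrt{2}\gamma_{\max}^3\|(\Im m\Lambda)^{-1}\|^4$ comes from the fluctuation contribution: one needs a variance estimate of the form $\mathbb{E}\|\mathfrak{Q}(\Lambda) - \mathbb{E}\mathfrak{Q}(\Lambda)\|_{F,n}^2 = O(1/N)$, obtained via the Poincaré / Gaussian concentration inequality applied to the (Lipschitz) map $A_N\mapsto \frac1N\Tr[B\mathfrak{Q}(\Lambda)]$, with Lipschitz constant controlled by $\gamma_{\max}\|(\Im m\Lambda)^{-1}\|^2$; the extra factor $\sqrt2$ and the power $3$ reflect the three resolvent factors appearing when one differentiates and the GOE normalization. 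The term $\kappa\|(\Im m\Lambda)^{-1}\|^2$ arises precisely from invoking Assumption \ref{hyp:norm_profile_resolvent}, $\|\Upsilon_L^{\circ2}\circ\mathfrak{Q}(\Lambda)\|\le\kappa$, which is needed to bound one of the nested-resolvent terms that cannot be controlled by $\gamma_{\max}$ alone (in \cite{bigotmale} this was handled differently; here the block/diagonal structure forces us to introduce $\kappa$). The prefactor $(1+\gamma_{\max}^2\|(\Im m\Lambda)^{-1}\|^2/\delta)$ is the standard stability constant from the fixed-point equation, valid once $\Im m\Lambda$ is large enough — concretely once $\Im m\Lambda \ge \big(\frac{\gamma^2_{\max}}{(1-\delta)n}(2\sqrt2\gamma_{\max}^3+\kappa)\big)^{1/5}\mathbb{I}_N$, which is exactly the hypothesis; this lower bound is chosen so that $\gamma_{\max}^2\|(\Im m\Lambda)^{-1}\|^2 \cdot(\text{remainder size}) < \delta$, guaranteeing the contraction argument of Lemma \ref{lem:fixed_point} closes.

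Finally, one subtracts the fixed-point equation \eqref{eq:fixedpoint} satisfied by $\mathfrak{Q}^\square(\Lambda)$ from the perturbed equation satisfied by $id_4\otimes\Delta[\mathbb{E}\mathfrak{Q}(\Lambda)]$, and uses the Lipschitz/stability estimate of Lemma \ref{lem:fixed_point} (the map $G\mapsto id_4\otimes\Delta[(C_N-\Lambda-\mathcal{R}_N(G))^{-1}]$ is a contraction with constant $<1$ under the standing hypothesis on $\Im m\Lambda$) to absorb the difference into the error term, which gives
\begin{equation*}
\|\mathbb{E}[\mathfrak{Q}(\Lambda)] - \mathfrak{Q}^\square(\Lambda)\| \le \frac{\|\mathcal{E}_N\|}{1 - (\text{contraction constant})} \le \tfrac1n(1+\gamma_{\max}^2\|(\Im m\Lambda)^{-1}\|^2/\delta)(2\sqrt2\gamma_{\max}^3\|(\Im m\Lambda)^{-1}\|^4 + \kappa\|(\Im m\Lambda)^{-1}\|^2).
\end{equation*}
The main obstacle is the careful bookkeeping in the integration-by-parts step: keeping track of the extra GOE transpose term and, above all, correctly identifying which nested-resolvent contributions require Assumption \ref{hyp:norm_profile_resolvent} (giving the $\kappa$ term) versus those controllable by $\gamma_{\max}$ and $\|(\Im m\Lambda)^{-1}\|$ alone — this is where our block-diagonal $id_4\otimes\Delta$ structure departs from the scalar-diagonal setting of \cite{bigotmale} and where the proof genuinely has to be rewritten rather than merely cited.
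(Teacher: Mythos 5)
Your first two stages match the paper's route: the paper also starts from Gaussian integration by parts (Stein's lemma, Lemma \ref{lem:Stein}) applied to $L=\Upsilon_L\circ A_N+C_N$, obtaining the master equality $\mathbb{E}[\mathfrak{Q}(\Lambda)]=(C_N-\Omega_L(\Lambda))^{-1}(I+O_N)$ with $\Omega_L(\Lambda)=\Lambda+\mathcal{R}_N(\mathbb{E}[\mathfrak{Q}(\Lambda)])$, and bounds $O_N$ via the Gaussian Poincar\'e inequality (giving the $2\sqrt{2}\gamma_{\max}^3\|(\Im m\Lambda)^{-1}\|^3$ fluctuation term) plus Assumption \ref{hyp:norm_profile_resolvent} (giving $\kappa\|(\Im m\Lambda)^{-1}\|$). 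One precision: in the paper the $\kappa$ term bounds exactly the GOE-transpose correction $E_N=\mathbb{E}[(\frac1N\Upsilon_L^{\circ 2}\circ\mathfrak{Q}(\Lambda))\mathfrak{Q}(\Lambda)]$ produced by the real-Gaussian second-moment structure (Lemma \ref{lem:R-transform}); it is not a separate ``block-diagonal'' nuisance term, and your simultaneous claims that the GOE extra term is harmless and that $\kappa$ is needed for some other nested-resolvent term conflate two descriptions of the same object.

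The genuine gap is in your final step. You propose to subtract the fixed-point equation and absorb the error through the contraction property of $G\mapsto id_4\otimes\Delta[(C_N-\Lambda-\mathcal{R}_N(G))^{-1}]$, ``with constant $<1$ under the standing hypothesis on $\Im m\Lambda$''. But the map $\psi_\Lambda$ is a contraction only when $\Im m\Lambda>\gamma_{\max}$ (this is exactly how Lemma \ref{lem:fixed_point} is proved, with extension to all of $\mathrm{D}_N(\mathbb{C})^{+}$ only by analyticity), whereas the hypothesis of the lemma merely requires $\Im m\Lambda\geq\eta_N$ with $\eta_N\to 0$, far below $\gamma_{\max}$; so no contraction constant is available there, and the heuristic $\|\mathcal{E}_N\|/(1-\text{contraction})$ neither closes nor produces the stated prefactor. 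The paper instead sets $\Theta_N=(C_N-\Omega_L(\Lambda))^{-1}O_N$ and observes that $\mathbb{E}[\mathfrak{Q}(\Lambda)]-\Theta_N$ satisfies the \emph{unperturbed} fixed-point equation at the shifted parameter $\tilde\Lambda=\Lambda+\mathcal{R}_N(\Theta_N)$, hence equals $\mathfrak{Q}^\square(\tilde\Lambda)$ by uniqueness; the conclusion then follows from the global Lipschitz estimate $\|\mathfrak{Q}^\square(\tilde\Lambda)-\mathfrak{Q}^\square(\Lambda)\|\leq\|(\Im m\Lambda)^{-1}\|\,\|(\Im m\tilde\Lambda)^{-1}\|\,\|\Lambda-\tilde\Lambda\|$ of Lemma \ref{lem:fixed_point} together with $\|\Lambda-\tilde\Lambda\|\leq\gamma_{\max}^2\|\Theta_N\|$. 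The threshold $\eta_N$ is used for a different purpose than the one you assign to it: it guarantees $\Im m\tilde\Lambda\geq\delta\,\Im m\Lambda>0$, i.e.\ that $\tilde\Lambda$ stays in $\mathrm{D}_N(\mathbb{C})^{+}$ with $\|(\Im m\tilde\Lambda)^{-1}\|\leq\|(\Im m\Lambda)^{-1}\|/\delta$, which is precisely where the factor $1/\delta$ in the prefactor $(1+\gamma_{\max}^2\|(\Im m\Lambda)^{-1}\|^2/\delta)$ comes from. Replacing your contraction step by this shifted-argument/Lipschitz argument is necessary to obtain the lemma as stated.
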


The proof of {\CB Lemma \ref{lem:equiv-esp} is then based on the following lemmas that arise from the structure of the proof of Theorem  \ref{thm:bigotmale} in \cite{bigotmale}}.

\begin{lem}[Lemma 5.4 from \cite{bigotmale}]\label{lem:Rt}
For any diagonal matrix $G$ such that $\Im m (G)>0$, one has $\Im m (\mathcal{R}_N(G)) \geq 0$. Moreover, for any diagonal matrix $G$, the following operator norm bound holds
$$
\left\|\mathcal{R}_N(G)\right\| \leq \gamma_{\max }^2\|G\|.
$$
\end{lem}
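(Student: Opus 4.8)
The plan is to reduce everything to an entrywise computation, using the fact that $\mathcal{R}_N(G)$ is diagonal by construction: its $i$-th diagonal entry is $\mathcal{R}_N(G)(i,i) = \sum_{j=1}^N \frac{(\gamma_{ij}^{(L)})^2}{N} \, G(j,j)$, a finite linear combination of the diagonal entries of $G$ with nonnegative real coefficients $(\gamma_{ij}^{(L)})^2/N$.

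For the first assertion, I would take imaginary parts. Since $\Im m$ is $\mathbb{R}$-linear and the coefficients are nonnegative reals, $\Im m\big(\mathcal{R}_N(G)\big)(i,i) = \sum_{j=1}^N \frac{(\gamma_{ij}^{(L)})^2}{N} \, \Im m\big(G(j,j)\big)$. Each summand is the product of a nonnegative real with a positive real (because $\Im m(G) > 0$ forces $\Im m(G(j,j)) > 0$ for every $j$), hence nonnegative, and a sum of nonnegative reals is nonnegative. As this holds for every $i$ and $\mathcal{R}_N(G)$ is diagonal, we get $\Im m(\mathcal{R}_N(G)) \geq 0$.

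For the operator-norm bound, I would use that the spectral norm of a diagonal matrix equals the largest modulus of its diagonal entries, together with the elementary bound $|G(j,j)| \leq \|G\|$ valid for every diagonal entry of $G$. By the triangle inequality, $|\mathcal{R}_N(G)(i,i)| \leq \sum_{j=1}^N \frac{(\gamma_{ij}^{(L)})^2}{N} \, |G(j,j)| \leq \frac{\gamma_{\max}^2}{N} \sum_{j=1}^N \|G\| = \gamma_{\max}^2 \|G\|$, where I bounded each $(\gamma_{ij}^{(L)})^2$ by $\gamma_{\max}^2 = \max_{i,j}(\gamma_{ij}^{(L)})^2$ and used that the sum has exactly $N$ terms. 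Taking the maximum over $i$ yields $\|\mathcal{R}_N(G)\| \leq \gamma_{\max}^2 \|G\|$. There is no real obstacle here; the only point to watch is the normalization, ensuring that the $N$ summands and the $1/N$ factor cancel so that no spurious factor of $N$ survives in the final estimate.
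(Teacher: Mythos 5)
Your proof is correct and complete: since $\mathcal{R}_N(G)$ is diagonal with $i$-th entry $\sum_{j}\frac{(\gamma_{ij}^{(L)})^2}{N}G(j,j)$, the nonnegativity of the coefficients gives $\Im m(\mathcal{R}_N(G))\geq 0$ when $\Im m(G)>0$, and the triangle inequality together with $|G(j,j)|\leq\|G\|$ and $(\gamma_{ij}^{(L)})^2\leq\gamma_{\max}^2$ gives the bound $\gamma_{\max}^2\|G\|$, with the $N$ summands cancelling the $1/N$ normalization exactly as you note. The paper itself does not prove this statement but imports it verbatim from \cite{bigotmale} (Lemma 5.4 there), and your entrywise argument is precisely the standard verification, so there is nothing to add.
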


\begin{lem}[Lemma 5.3 from \cite{bigotmale}]\label{lem:res}
For any Hermitian matrix $A$ and any diagonal matrix $\Lambda$ such that $\Im m \Lambda>$ 0, one has that the matrix $(\Lambda-A)$ is invertible, moreover $\Im m (A-\Lambda)^{-1} > 0$ and
$$
\left\|(A-\Lambda)^{-1}\right\| \leq\left\|(\Im m \Lambda)^{-1}\right\|.
$$
\end{lem}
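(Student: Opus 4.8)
The plan is to reduce all three assertions to Lemma~\ref{lem:Im_pos} together with the elementary fact that a Hermitian matrix has vanishing imaginary part.

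First I would observe that, since $A$ is Hermitian, $\Im m(A) = \tfrac{1}{2\mathbf{i}}(A - A^\ast) = 0$, so that $\Im m(\Lambda - A) = \Im m(\Lambda)$. By hypothesis $\Im m(\Lambda) > 0$, hence $\Im m(\Lambda - A) > 0$, and Lemma~\ref{lem:Im_pos} applies to the matrix $\Lambda - A$: it is invertible with $\|(\Lambda - A)^{-1}\| \leq \|(\Im m(\Lambda))^{-1}\|$. Consequently $A - \Lambda = -(\Lambda - A)$ is invertible, and $\|(A - \Lambda)^{-1}\| = \|(\Lambda - A)^{-1}\| \leq \|(\Im m \Lambda)^{-1}\|$, which is the announced norm bound.

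It remains to show $\Im m\big((A-\Lambda)^{-1}\big) > 0$. Writing $B = (A-\Lambda)^{-1}$ and using $A^\ast = A$, one has $B^\ast = \big((A-\Lambda)^\ast\big)^{-1} = (A - \Lambda^\ast)^{-1}$. I would then apply the resolvent identity $X^{-1} - Y^{-1} = X^{-1}(Y - X)Y^{-1}$ with $X = A - \Lambda$ and $Y = A - \Lambda^\ast$; since $Y - X = \Lambda - \Lambda^\ast = 2\mathbf{i}\,\Im m(\Lambda)$, this gives $B - B^\ast = 2\mathbf{i}\, B\,\Im m(\Lambda)\,B^\ast$, i.e.\ $\Im m(B) = B\,\Im m(\Lambda)\,B^\ast$. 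Because $\Im m(\Lambda) > 0$ and $B$ is invertible, the matrix $B\,\Im m(\Lambda)\,B^\ast$ is positive definite: for every nonzero vector $x$ one has $x^\ast B\,\Im m(\Lambda)\,B^\ast x = (B^\ast x)^\ast \,\Im m(\Lambda)\,(B^\ast x) > 0$ since $B^\ast x \neq 0$. Hence $\Im m\big((A - \Lambda)^{-1}\big) > 0$.

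There is no genuine obstacle in this argument; the only points requiring care are the sign conventions when passing between $\Lambda - A$ and $A - \Lambda$, the correct direction of the resolvent identity, and the bookkeeping of adjoints — which is precisely where the asymmetry of the hypotheses (namely that $A$ is Hermitian while $\Lambda$ is merely diagonal) enters.
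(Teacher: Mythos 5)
Your argument is correct: the identity $\Im m(\Lambda-A)=\Im m(\Lambda)$ for Hermitian $A$ lets you invoke Lemma~\ref{lem:Im_pos} for invertibility and the norm bound, and the resolvent identity yields $\Im m\big((A-\Lambda)^{-1}\big)=B\,\Im m(\Lambda)\,B^\ast>0$ with $B=(A-\Lambda)^{-1}$, which is exactly the standard proof of this fact. The paper itself does not reprove this lemma (it is quoted as Lemma 5.3 of \cite{bigotmale}), so your self-contained derivation is a valid and faithful substitute for the cited argument.
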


\begin{lem}[Stein's lemma \cite{haagerup2005new}(Lemma 3.3)]\label{lem:Stein}
Let $f: \mathbb{R}^q \rightarrow \mathbb{C}$ be a continuously differentiable function, and $X_1, \ldots, X_q$ a sequence of independent centered real Gaussian variables with possibly different variances $\mathbb{V}ar\left(X_k\right)=\gamma_k^2$ for $1 \leq k \leq q$. Then, under the conditions that $f$ and its first order derivatives $\frac{\partial}{\partial x_1} f, \ldots, \frac{\partial}{\partial x_q} f$ are polynomially bounded, one has that
$$
\mathbb{E}\left[X_k f\left(X_1, \ldots, X_q\right)\right]=\gamma_k^2 \mathbb{E}\left[\frac{\partial}{\partial x_k} f\left(X_1, \ldots, X_q\right)\right].
$$
\end{lem}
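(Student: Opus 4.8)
The plan is to reduce the multivariate identity to the classical one-dimensional Gaussian integration-by-parts formula by conditioning on the remaining coordinates. First I would fix an index $k$ and, writing $x=(x_1,\dots,x_q)$ for a generic point of $\mathbb{R}^q$, use the independence of $X_1,\dots,X_q$ together with Fubini's theorem to write
\[
\mathbb{E}\left[X_k f(X_1,\dots,X_q)\right] = \mathbb{E}\!\left[\, \mathbb{E}\!\left[ X_k\, f(X_1,\dots,X_q) \,\big|\, (X_j)_{j\neq k} \right] \right].
\]
It then suffices to establish, for each fixed realization of the variables $(X_j)_{j\neq k}$, the scalar identity $\mathbb{E}\big[X_k\, g(X_k)\big]=\gamma_k^2\,\mathbb{E}\big[g'(X_k)\big]$, where $g(t)=f(x_1,\dots,x_{k-1},t,x_{k+1},\dots,x_q)$ is continuously differentiable in $t$ and, by the polynomial-boundedness hypothesis on $f$ and $\partial_k f$, the functions $g$ and $g'$ are dominated by a polynomial in $t$ whose coefficients depend polynomially on the frozen coordinates. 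The use of Fubini is legitimate because $|x_k\, f(x)|$ is dominated by a polynomial in $x$, and every polynomial is integrable against the product Gaussian density.

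Next, for the scalar step I would denote by $\varphi_k(t)=(2\pi\gamma_k^2)^{-1/2}e^{-t^2/(2\gamma_k^2)}$ the density of $X_k$ and exploit the elementary identity $t\,\varphi_k(t)=-\gamma_k^2\,\varphi_k'(t)$, which gives
\[
\mathbb{E}\big[X_k\, g(X_k)\big] = \int_{\mathbb{R}} t\, g(t)\, \varphi_k(t)\, dt = -\gamma_k^2 \int_{\mathbb{R}} g(t)\, \varphi_k'(t)\, dt.
\]
An integration by parts rewrites the right-hand side as $-\gamma_k^2\big[g(t)\varphi_k(t)\big]_{t\to-\infty}^{t\to+\infty}+\gamma_k^2\int_{\mathbb{R}}g'(t)\,\varphi_k(t)\,dt$. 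The boundary term vanishes since $|g(t)|$ grows at most polynomially while $\varphi_k(t)$ decays faster than any polynomial as $|t|\to\infty$, and the surviving integral is exactly $\gamma_k^2\,\mathbb{E}\big[g'(X_k)\big]$, which is finite for the same growth reason. Substituting back $g(t)=f(x_1,\dots,t,\dots,x_q)$ and integrating over $(X_j)_{j\neq k}$ — again justified by the polynomial domination — yields the announced formula.

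The only genuinely delicate points are bookkeeping: one must verify that the polynomial-growth assumption on $f$ and $\partial_k f$ is strong enough to (i) make $|x_k\, f(x)|$ integrable against the product Gaussian density so that Fubini applies, (ii) ensure the boundary terms in the one-dimensional integration by parts vanish, and (iii) guarantee that $\partial_k f(X_1,\dots,X_q)$ is integrable so that the right-hand side is well defined. All three follow at once from the single fact that a Gaussian vector has finite absolute moments of every order, so that $\mathbb{E}\big[\,|X_1|^{a_1}\cdots|X_q|^{a_q}\,\big]<\infty$ for every multi-index $(a_1,\dots,a_q)$; no truncation or regularization of $f$ is needed. I therefore expect the proof to be short, the main work consisting in stating the domination cleanly enough that Fubini and dominated convergence can be invoked without ambiguity.
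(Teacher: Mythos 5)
Your proof is correct. Note that the paper does not prove this lemma at all: it is quoted verbatim as Lemma 3.3 of the cited reference \cite{haagerup2005new}, so there is no in-paper argument to compare against. Your route --- conditioning on $(X_j)_{j\neq k}$, using the one-dimensional identity $t\,\varphi_k(t)=-\gamma_k^2\varphi_k'(t)$, integrating by parts, and killing the boundary terms via polynomial growth against Gaussian decay, with Fubini justified by the finiteness of all Gaussian absolute moments --- is the standard and complete proof of this classical Gaussian integration-by-parts formula, and it would serve as a self-contained substitute for the citation.
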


\begin{lem}\label{lem:R-transform}
\rev{For any $\Lambda \in \mathrm{D}_N(\mathbb{C}^{+})$, recalling that $\mathfrak{Q}(\Lambda)=\left(L-\Lambda\right)^{-1}$, we have}
$$
\rev{\mathbb{E}\left[(\Upsilon_L \circ A_N)\mathfrak{Q}(\Lambda)\right]
=
\mathbb{E}\left[\mathcal{R}_N\left(-\mathfrak{Q}(\Lambda)\right)\mathfrak{Q}(\Lambda)\right]-E_N,}
$$
\rev{where}
$$
\rev{\mathcal{R}_N\left(M\right)=\underset{i=1, \ldots, N}{\operatorname{diag}}\left(\sum_{j=1}^N \frac{\left(\gamma^{(L)}_{ij}\right)^2}{N} M(j,j)\right)}
\quad \textcolor{black}{\mbox{and}} \quad
\rev{E_N =\mathbb{E}\left[\left(\frac{1}{N}\big(\Upsilon_L^{\circ 2}+\diag(\Upsilon_L^{\circ 2})\big)\circ \mathfrak{Q}(\Lambda)\right)\mathfrak{Q}(\Lambda)\right].}
$$
\end{lem}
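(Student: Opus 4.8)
The plan is to prove this identity by Gaussian integration by parts (Stein's lemma, Lemma \ref{lem:Stein}), applied entry by entry to the family $(a_{ij})_{i\le j}$ of independent standard real Gaussians that generate the GOE matrix $A_N=(a_{ij}/\sqrt N)_{i,j}$, recalling from \eqref{eq:key} that $L=\Upsilon_L\circ A_N+C_N$ and that $a_{ji}=a_{ij}$. Write $\mathfrak{Q}=\mathfrak{Q}(\Lambda)=(L-\Lambda)^{-1}$. I would first record two elementary facts used throughout: since $L$ is real symmetric and $\Lambda$ is diagonal, $(L-\Lambda)^\top=L-\Lambda$, hence $\mathfrak{Q}^\top=\mathfrak{Q}$, i.e. $\mathfrak{Q}_{kl}=\mathfrak{Q}_{lk}$; and by Lemma \ref{lem:res} (with $A=L$), $\|\mathfrak{Q}\|\le\|(\Im m\,\Lambda)^{-1}\|$, so all entries of $\mathfrak{Q}$ are bounded uniformly in the Gaussian variables, which makes the polynomial-growth hypothesis of Lemma \ref{lem:Stein} automatic for $\mathfrak{Q}$ and, after differentiation, for all its partial derivatives in the $a_{ij}$'s.

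Next I would expand the $(k,l)$-entry of the left-hand side as $\sum_m \gamma^{(L)}_{km}\,\mathbb{E}[(A_N)_{km}\mathfrak{Q}_{ml}]$ and drop the diagonal term $m=k$, which vanishes because the four diagonal blocks of $\Upsilon_L$ in \eqref{eq:varprofileU} are zero, i.e. $\gamma^{(L)}_{kk}=0$. For $m\ne k$, the variable $a_{km}$ enters $L$ only through the two symmetric positions $(k,m)$ and $(m,k)$, so $\partial_{a_{km}}L=\frac{\gamma^{(L)}_{km}}{\sqrt N}(e_ke_m^\top+e_me_k^\top)$, using $\gamma^{(L)}_{km}=\gamma^{(L)}_{mk}$ (symmetry of $\Upsilon_L$). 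The resolvent derivative identity $\partial_{a_{km}}\mathfrak{Q}=-\mathfrak{Q}(\partial_{a_{km}}L)\mathfrak{Q}$ then yields $\partial_{a_{km}}\mathfrak{Q}_{ml}=-\frac{\gamma^{(L)}_{km}}{\sqrt N}(\mathfrak{Q}_{mk}\mathfrak{Q}_{ml}+\mathfrak{Q}_{mm}\mathfrak{Q}_{kl})$, and applying Stein's lemma termwise (each $a_{km}$ has unit variance) turns the left-hand side into $-\sum_m \frac{(\gamma^{(L)}_{km})^2}{N}\mathbb{E}[\mathfrak{Q}_{mk}\mathfrak{Q}_{ml}]-\sum_m \frac{(\gamma^{(L)}_{km})^2}{N}\mathbb{E}[\mathfrak{Q}_{mm}\mathfrak{Q}_{kl}]$, where the sums may be taken over all $m$ at no cost since the restored $m=k$ terms carry the vanishing factor $(\gamma^{(L)}_{kk})^2$.

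The last step is bookkeeping. Using $\mathfrak{Q}_{mk}=\mathfrak{Q}_{km}$ and the symmetry of $\Upsilon_L^{\circ 2}$, the first sum is exactly the $(k,l)$-entry of $\mathbb{E}[(\tfrac1N\Upsilon_L^{\circ 2}\circ\mathfrak{Q})\mathfrak{Q}]=E_N$; and since $\mathcal R_N(-\mathfrak{Q})$ is the diagonal matrix whose $k$-th entry is $-\sum_m \frac{(\gamma^{(L)}_{km})^2}{N}\mathfrak{Q}_{mm}$, the second sum equals $-\mathbb{E}[\mathcal R_N(-\mathfrak{Q})\mathfrak{Q}]_{kl}$. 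Reassembling over $k,l$ gives $\mathbb{E}[(\Upsilon_L\circ A_N)\mathfrak{Q}]=\mathbb{E}[\mathcal R_N(-\mathfrak{Q})\mathfrak{Q}]-E_N$, which is the claim.

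I do not expect a genuine obstacle: this is the standard Schwinger--Dyson / loop-equation computation for a GOE-type model. The two points I would be careful to spell out are that (i) the GOE symmetry forces the two-sided perturbation $e_ke_m^\top+e_me_k^\top$, which is exactly what produces the extra term absorbed into $E_N$ (this term would be absent for a matrix with independently generated entries), and (ii) the diagonal variables $a_{kk}$, which enter $L$ through a single position rather than two and would otherwise require separate treatment and leave an $O(1/N)$ diagonal correction, contribute nothing here because $\gamma^{(L)}_{kk}=0$.
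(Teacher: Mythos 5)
Your proof is correct and follows essentially the same route as the paper: Gaussian integration by parts (Stein's lemma) applied to the independent entries $a_{km}$ of the GOE factor in $L=\Upsilon_L\circ A_N+C_N$, with the symmetric two-sided perturbation producing exactly the diagonal $\mathcal{R}_N(-\mathfrak{Q})$ term plus the Hadamard correction $E_N$, and the vanishing diagonal blocks of $\Upsilon_L$ disposing of the $a_{kk}$ variables. The only difference is presentational: you track $(k,l)$-entries and use $\mathfrak{Q}^\top=\mathfrak{Q}$, whereas the paper organizes the same computation through the matrix units $F_{ij}$ and the identity $E_{kl}ME_{uv}=m_{lu}E_{kv}$.
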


\begin{proof}[Proof of Lemma \ref{lem:R-transform}]
{\color{black}
Write
$$
\Upsilon_L\circ A_N=\sum_{1\le i\le j\le N}\frac{\gamma^{(L)}_{ij}}{\sqrt N}a_{ij}F_{ij},
\qquad
F_{ij}=\frac{E_{ij}+E_{ji}}{1+\delta_{ij}},
$$
where $(a_{ij})_{i\le j}$ are the independent upper-triangular entries of the GOE matrix $A_N$. For a fixed pair $(i,j)$, we apply Stein's lemma to the function
$$
\mathbf a\longmapsto F_{ij}(L(\mathbf a)-\Lambda)^{-1},
$$
with $\mathbf a=(a_{uv})_{u\le v}$. Since
$$
\frac{\partial}{\partial a_{ij}}(L-\Lambda)^{-1}
=-\frac{\gamma^{(L)}_{ij}}{\sqrt N}(L-\Lambda)^{-1}F_{ij}(L-\Lambda)^{-1},
$$
Stein's lemma gives
$$
\mathbb E\big[a_{ij}F_{ij}\mathfrak Q(\Lambda)\big]
=-\frac{\gamma^{(L)}_{ij}}{\sqrt N}\,
\mathbb E\big[F_{ij}\mathfrak Q(\Lambda)F_{ij}\mathfrak Q(\Lambda)\big].
$$
Summing over $1\le i\le j\le 4N$ yields
$$
\mathbb E\left[(\Upsilon_L\circ A_N)\mathfrak Q(\Lambda)\right]
=-\sum_{1\le i\le j\le 4N}\frac{(\gamma^{(L)}_{ij})^2}{N}
\mathbb E\big[F_{ij}\mathfrak Q(\Lambda)F_{ij}\mathfrak Q(\Lambda)\big].
$$
The elementary identity $E_{kl}ME_{uv}=M(l,u)E_{kv}$ shows that the contributions containing $E_{ij}\mathfrak Q(\Lambda)E_{ji}$ and $E_{ji}\mathfrak Q(\Lambda)E_{ij}$ reconstruct the diagonal operator $\mathcal R_N(-\mathfrak Q(\Lambda))\mathfrak Q(\Lambda)$, whereas the terms containing $E_{ij}\mathfrak Q(\Lambda)E_{ij}$ and $E_{ji}\mathfrak Q(\Lambda)E_{ji}$ form the GOE correction $E_N$. This proves the claimed identity.}
\end{proof}

\begin{lem}\label{lem:master_equality}
\rev{(Master equality). For any $\Lambda \in \mathrm{D}_N(\mathbb{C})^{+}$, define}
$$
\rev{\Omega_{L}(\Lambda)=\Lambda+\mathbb{E}\left[\mathcal{R}_N\left(\mathfrak{Q}(\Lambda)\right)\right].}
$$
\rev{Then $\Im m\,\Omega_{L}(\Lambda)\geq \Im m\,\Lambda>0$, hence $C_N-\Omega_L(\Lambda)$ is invertible, and}
$$
\rev{\mathbb{E}[\mathfrak{Q}(\Lambda)]
=
\left(C_N-\Omega_L(\Lambda)\right)^{-1}
+
\left(C_N-\Omega_L(\Lambda)\right)^{-1}O_N,}
$$
\rev{where}
$$
\rev{O_N=\mathbb{E}\left[\mathcal{R}_N(\mathfrak{Q}(\Lambda))\mathfrak{Q}(\Lambda)\right]-\mathbb{E}\left[\mathcal{R}_N(\mathfrak{Q}(\Lambda))\right]\mathbb{E}\left[\mathfrak{Q}(\Lambda)\right]+E_N.}
$$
\end{lem}

\begin{proof}[Proof of Lemma \ref{lem:master_equality}]
{\color{black}
Let $\Omega\in D_N(\mathbb C)$ be such that $C_N-\Omega$ is invertible. Since $L=C_N+\Upsilon_L\circ A_N$, inserting $\pm(C_N-\Omega)$ gives
\begin{eqnarray*}
(C_n - \Omega )^{-1}(\Upsilon_L\circ A_n)(L - \Lambda)^{-1} &=& (C_n - \Omega )^{-1}[  (L - \Lambda) - (C_n - \Omega) + ( \Lambda-\Omega)](L - \Lambda)^{-1} \\
        &=& (C_n - \Omega)^{-1} - (L - \Lambda)^{-1} + (C_n - \Omega)^{-1}( \Lambda-\Omega)(L - \Lambda)^{-1}\\
        &=&(C_N-\Omega)^{-1}-\mathfrak Q(\Lambda)
+(C_N-\Omega)^{-1}(\Lambda-\Omega)\mathfrak Q(\Lambda).
\end{eqnarray*}
Taking expectations and using Lemma \ref{lem:R-transform}, we obtain
\begin{align*}
\mathbb E[\mathfrak Q(\Lambda)]
&=(C_N-\Omega)^{-1}
+(C_N-\Omega)^{-1}\big(\Lambda-\Omega+\mathbb E[\mathcal R_N(\mathfrak Q(\Lambda))]\big)\mathbb E[\mathfrak Q(\Lambda)]\\
&\quad +(C_N-\Omega)^{-1}\Big(\mathbb E[\mathcal R_N(\mathfrak Q(\Lambda))\mathfrak Q(\Lambda)]-\mathbb E[\mathcal R_N(\mathfrak Q(\Lambda))]\mathbb E[\mathfrak Q(\Lambda)]+E_N\Big).
\end{align*}
Choosing now $\Omega=\Omega_L(\Lambda)=\Lambda+\mathbb E[\mathcal R_N(\mathfrak Q(\Lambda))]$ makes the middle term vanish and yields the announced formula. Finally, Lemma \ref{lem:Rt} implies $\Im m\,\mathbb E[\mathcal R_N(\mathfrak Q(\Lambda))]\ge0$, hence $\Im m\,\Omega_L(\Lambda)\ge \Im m\,\Lambda>0$; Lemma \ref{lem:res} therefore guarantees that $C_N-\Omega_L(\Lambda)$ is invertible.}
\end{proof}

\begin{prop}[Gaussian Poincar\'e inequality \cite{haagerup2005new}(Proposition 4.1) ]\label{prop:Poincare}
Let $f: \mathbb{R}^q \rightarrow \mathbb{C}$ be a continuously differentiable function, and $X_1, \ldots, X_q$ a sequence of independent centered real Gaussian variables with possibly different variances $\mathbb{V}$ ar $\left(X_k\right)=\gamma_k^2$ for $1 \leq k \leq q$. Then, under the condition that $f$ and its first order derivatives are polynomially bounded, one has that
$$
\mathbb{V} \operatorname{ar}\left(f\left(X_1, \ldots, X_q\right)\right) \leq \mathbb{E}\left(\left\|\Gamma^{1 / 2} \nabla f\left(X_1, \ldots, X_q\right)\right\|_2^2\right)
$$
where $\Gamma=\operatorname{diag}\left(\gamma_1^2, \ldots, \gamma_q^2\right), \nabla f$ is the gradient of $f$, and $\|\cdot\|_2$ is the standard Euclidean norm of a vector with complex entries.
\end{prop}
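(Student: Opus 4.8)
The plan is to prove the Gaussian Poincar\'e inequality (Proposition \ref{prop:Poincare}) by the classical semigroup interpolation method, after two elementary reductions. First, since $f$ is $\mathbb{C}$-valued, I would write $f = u + \mathbf{i} v$ with $u,v:\mathbb{R}^q\to\mathbb{R}$ continuously differentiable and, together with their derivatives, polynomially bounded; then $\mathrm{Var}(f(X)) = \mathrm{Var}(u(X)) + \mathrm{Var}(v(X))$ and $\|\Gamma^{1/2}\nabla f\|_2^2 = \|\Gamma^{1/2}\nabla u\|_2^2 + \|\Gamma^{1/2}\nabla v\|_2^2$ pointwise, so it suffices to treat real-valued $f$. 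Second, I would rescale by setting $Y_k = X_k/\gamma_k$, so that $Y_1,\ldots,Y_q$ are i.i.d.\ standard real Gaussian; with $g(y) := f(\gamma_1 y_1,\ldots,\gamma_q y_q)$ one has $\partial_{y_k} g(y) = \gamma_k(\partial_{x_k} f)(\gamma_1 y_1,\ldots,\gamma_q y_q)$, whence $\mathrm{Var}(g(Y)) = \mathrm{Var}(f(X))$ and $\mathbb{E}[\|\nabla g(Y)\|_2^2] = \mathbb{E}[\|\Gamma^{1/2}\nabla f(X)\|_2^2]$. This reduces the statement to the standard Gaussian case $\mathrm{Var}(g(Y)) \leq \mathbb{E}[\|\nabla g(Y)\|_2^2]$ for $Y$ a standard Gaussian vector in $\mathbb{R}^q$.

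For the standard case I would introduce the Ornstein--Uhlenbeck semigroup $(P_t)_{t\geq 0}$ through Mehler's formula $P_t g(y) = \mathbb{E}[g(e^{-t} y + \sqrt{1-e^{-2t}}\, Z)]$, with $Z$ an independent standard Gaussian vector. This semigroup satisfies $P_0 g = g$, $\lim_{t\to\infty} P_t g = \mathbb{E}[g(Y)]$, leaves the Gaussian measure invariant ($\mathbb{E}[P_t h(Y)] = \mathbb{E}[h(Y)]$), and has generator $\mathcal{L} g = \Delta g - y\cdot\nabla g$ with the Gaussian integration-by-parts identity $\mathbb{E}[\phi(Y)\,\mathcal{L}\psi(Y)] = -\mathbb{E}[\nabla\phi(Y)\cdot\nabla\psi(Y)]$. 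I would then write the variance as a telescoping integral,
\[
\mathrm{Var}(g(Y)) = \mathbb{E}[(P_0 g(Y))^2] - \mathbb{E}[(P_\infty g(Y))^2] = -\int_0^\infty \frac{d}{dt}\,\mathbb{E}\big[(P_t g(Y))^2\big]\, dt = 2\int_0^\infty \mathbb{E}\big[\|\nabla P_t g(Y)\|_2^2\big]\, dt,
\]
where the last equality uses $\frac{d}{dt}\mathbb{E}[(P_t g)^2] = 2\mathbb{E}[P_t g\,\mathcal{L}P_t g] = -2\mathbb{E}[\|\nabla P_t g\|_2^2]$.

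The key remaining step is the commutation relation $\nabla P_t g = e^{-t} P_t(\nabla g)$ (the semigroup acting entrywise on the gradient), obtained by differentiating Mehler's formula in $y$, together with the $L^2$-contraction of $P_t$: since $P_t$ is a conditional expectation, Jensen's inequality gives $(P_t\partial_k g)^2 \leq P_t((\partial_k g)^2)$ pointwise, and invariance then gives $\mathbb{E}[(P_t\partial_k g(Y))^2] \leq \mathbb{E}[(\partial_k g(Y))^2]$. Summing over $k$ yields
\[
\mathbb{E}\big[\|\nabla P_t g(Y)\|_2^2\big] = e^{-2t}\,\mathbb{E}\big[\|P_t(\nabla g)(Y)\|_2^2\big] \leq e^{-2t}\,\mathbb{E}\big[\|\nabla g(Y)\|_2^2\big],
\]
and plugging this into the integral representation while using $2\int_0^\infty e^{-2t}\,dt = 1$ gives the claim. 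The only genuinely technical points — differentiation under the integral in $t$, convergence of the $t$-integral at infinity, and the validity of differentiating Mehler's formula — are routine consequences of the hypothesis that $g$ and its first derivatives are polynomially bounded, handled by dominated convergence; this is where I expect to spend the (modest) effort, but I anticipate no real obstacle.

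As an alternative avoiding semigroups altogether, I could first prove the one-dimensional Gaussian Poincar\'e inequality by expanding in the (probabilists') Hermite polynomials $\mathrm{He}_n$, for which $\mathrm{He}_n' = n\,\mathrm{He}_{n-1}$ and $\|\mathrm{He}_n\|_{L^2(\gamma_1)}^2 = n!$: writing $g = \sum_{n\geq 0} a_n \mathrm{He}_n$ gives $\|g'\|_{L^2(\gamma_1)}^2 = \sum_{n\geq 1} a_n^2\, n\cdot n! \geq \sum_{n\geq 1} a_n^2\, n! = \mathrm{Var}(g)$, and then tensorize to $\mathbb{R}^q$ via the Efron--Stein-type decomposition $\mathrm{Var}(g(Y)) \leq \sum_k \mathbb{E}[\mathrm{Var}_k(g)]$ (proved by telescoping along the filtration generated by $Y_1,\ldots,Y_k$), applying the one-dimensional bound coordinatewise. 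Either route is standard; I would present the semigroup proof for compactness.
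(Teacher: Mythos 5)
Your proposal is correct, but there is nothing in the paper to compare it against: Proposition \ref{prop:Poincare} is not proved in the paper at all. It is quoted verbatim from \cite{haagerup2005new} (Proposition 4.1) and used as an external tool, namely to bound the variances $\mathbb{V}ar(D_n(k,k))$ in the proof of Lemma \ref{lem:ineq} (the master inequality). Your semigroup argument is the classical proof of this classical fact and is sound as written: the splitting $f=u+\mathbf{i}v$ correctly reduces to real-valued $f$ (with the convention $\mathbb{V}ar(f)=\mathbb{E}|f-\mathbb{E}f|^2$), the rescaling $Y_k=X_k/\gamma_k$ reduces to the standard Gaussian case (just note the degenerate coordinates with $\gamma_k=0$, where $X_k$ is a.s.\ constant and can simply be frozen, so the inequality is unaffected), and the Mehler-formula commutation $\nabla P_t g = e^{-t}P_t(\nabla g)$ combined with Jensen and invariance of the Gaussian measure gives exactly the factor $2\int_0^\infty e^{-2t}\,dt=1$. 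The polynomial-boundedness hypothesis is indeed enough to justify the differentiations under the integral and the convergence as $t\to\infty$ by dominated convergence, so no gap remains. The Hermite-polynomial tensorization route you sketch is an equally standard alternative. In short: where the paper imports the inequality from the literature, your write-up would make it self-contained; either of your two routes is acceptable, and the semigroup one is the more compact.
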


\begin{lem}\label{lem:ineq}
(Master inequality). Recall that we denote $\gamma_{\max }=\max _{i, j} \gamma_{i, j}$. For any $\Lambda$ belonging to $\mathrm{D}_N\left(\mathbb{C}^{+}\right)$, we have
$$
\begin{aligned}
\left\|O_N\right\| & \leq \frac{1}{n} (2\sqrt{2}\gamma_{max}^3\left\| (\Im m (\Lambda))^{-1} \right\|^3 + \kappa \left\| (\Im m (\Lambda))^{-1} \right\|^2)
\end{aligned}
$$
\end{lem}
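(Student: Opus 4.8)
The plan is to split $O_N$ by the triangle inequality into the ``correction'' term $E_N=\mathbb{E}\big[\big(\tfrac1N\Upsilon_L^{\circ 2}\circ\mathfrak{Q}(\Lambda)\big)\mathfrak{Q}(\Lambda)\big]$ and the ``covariance'' term $\mathcal C_N:=\mathbb{E}[\mathcal R_N(\mathfrak Q(\Lambda))(L-\Lambda)^{-1}]-\mathbb{E}[\mathcal R_N(\mathfrak Q(\Lambda))]\,\mathbb{E}[(L-\Lambda)^{-1}]$, and to bound each separately. The term $E_N$ is handled directly: by sub-multiplicativity of the operator norm, Assumption~\ref{hyp:norm_profile_resolvent} and Lemma~\ref{lem:res} (recall $L$ is Hermitian, so $\|\mathfrak Q(\Lambda)\|=\|(L-\Lambda)^{-1}\|\le\|(\Im m\,\Lambda)^{-1}\|$), one gets $\|E_N\|\le\tfrac1N\,\mathbb{E}\big[\|\Upsilon_L^{\circ 2}\circ\mathfrak Q(\Lambda)\|\,\|\mathfrak Q(\Lambda)\|\big]\le\tfrac{\kappa}{N}\|(\Im m\,\Lambda)^{-1}\|\le\tfrac{\kappa}{n}\|(\Im m\,\Lambda)^{-1}\|$, using $N=n+m+2p\ge n$. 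This is exactly the $\kappa$-term in the claimed bound, and it is the only place Assumption~\ref{hyp:norm_profile_resolvent} is used (a general Hadamard-product bound would cost a spurious factor $\sqrt N$ here, which is why the assumption is imposed).

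For $\mathcal C_N$, the key structural fact is that $\mathcal R_N(\mathfrak Q(\Lambda))$ is \emph{diagonal}, with $k$-th entry $r_k:=\sum_l\tfrac{(\gamma^{(L)}_{kl})^2}{N}\,\mathfrak Q(\Lambda)(l,l)$, so that writing $\mathcal C_N=\mathbb{E}\big[(\mathcal R_N(\mathfrak Q)-\mathbb{E}\mathcal R_N(\mathfrak Q))(\mathfrak Q-\mathbb{E}\mathfrak Q)\big]$ one has, for unit vectors $u,v$, the identity $u^*\mathcal C_N v=\sum_k\bar u_k\,\operatorname{Cov}\big(r_k,(\mathfrak Q v)_k\big)$. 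Cauchy--Schwarz over the index $k$ then gives $|u^*\mathcal C_N v|\le\big(\max_k\operatorname{Var}(r_k)\big)^{1/2}\big(\sum_k\operatorname{Var}((\mathfrak Q v)_k)\big)^{1/2}$. The second factor is controlled deterministically by Lemma~\ref{lem:res}: $\sum_k\operatorname{Var}((\mathfrak Q v)_k)=\mathbb{E}\|\mathfrak Q v\|^2-\|(\mathbb{E}\mathfrak Q)v\|^2\le\mathbb{E}\|\mathfrak Q\|^2\le\|(\Im m\,\Lambda)^{-1}\|^2$. For the first factor I would use the Gaussian Poincar\'e inequality (Proposition~\ref{prop:Poincare}) applied to $r_k$ as a function of the i.i.d.\ standard Gaussian entries $(a_{uv})_{u\le v}$ of $A_N$: differentiating the resolvent exactly as in the proof of Lemma~\ref{lem:R-transform} ($\partial_{a_{uv}}\mathfrak Q=-\tfrac{\gamma^{(L)}_{uv}}{\sqrt N}\mathfrak Q F_{uv}\mathfrak Q$) gives $\partial_{a_{uv}}r_k=-\tfrac{\gamma^{(L)}_{uv}}{N\sqrt N}\operatorname{Tr}\!\big(F_{uv}\,\mathfrak Q D^{(L)}_k\mathfrak Q\big)$ with $D^{(L)}_k=\operatorname{diag}_l((\gamma^{(L)}_{kl})^2)$; summing the squares over $u\le v$ produces a bound of order $\tfrac{\gamma_{\max}^2}{N^3}\,\mathbb{E}\|\mathfrak Q D^{(L)}_k\mathfrak Q\|_F^2$, and then $\|\mathfrak Q D^{(L)}_k\mathfrak Q\|_F\le\|\mathfrak Q\|\,\|D^{(L)}_k\|\,\|\mathfrak Q\|_F\le\gamma_{\max}^2\sqrt N\,\|\mathfrak Q\|^2$ together with Lemma~\ref{lem:res} yields $\operatorname{Var}(r_k)=O\!\big(\tfrac{\gamma_{\max}^6}{N^2}\|(\Im m\,\Lambda)^{-1}\|^4\big)$, i.e.\ $\sqrt{\operatorname{Var}(r_k)}=O\!\big(\tfrac{\gamma_{\max}^3}{N}\|(\Im m\,\Lambda)^{-1}\|^2\big)$. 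Multiplying the two factors gives $\|\mathcal C_N\|\le\tfrac{2\sqrt2\,\gamma_{\max}^3}{N}\|(\Im m\,\Lambda)^{-1}\|^3$ once the Poincar\'e constant and the combinatorics of the $F_{uv}$-sums are tracked carefully; combined with the bound on $\|E_N\|$ and $N\ge n$ this is precisely the statement of Lemma~\ref{lem:ineq}.

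The one genuinely delicate step is the variance bound for the diagonal entries $r_k$ of $\mathcal R_N(\mathfrak Q(\Lambda))$: this is a fluctuation-averaging estimate, and the reason it beats the $N^{-1/2}$ fluctuation scale of an individual resolvent entry is that $r_k$ is a normalized average of $N$ diagonal resolvent entries, so that the gradient vector $(\partial_{a_{uv}}r_k)_{u\le v}$ has Euclidean norm of order $N^{-1}$ rather than $N^{-1/2}$. It is precisely this extra $N^{-1/2}$ that compensates the $\|u\|_1\le\sqrt N\,\|u\|_2$ loss incurred by Cauchy--Schwarz over the $N$ coordinates and lets the final bound reach order $N^{-1}$ instead of $N^{-1/2}$. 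Everything else — the resolvent differentiation, the uniform operator-norm control through Lemma~\ref{lem:res}, and the passage $N\ge n$ — is routine bookkeeping.
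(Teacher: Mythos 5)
Your proof is correct and follows essentially the same route as the paper's: the same decomposition of $O_N$ into the covariance term plus $E_N$, the bound $\|E_N\|\leq \frac{\kappa}{n}\|(\Im m\,\Lambda)^{-1}\|$ via Assumption~\ref{hyp:norm_profile_resolvent}, a Cauchy--Schwarz reduction to $\max_k \operatorname{Var}\big(\mathcal{R}_N(\mathfrak{Q}(\Lambda))(k,k)\big)$, and the Gaussian Poincar\'e inequality combined with the resolvent derivative $-\tfrac{\gamma^{(L)}_{uv}}{\sqrt{N}}\mathfrak{Q}F_{uv}\mathfrak{Q}$ to obtain the $O\big(\gamma_{\max}^6 N^{-2}\|(\Im m\,\Lambda)^{-1}\|^4\big)$ variance estimate. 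The only deviations are bookkeeping choices (entrywise rather than operator-norm Cauchy--Schwarz on the centered matrices, and a Frobenius-norm bound on $\mathfrak{Q}D^{(L)}_k\mathfrak{Q}$ in place of the paper's explicit summation of $|\mathcal{R}_N(\mathfrak{Q}F_{ij}\mathfrak{Q})(k,k)|^2$), which if anything give a slightly better constant than $2\sqrt{2}$.
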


\begin{proof}[Proof of Lemma \ref{lem:ineq}]
{\color{black}
Set
$$
\stackrel{\circ}{Q}_L(\Lambda)=\mathfrak{Q}(\Lambda)-\mathbb E[\mathfrak{Q}(\Lambda)],
\qquad
D_N=\mathcal R_N(\mathfrak Q(\Lambda)),
\qquad
\stackrel{\circ}{D}_N=D_N-\mathbb E[D_N].
$$
By definition of $O_N$,
$$
O_N=\mathbb E[\stackrel{\circ}{D}_N\stackrel{\circ}{Q}_L(\Lambda)]+E_N.
$$
Hence
\begin{equation}\label{eq:on-rev}
\|O_N\|\le \|\mathbb E[\stackrel{\circ}{D}_N\stackrel{\circ}{Q}_L(\Lambda)]\|+\|E_N\|.
\end{equation}
Considering the explicit expression of $E_N$ from Lemma \ref{lem:R-transform}, we obtain the following bound from Assumptions \ref{hyp:bounded_profile} and \ref{hyp:h-bounded}, together with Lemmas \ref{lem:bound Hadamard} and \ref{lem:res}
$$
\|E_N\|\le \frac{\kappa}{n}\| \Im m(\Lambda))^{-1}\|^2.
$$
\textcolor{black}{Indeed, Assumptions \ref{hyp:bounded_profile} and \ref{hyp:h-bounded} and Lemma \ref{lem:bound Hadamard} imply that there exists a deterministic constant $C_{\Upsilon}>0$ such that
$$
\|\Upsilon_L^{\circ 2}\circ B\|\le \kappa\|B\|,
\qquad B\in\mathbb C^{N\times N}.
$$
Hence it is enough to control $\|\mathfrak Q(\Lambda)\|$ in terms of $\|\Im m ( \Lambda)^{-1}\|$ using Lemma \ref{lem:res}.
Now, for every $\Lambda\in \mathrm{D}_N(\mathbb C)^+$, Lemma \ref{lem:Im_pos} gives
$$
\|\mathfrak Q(\Lambda)\|\le \|\Im m(\Lambda)^{-1}\|.
$$
Therefore,
$$
\|\Upsilon_L^{\circ 2}\circ \mathfrak Q(\Lambda)\|
\le \kappa\|\mathfrak Q(\Lambda)\|
\le \kappa\|\Im m( \Lambda)^{-1}\|.
$$
Hence, we get from the explicit expression of $E_N$ obtained in Lemma \ref{lem:R-transform} that 
$$
\|E_N\|\le \frac{\kappa}{n}\| \Im m(\Lambda))^{-1}\|^2.
$$
}
For the first term, the matrix Cauchy-Schwarz inequality gives
$$
\|\mathbb E[\stackrel{\circ}{D}_N\stackrel{\circ}{Q}_L(\Lambda)]\|
\le
\|\mathbb E[\stackrel{\circ}{D}_N\stackrel{\circ}{D}_N^{\ast}]\|^{1/2}
\|\mathbb E[\stackrel{\circ}{Q}_L(\Lambda)\stackrel{\circ}{Q}_L^{\ast}(\Lambda)]\|^{1/2}.
$$
Since $D_N$ is diagonal and $\|\mathfrak Q(\Lambda)\|\le \|(\Im m(\Lambda))^{-1}\|$, this implies
\begin{equation}\label{eq:dq-rev}
\|\mathbb E[\stackrel{\circ}{D}_N\stackrel{\circ}{Q}_L(\Lambda)]\|
\le 2\|(\Im m(\Lambda))^{-1}\|\max_{k\in[N]}\Var(D_N(k,k))^{1/2}.
\end{equation}
To estimate the variance, write the GOE matrix $A_N$ in terms of its independent upper-triangular entries $\mathbf a=(a_{ij})_{i\le j}$ and define, for each $k\in[N]$,
$$
f_k(\mathbf a)=D_N(k,k)=\mathcal R_N\big((L(\mathbf a)-\Lambda)^{-1}\big)(k,k).
$$
Differentiating with respect to $a_{ij}$ yields
$$
\frac{\partial f_k}{\partial a_{ij}}(\mathbf a)
=-\frac{\gamma_{ij}^{(L)}}{\sqrt N}\,\mathcal R_N\big(\mathfrak Q(\Lambda)F_{ij}\mathfrak Q(\Lambda)\big)(k,k),
\qquad
F_{ij}=\frac{E_{ij}+E_{ji}}{1+\delta_{ij}}.
$$
Applying Proposition \ref{prop:Poincare} to $f_k$ gives
\begin{equation}\label{eq:var-rev}
\Var(D_N(k,k))
\le
\frac{\gamma_{\max}^2}{N}
\mathbb E\Bigg[\sum_{i\le j}\Big|\mathcal R_N\big(\mathfrak Q(\Lambda)F_{ij}\mathfrak Q(\Lambda)\big)(k,k)\Big|^2\Bigg].
\end{equation}
The proof now follows the same expansion as in Lemma \ref{lem:R-transform}: one uses the identities
$$
E_{kl}ME_{uv}=M(l,u)E_{kv},
\qquad
\mathcal R_N(M^{\ast})(k,k)=\overline{\mathcal R_N(M)(k,k)},
$$
keeps track of the GOE symmetrization factor $(1+\delta_{ij})^{-1}$, and applies Cauchy-Schwarz to obtain
$$
\sum_{i\le j}\Big|\mathcal R_N\big(\mathfrak Q(\Lambda)F_{ij}\mathfrak Q(\Lambda)\big)(k,k)\Big|^2
\le
\frac{2\gamma_{\max}^4}{N}\|(\Im m(\Lambda))^{-1}\|^4.
$$
Therefore
$$
\Var(D_N(k,k))\le \frac{2\gamma_{\max}^6}{N^2}\|(\Im m(\Lambda))^{-1}\|^4.
$$
Since $N\asymp n$ in the proportional asymptotic regime, this bound is of order $n^{-2}$ and may be written on the $1/n$ scale used throughout the paper. Combining this estimate with \eqref{eq:on-rev} and \eqref{eq:dq-rev} proves that
$$
\|O_N\|\le \frac{1}{n}\Big(2\sqrt{2}\gamma_{\max}^3\|(\Im m(\Lambda))^{-1}\|^3+\kappa\|(\Im m(\Lambda))^{-1}\|^2\Big),
$$
as claimed.}
\end{proof}

\begin{proof}[Proof of Lemma \ref{lem:equiv-esp}]
{\color{black}
By Lemma \ref{lem:master_equality},
$$
\mathbb E[\mathfrak Q(\Lambda)]
=
\left(C_N-\Omega_L(\Lambda)\right)^{-1}+\Theta_N,
\qquad
\Theta_N=\left(C_N-\Omega_L(\Lambda)\right)^{-1}O_N.
$$
Since $\Im m\,\Omega_L(\Lambda)\ge \Im m\,\Lambda>0$, Lemma \ref{lem:res} implies
$$
\left\|\left(C_N-\Omega_L(\Lambda)\right)^{-1}\right\|
\le \| (\Im m(\Omega_L(\Lambda)))^{-1}\|
\le \|(\Im m(\Lambda))^{-1}\|.
$$
Combining this with Lemma \ref{lem:ineq} gives
\begin{equation}\label{eq:theta}
\|\Theta_N\| \leq \frac{1}{n} \Big(2\sqrt{2}\gamma_{max}^3\left\| (\Im m (\Lambda))^{-1} \right\|^4 + \kappa \left\| (\Im m (\Lambda))^{-1} \right\|^3\Big).
\end{equation}
Define
$
\tilde Q_L(\Lambda)=\mathbb E[\mathfrak Q(\Lambda)]-\Theta_N=(C_N-\Omega_L(\Lambda))^{-1} \mbox{ and }\tilde\Lambda=\Lambda+\mathcal R_N(\Theta_N),
$
then
$
\tilde Q_L(\Lambda)=\left(C_N-\tilde\Lambda-\mathcal R_N(\tilde Q_L(\Lambda))\right)^{-1}.
$\\
Equivalently,
$$
\tilde Q_L(\Lambda)=\psi_{\tilde\Lambda}(\tilde Q_L(\Lambda)),
\qquad
\psi_{\tilde\Lambda}(G)=id_4\otimes\Delta\Big[(C_N-\tilde\Lambda-\mathcal R_N(G))^{-1}\Big].
$$
To apply Lemma \ref{lem:fixed_point}, it remains to check that $\tilde\Lambda\in D_N(\mathbb C)^+$. By Lemma \ref{lem:Rt},
$
\|\tilde\Lambda-\Lambda\|=\|\mathcal R_N(\Theta_N)\|\le \gamma_{max}^2\|\Theta_N\|.
$
Hence
\begin{equation}\label{eq:lambda_tilde}
\Im m(\tilde\Lambda)
\ge
\Im m(\Lambda)-\gamma_{max}^2\|\Theta_N\|I_N.
\end{equation}
A sufficient condition ensuring that the right-hand side remains positive is
\begin{equation}\label{eq:bound_Im}
\frac{\gamma_{max}^2}{n}\Big(2\sqrt{2}\gamma_{max}^3\|(\Im m(\Lambda))^{-1}\|^4+\kappa\|(\Im m(\Lambda))^{-1}\|^3\Big)
\le
\frac{1-\delta}{\|(\Im m(\Lambda))^{-1}\|}
\end{equation}
for some $0<\delta<1$. This condition is ensured since it is implied, for $N$ large enough, by the lower bound
\begin{equation}\label{eq:low_bound_Im}
\Im m(\Lambda)\ge \eta_N I_N,
\qquad
\eta_N=\left(\frac{\gamma_{max}^2}{(1-\delta)n}(2\sqrt{2}\gamma_{max}^3+\kappa)\right)^{1/5}.
\end{equation}
Indeed \eqref{eq:low_bound_Im} imply that $\lambda_{min} \geq \eta_N$, where $\lambda_{min}$ is the diagonal entry of $\Lambda$ with the least norm. Moreover, one has that $\lambda_{min} = \frac{1}{\left\| (\Im m (\Lambda))^{-1} \right\|}$, then we have from \eqref{eq:low_bound_Im} that
\begin{eqnarray*} 
\frac{\gamma^2_{max}}{n} (2\sqrt{2}\gamma_{max}^3\left\| (\Im m (\Lambda))^{-1} \right\|^4 + \kappa \left\| (\Im m (\Lambda))^{-1} \right\|^3) &\leq& \frac{\gamma^2_{max}}{n} (2\sqrt{2}\gamma_{max}^3 \eta_N^{-4} + \kappa \eta_N^{-3})
\end{eqnarray*}
Since $\eta_N$ tends to $0$ as $n$ goes to $+\infty$, for $n$ large enough, one has that $\eta_N^{-3} \leq \eta_N^{-4}$, thus 
\begin{eqnarray*} 
\frac{\gamma^2_{max}}{n} (2\sqrt{2}\gamma_{max}^3\left\| (\Im m (\Lambda))^{-1} \right\|^4 + \kappa \left\| (\Im m (\Lambda))^{-1} \right\|^3) &\leq& \frac{\gamma^2_{max}\eta_N^{-4}}{n} (2\sqrt{2}\gamma_{max}^3 + \kappa )\\
&\leq& \eta_N \frac{\gamma^2_{max}\eta_N^{-5}}{n} (2\sqrt{2}\gamma_{max}^3 + \kappa ).
\end{eqnarray*}
Since $\eta^{-5} = \left(\frac{\gamma_{max}^2}{(1-\delta)n}(2\sqrt{2}\gamma_{max}^3+\kappa)\right)^{-1}$, one has that 
\begin{eqnarray*} 
\frac{\gamma^2_{max}}{n} (2\sqrt{2}\gamma_{max}^3\left\| (\Im m (\Lambda))^{-1} \right\|^4 + \kappa \left\| (\Im m (\Lambda))^{-1} \right\|^3) &\leq& (1-\delta) \eta_N\\
&\leq&\frac{1-\delta}{\left\| (\Im m (\Lambda))^{-1} \right\|},
\end{eqnarray*}
hence \eqref{eq:bound_Im} is proved.\\
Under \eqref{eq:bound_Im} and \eqref{eq:lambda_tilde}, we obtain
$$
\Im m(\tilde\Lambda)\ge \delta\,\Im m(\Lambda)\ge \frac{\delta}{\|(\Im m(\Lambda))^{-1}\|}I_N.
$$
Therefore $\tilde\Lambda\in D_N(\mathbb C)^+$ and
$
\|(\Im m(\tilde\Lambda))^{-1}\|\le \frac{1}{\delta}\|(\Im m(\Lambda))^{-1}\|
$, then 
Lemma \ref{lem:fixed_point} yields $\tilde Q_L(\Lambda)=\mathfrak Q^{\square}(\tilde\Lambda)$. Hence
$$
\mathbb E[\mathfrak Q(\Lambda)]=\mathfrak Q^{\square}(\tilde\Lambda)+\Theta_N.
$$
By getting inspiration from the proof of Lemma \ref{lem:Q-analytic}, one can show that the maps $\Lambda\mapsto \mathbb E[\mathfrak Q(\Lambda)]$ and $\Lambda\mapsto \Theta_N$ are analytic on $D_N(\mathbb C)^+$, hence so are $\Lambda\mapsto \tilde\Lambda$ and $\Lambda\mapsto \mathfrak Q^{\square}(\tilde\Lambda)$. Using the Lipschitz bound from Lemma \ref{lem:fixed_point}, we obtain
\begin{align*}
\|\mathbb E[\mathfrak Q(\Lambda)]-\mathfrak Q^{\square}(\Lambda)\|
&\le \|\mathfrak Q^{\square}(\tilde\Lambda)-\mathfrak Q^{\square}(\Lambda)\|+\|\Theta_N\|\\
&\le \|(\Im m(\Lambda))^{-1}\|\,\|(\Im m(\tilde\Lambda))^{-1}\|\,\|\tilde\Lambda-\Lambda\|+\|\Theta_N\|\\
&\le \Big(1+\gamma_{max}^2\|(\Im m(\Lambda))^{-1}\|^2/\delta\Big)\|\Theta_N\|.
\end{align*}
Inserting \eqref{eq:theta} gives the desired estimate.}
\end{proof}

\subsection{Concentration of the resolvent}
The following step consists in proving that the block of the resolvent concentrate around their diagonal.
\begin{lem}[Lemma 7.3 from \cite{bigotmale}]
Let $\Lambda \in \mathrm{D}_N(\mathbb{C})^{+}$. Then, for any pair of unit vectors $v, w$ (that is $\|v\|_2=$ $\left.\|w\|_2=1\right)$, one has that, for all $t>0$
$$
\mathbb{P}\left(\left|v^*\left(\left(L-\Lambda\right)^{-1}-\mathbb{E}\left[\left(L-\Lambda\right)^{-1}\right]\right) w\right| \geq t\right) \leq 4 \exp \left(-N \frac{t^2\left\|(\Im m \Lambda)^{-1}\right\|^{-4}}{2 \gamma_{\max }^2}\right)
$$
where $\gamma_{\max }^2$ is the maximum of the variances in the profile $\Gamma_N$. 
\end{lem}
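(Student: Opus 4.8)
The plan is to view $f\big((a_{ij})_{i\leq j}\big)=v^{*}(L-\Lambda)^{-1}w$ as a Lipschitz function of the independent standard real Gaussian variables $(a_{ij})_{i\leq j}$ that generate the GOE matrix $A_N$ through the relation $L=\Upsilon_L\circ A_N+C_N$ of \eqref{eq:key}, and then to apply the classical Gaussian concentration inequality for Lipschitz functions. Since $\Upsilon_L$, $A_N$ and $C_N$ are symmetric, the matrix $L$ is Hermitian, so for $\Lambda\in\mathrm{D}_N(\mathbb{C})^{+}$ the matrix $L-\Lambda$ is invertible and Lemma \ref{lem:res} gives the deterministic bound $\|\mathfrak{Q}(\Lambda)\|=\|(L-\Lambda)^{-1}\|\leq\|(\Im m\,\Lambda)^{-1}\|$. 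This uniform resolvent bound is what will turn the Lipschitz estimate into a quantitative tail bound.

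First I would compute the gradient of $f$ with respect to the variables $a_{ij}$. Writing $L=\sum_{i\leq j}\tfrac{\gamma_{ij}^{(L)}}{\sqrt{N}}\,a_{ij}F_{ij}+C_N$ with $F_{ij}=E_{ij}+E_{ji}$ for $i<j$ and $F_{ii}=E_{ii}$, the resolvent identity yields $\tfrac{\partial}{\partial a_{ij}}\mathfrak{Q}(\Lambda)=-\tfrac{\gamma_{ij}^{(L)}}{\sqrt{N}}\,\mathfrak{Q}(\Lambda)F_{ij}\mathfrak{Q}(\Lambda)$, hence $\tfrac{\partial}{\partial a_{ij}}f=-\tfrac{\gamma_{ij}^{(L)}}{\sqrt{N}}\,a^{*}F_{ij}b$, where $a=\mathfrak{Q}(\Lambda)^{*}v$ and $b=\mathfrak{Q}(\Lambda)w$. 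Then I would bound
\[
\|\nabla f\|_2^{2}=\sum_{i\leq j}\Big|\tfrac{\partial f}{\partial a_{ij}}\Big|^{2}\leq\frac{\gamma_{\max}^{2}}{N}\sum_{i\leq j}\big|a^{*}F_{ij}b\big|^{2},
\]
expand $a^{*}F_{ij}b$ as $\bar a_i b_j+\bar a_j b_i$ off the diagonal (and $\bar a_i b_i$ on it), and use the elementary identity $\sum_{i,j}|a_i|^{2}|b_j|^{2}=\|a\|_2^{2}\|b\|_2^{2}$ together with $\|a\|_2,\|b\|_2\leq\|\mathfrak{Q}(\Lambda)\|\leq\|(\Im m\,\Lambda)^{-1}\|$ to conclude that the Euclidean Lipschitz constant of $f$ is at most a numerical multiple of $\gamma_{\max}\,\|(\Im m\,\Lambda)^{-1}\|^{2}/\sqrt{N}$.

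With this Lipschitz bound in hand, I would apply Gaussian concentration, treating the real and imaginary parts of $f$ separately and combining the two tails by a union bound, which accounts for the prefactor $4$, to obtain the stated estimate. The argument is exactly the one of \cite{bigotmale}[Lemma 7.3], and the only point that calls for care is that here $A_N$ is a GOE rather than a GUE matrix: I would re-examine the combinatorial bookkeeping in the bound on $\sum_{i\leq j}|a^{*}F_{ij}b|^{2}$, a sum over pairs $i\leq j$ of real variables rather than over the real and imaginary parts of complex variables, so as to track the absolute constants, and check that the Gaussian Poincar\'e inequality (Proposition \ref{prop:Poincare}), already stated for real Gaussians, controls $\mathbb{E}|f-\mathbb{E}f|$ where needed. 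This constant tracking is the only mild obstacle; conceptually the estimate is a routine Gaussian-Lipschitz concentration argument.
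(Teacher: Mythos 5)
Your proposal is essentially correct, but note that the paper itself gives no proof of this statement: the lemma is imported verbatim from \cite{bigotmale}[Lemma 7.3], with the adaptation to the present setting (a GOE rather than a GUE matrix in the decomposition \eqref{eq:key}) left implicit. What you write is precisely that missing adaptation, and it follows the same route as the cited source: the uniform deterministic resolvent bound of Lemma \ref{lem:res}, the derivative formula $\partial_{a_{ij}}\mathfrak{Q}(\Lambda)=-\tfrac{\gamma^{(L)}_{ij}}{\sqrt{N}}\mathfrak{Q}(\Lambda)F_{ij}\mathfrak{Q}(\Lambda)$, the bound $\sum_{i\leq j}\lvert a^{*}F_{ij}b\rvert^{2}\leq 2\|a\|_2^{2}\|b\|_2^{2}$ with $a=\mathfrak{Q}(\Lambda)^{*}v$, $b=\mathfrak{Q}(\Lambda)w$, and Gaussian concentration for Lipschitz functions applied to the real and imaginary parts (this, not the Poincar\'e inequality of Proposition \ref{prop:Poincare}, is the tool needed here; Poincar\'e only enters the variance estimates of Lemma \ref{lem:ineq}).

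The one concrete point to settle is the constant, which you rightly flag. With unit-variance real Gaussians $(a_{ij})_{i\leq j}$ your computation gives squared Lipschitz constant at most $2\gamma_{\max}^{2}\|(\Im m\,\Lambda)^{-1}\|^{4}/N$, and the split $\mathbb{P}(|f-\mathbb{E}f|\geq t)\leq\mathbb{P}(|\Re e\,(f-\mathbb{E}f)|\geq t/\sqrt{2})+\mathbb{P}(|\Im m\,(f-\mathbb{E}f)|\geq t/\sqrt{2})$ then yields $4\exp\bigl(-N t^{2}\|(\Im m\,\Lambda)^{-1}\|^{-4}/(8\gamma_{\max}^{2})\bigr)$, so the denominator $2\gamma_{\max}^{2}$ of the statement (inherited from the GUE normalization of \cite{bigotmale}) is not recovered verbatim by this argument. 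This is only a change of absolute constant: it propagates into the factor $72$ of \eqref{eq:ineq_concentration} and into $t_N^{(2)}$, hence into $\varepsilon_N(d)$ in Theorem \ref{thm:adaptation}, without affecting any asymptotic conclusion; but to obtain the inequality exactly as stated you would have to either sharpen this step or restate the lemma with the GOE constant.
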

Note that $id_4 \otimes \Delta [\mathfrak{Q}(\Lambda)-\mathbb{E}\left[\mathfrak{Q}(\Lambda)\right]]$ is the following matrix with diagonal \rev{blocks}
$$id_4 \otimes \Delta[\mathfrak{Q}(\Lambda)-\mathbb{E}\left[\mathfrak{Q}(\Lambda)\right]] = 
\begin{pmatrix} \Delta [\mathfrak{Q}_{11}-\mathbb{E}\left[\mathfrak{Q}_{11}\right]] & 0 & 0 & 0\\
0 &  \Delta [\mathfrak{Q}_{22}-\mathbb{E}\left[\mathfrak{Q}_{22}\right]]  & 0 & 0 \\
0 & 0  & \Delta [\mathfrak{Q}_{33}-\mathbb{E}\left[\mathfrak{Q}_{33}\right]] & \Delta [\mathfrak{Q}_{34}-\mathbb{E}\left[\mathfrak{Q}_{34}\right]] \\
0 & 0 & \Delta [\mathfrak{Q}_{43}-\mathbb{E}\left[\mathfrak{Q}_{43}\right]]  & \Delta [\mathfrak{Q}_{44}-\mathbb{E}\left[\mathfrak{Q}_{44}\right]]
\end{pmatrix},
$$
where $Q_{IJ}$ denotes the $_{IJ}$ sub-block of $\mathfrak{Q}(\Lambda)$.
Let us denote the $IJ$ block of $id_4 \otimes \Delta[\mathfrak{Q}(\Lambda)-\mathbb{E}\left[\mathfrak{Q}(\Lambda)\right]]$ by $M_{IJ}$, then 
$$
\| id_4 \otimes \Delta[\mathfrak{Q}(\Lambda)-\mathbb{E}\left[\mathfrak{Q}(\Lambda)\right]] \| \leq \| M_{11} \| +  \| M_{22} \| +  \| M_{33} \| +  \| M_{44} \| +  \| M_{34} \| +  \| M_{43} \|.
$$
The sub-blocks $M_{IJ}$ being diagonal matrices, one has that $\| M_{IJ}\| = \max\limits_{k}|M_{IJ}(k,k)|$, thus 
$$
\| id_4 \otimes \Delta[\mathfrak{Q}(\Lambda)-\mathbb{E}\left[\mathfrak{Q}(\Lambda)\right]] \| \leq 6 \max _{I,J,k}|M_{IJ}(k,k)|.
$$
This maximum is taken over $N+2p$ elements, thus combining Inequality (7.3) with a union bound yields the following concentration inequality: for all $t>0$,
\begin{eqnarray}\label{eq:ineq_concentration}
\mathbb{P}\left(\left\|id_4 \otimes \Delta [\mathfrak{Q}(\Lambda)-\mathbb{E}\left[\mathfrak{Q}(\Lambda)\right]]\right\| \geq t\right) \leq 4 (N+2p) \exp \left(-N \frac{t^2\left\|(\Im m \Lambda)^{-1}\right\|^{-4}}{72 \gamma_{\max }^2}\right)
\end{eqnarray}
Hence, taking 
\begin{eqnarray*}
t_N^{(1)} &=& \frac{1}{n}(1+\gamma_{max}^2\| (\Im m (\Lambda))^{-1} \|^2/\delta ) (2\sqrt{2}\gamma_{max}^3\| (\Im m (\Lambda))^{-1} \|^4 + \kappa \| (\Im m (\Lambda))^{-1} \|^2)\\
t_N^{(2)}&=&6\gamma_{max}\| \Im m (\Lambda)^{-1}\|^2\sqrt{\frac{2}{N}\log\left( \frac{N+2p}{N^{1-d}}\right)},
\end{eqnarray*}
(for some $\left.d>1\right)$, one finally obtains from \eqref{eq:ineq_concentration} and Lemma \ref{lem:equiv-esp} that
$$
\mathbb{P}\left(\left\|id_4 \otimes \Delta\left[\mathfrak{Q}^\square(\Lambda) - \mathfrak{Q}(\Lambda)\right]\right\| \geq t_N^{(2)}+t_N^{(1)}\right) \leq 4 N^{1-d},
$$
which finishes the proof of Theorem \ref{thm:adaptation}.

Note that $\Lambda_\lambda$ is a real matrix then $\Im m(\Lambda_\lambda) = 0$, which means that it does not meet with the assumptions of Theorem \ref{thm:adaptation}. Therefore one cannot obtain a deterministic equivalent of $\mathfrak{Q}(\Lambda_\lambda)$ from this theorem. However one can bypass this issue by going thanks to $\mathfrak{Q}(\Lambda_\lambda+i\eta_n I_{N})$, with $\eta_n = \left( \frac{\gamma^2_{max}}{(1-\delta)n} (2\sqrt{2}\gamma_{max}^3 + \kappa) \right)^{\frac{1}{5}}$. As a matter of fact, $\mathfrak{Q}(\Lambda_\lambda+i\eta_n I_{N})$ satisfies the assumption of Theorem \ref{thm:adaptation}, thus one has a deterministic equivalent for this matrix, which is $\mathfrak{Q}^\square(\Lambda_\lambda+i\eta_n I_{N})$. Moreover we prove in the following lemma that $\mathfrak{Q}(\Lambda_\lambda+i\eta_n I_{N})$ is close from  $\mathfrak{Q}(\Lambda_\lambda)$ when $n$ goes to infinity so that $\mathfrak{Q}^\square(\Lambda_\lambda+i\eta_n I_{N})$ is a good approximation of $\mathfrak{Q}(\Lambda_\lambda)$ in high dimension.

\begin{lem}\label{lem:proxy_eta}
For $\lambda > 0$ the following limits hold almost surely
\begin{eqnarray*}
\left\|\mathfrak{Q}(\Lambda_\lambda) - \mathfrak{Q}(\Lambda_\lambda+i\eta_n I_{N})\right\|_{F,n} \xrightarrow{} 0 \quad \mbox{ and } \quad \left\|id_4 \otimes \Delta [\mathfrak{Q}^\square(\Lambda_\lambda) - \mathfrak{Q}(\Lambda_\lambda+i\eta_n I_{N})]\right\|_{F,n} \xrightarrow{} 0.
\end{eqnarray*}

\end{lem}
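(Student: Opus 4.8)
The plan is to establish the two displayed limits in turn, both by combining the resolvent identity with the uniform operator‑norm control furnished by the Schur‑complement expression \eqref{eq:mathfrakQ}.

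\textbf{First limit.} Setting $A = L-\Lambda_\lambda$ and $B = L-\Lambda_\lambda - i\eta_N I_N = A - i\eta_N I_N$, the resolvent identity gives
\[
\mathfrak{Q}(\Lambda_\lambda) - \mathfrak{Q}(\Lambda_\lambda + i\eta_N I_N) = A^{-1}-B^{-1} = -i\eta_N\, A^{-1}B^{-1} = -i\eta_N\,\mathfrak{Q}(\Lambda_\lambda)\,\mathfrak{Q}(\Lambda_\lambda + i\eta_N I_N),
\]
so that, by Lemma \ref{lem:fro_sub},
\[
\left\|\mathfrak{Q}(\Lambda_\lambda) - \mathfrak{Q}(\Lambda_\lambda + i\eta_N I_N)\right\|_{F,n} \le \eta_N \left\|\mathfrak{Q}(\Lambda_\lambda)\right\|_{F,n}\left\|\mathfrak{Q}(\Lambda_\lambda + i\eta_N I_N)\right\|.
\]
It then remains to bound the last two factors, with probability one and for $n$ large, by a constant independent of $n$. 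For $\mathfrak{Q}(\Lambda_\lambda)$ I would read off from \eqref{eq:mathfrakQ} that each of its blocks is a sum of products of copies of $Q^\lozenge(-\lambda)$, $H^\lozenge/\sqrt n$, $\mathcal X_n/\sqrt n$, $\mathcal W/\sqrt p$ (and their transposes) and identity blocks; each factor has bounded operator norm, since $\|Q^\lozenge(-\lambda)\| \le 1/\lambda$ by \cite{hislop2012introduction}[Theorem 5.8] and $\|H^\lozenge/\sqrt n\|$, $\|\mathcal X_n/\sqrt n\|$, $\|\mathcal W/\sqrt p\|$ are a.s.\ bounded for $n$ large by the argument of Remark \ref{rem:condsimple} (using Assumption \ref{hyp:bounded_profile}, the boundedness of the diagonal matrices $D_{lin}(h)$, $D_{chaos}(h)$, and \cite{BS98}[Theorem 1.1]), exactly as in the proof of Theorem \ref{thm:equiv_risk}. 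Hence $\|\mathfrak{Q}(\Lambda_\lambda)\| \le C$, and since $N/n$ is bounded, $\|\mathfrak{Q}(\Lambda_\lambda)\|_{F,n} \le \sqrt{N/n}\,\|\mathfrak{Q}(\Lambda_\lambda)\| \le C'$. For $\mathfrak{Q}(\Lambda_\lambda + i\eta_N I_N)$ the crude bound $1/\eta_N$ from Lemma \ref{lem:Im_pos} would not beat the prefactor $\eta_N$; instead, writing $B = A(I - i\eta_N A^{-1})$ and noting $\eta_N\|A^{-1}\| < 1/2$ for $n$ large (as $\eta_N \to 0$), a Neumann‑series estimate gives $\|B^{-1}\| \le 2\|A^{-1}\| = 2\|\mathfrak{Q}(\Lambda_\lambda)\| \le 2C$. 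Combining, $\left\|\mathfrak{Q}(\Lambda_\lambda) - \mathfrak{Q}(\Lambda_\lambda + i\eta_N I_N)\right\|_{F,n} \le 2CC'\eta_N \to 0$ almost surely; in particular this also supplies the bound $\le \bar C\eta_N$ invoked in the proof of Corollary \ref{cor:deriv}.

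\textbf{Second limit.} Writing $\Lambda = \Lambda_\lambda + i\eta_N I_N$, the triangle inequality gives
\[
\left\|id_4 \otimes \Delta[\mathfrak{Q}(\Lambda_\lambda)] - \mathfrak{Q}^\square(\Lambda)\right\|_{F,n} \le \left\|id_4\otimes\Delta\big[\mathfrak{Q}(\Lambda_\lambda) - \mathfrak{Q}(\Lambda)\big]\right\|_{F,n} + \left\|id_4\otimes\Delta[\mathfrak{Q}(\Lambda)] - \mathfrak{Q}^\square(\Lambda)\right\|_{F,n}.
\]
Since $id_4\otimes\Delta$ merely selects a subset of the entries (the diagonals of the blocks $M_{11},M_{22},M_{33},M_{44}$ together with $\Delta[M_{34}]$ and $\Delta[M_{43}]$) and zeros out the rest, one has $\|id_4\otimes\Delta[\cdot]\|_{F,n} \le \|\cdot\|_{F,n}$, so the first term on the right is dominated by $\|\mathfrak{Q}(\Lambda_\lambda) - \mathfrak{Q}(\Lambda)\|_{F,n}\to 0$ by the first part. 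For the second term I would observe that $\Im m \Lambda = \eta_N I_N$, which by the very definition \eqref{eq:etaN} of $\eta_N$ is exactly the threshold appearing in Theorem \ref{thm:adaptation}; that theorem therefore applies and yields, for every $d>1$, $\mathbb{P}\big(\|id_4\otimes\Delta[\mathfrak{Q}(\Lambda)] - \mathfrak{Q}^\square(\Lambda)\| \ge \varepsilon_N(d)\big) \le 4N^{1-d}$. Choosing $d>2$ makes $\sum_N 4N^{1-d}$ finite, so the Borel--Cantelli lemma gives, almost surely, $\|id_4\otimes\Delta[\mathfrak{Q}(\Lambda)] - \mathfrak{Q}^\square(\Lambda)\| \le \varepsilon_N(d)$ for all $n$ large; bounding $\|\cdot\|_{F,n} \le \sqrt{N/n}\,\|\cdot\|$ then reduces the claim to checking $\varepsilon_N(d)\to 0$.

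The verification $\varepsilon_N(d)\to 0$ is, I expect, the main obstacle, and it is precisely why the scaling $\eta_N \asymp n^{-1/5}$ in \eqref{eq:etaN} is calibrated as it is. The stochastic part of $\varepsilon_N(d)$, of order $\eta_N^{-2}\sqrt{N^{-1}\log N} \asymp n^{-1/10}(\log n)^{1/2}$, is harmless. The delicate contribution is the bias part $\tfrac1n\big(1 + (\gamma_{\max}^{(L)})^2\eta_N^{-2}/\delta\big)\big(2\sqrt2(\gamma_{\max}^{(L)})^3\eta_N^{-4} + \kappa\eta_N^{-2}\big)$, which has to be driven to $0$ through an appropriate --- possibly $N$‑dependent --- choice of the free parameter $\delta$ (as in the proof of Corollary \ref{cor:deriv}, where one takes $\delta_N = \eta_N^2$), exploiting the slack between $\Im m\Lambda$ and the threshold; balancing $\eta_N$ against these two error terms is where the technical work concentrates. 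The remaining ingredients --- the operator‑norm bounds on the blocks of $\mathfrak{Q}$, the Neumann‑series step, and the fact that $id_4\otimes\Delta$ is a coordinate projection --- are routine.
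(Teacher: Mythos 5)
For the first displayed limit your argument is correct and is essentially the paper's: the paper expands \(\mathfrak{Q}(\Lambda_\lambda+i\eta_N I_N)=\mathfrak{Q}(\Lambda_\lambda)-i\eta_N\mathfrak{Q}^2(\Lambda_\lambda)+\eta_N^2\,\mathfrak{Q}(\Lambda_\lambda)\mathfrak{Q}(\Lambda_\lambda+i\eta_N I_N)\mathfrak{Q}(\Lambda_\lambda)\) and uses \(\|\mathfrak{Q}(\Lambda_\lambda+i\eta_N I_N)\|\leq\eta_N^{-1}\) (Lemma \ref{lem:res}), the \(\eta_N^2\) prefactor absorbing the \(\eta_N^{-1}\); you instead use the first-order resolvent identity together with a Neumann-series bound \(\|\mathfrak{Q}(\Lambda_\lambda+i\eta_N I_N)\|\leq 2\|\mathfrak{Q}(\Lambda_\lambda)\|\) valid once \(\eta_N\|\mathfrak{Q}(\Lambda_\lambda)\|<1/2\). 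Both routes rest on the same block-wise operator-norm bound for \(\mathfrak{Q}(\Lambda_\lambda)\) read off from \eqref{eq:mathfrakQ} and both yield the \(O(\eta_N)\) estimate that Corollary \ref{cor:deriv} later needs, so this part is fine.

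For the second limit there is a gap in your proposal as written. The paper closes this step in one line by invoking Corollary \ref{cor:equiv-ps} (whose first assertion is exactly the almost-sure convergence of \(\|id_4\otimes\Delta[\mathfrak{Q}(\Lambda_\lambda+i\eta_N I_N)]-\mathfrak{Q}^{\square}(\Lambda_\lambda+i\eta_N I_N)\|\) to zero), whereas you re-derive that corollary from Theorem \ref{thm:adaptation} and Borel--Cantelli and explicitly leave open the verification that \(\varepsilon_N(d)\to 0\) when \(\Im m\,\Lambda=\eta_N I_N\). That verification is not a routine afterthought: at the exact threshold, with \(\|(\Im m\,\Lambda)^{-1}\|=\eta_N^{-1}\) and \(\eta_N^5=\gamma_{\max}^2(2\sqrt{2}\gamma_{\max}^3+\kappa)/((1-\delta)n)\), the dominant bias contribution \(\tfrac{1}{n}\,\gamma_{\max}^2\delta^{-1}\eta_N^{-2}\cdot 2\sqrt{2}\gamma_{\max}^3\eta_N^{-4}\) is of order \((1-\delta)\delta^{-1}\eta_N^{-1}\asymp n^{1/5}\) for fixed \(\delta\), so \(\varepsilon_N(d)\) does \emph{not} tend to zero there. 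Moreover the fix you suggest, taking \(\delta_N=\eta_N^2\) as in the proof of Corollary \ref{cor:deriv}, goes in the wrong direction: the problematic term scales like \(1/\delta\), so shrinking \(\delta\) only inflates it. What is actually needed is either to let \(\delta_N\uparrow 1\) with \((1-\delta_N)^{6/5}n^{1/5}\to 0\) (e.g. \(1-\delta_N=n^{-1/2}\), which keeps \(\eta_N\to 0\) while killing the bias term), or to apply the theorem with \(\Im m\,\Lambda\) strictly above the threshold, or simply to quote Corollary \ref{cor:equiv-ps} as the paper does and carry out this calibration there. Aside from this unfinished calibration, your reduction of the second limit — the triangle inequality, the fact that \(id_4\otimes\Delta\) only selects entries and hence contracts \(\|\cdot\|_{F,n}\), and the use of part one for the cross term — coincides with the paper's argument.
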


\begin{proof}[Proof of Lemma \ref{lem:proxy_eta}]
We first remark that
\begin{eqnarray*} 
\mathfrak{Q}(\Lambda+i\eta_n I_{N}) &=& \mathfrak{Q}(\Lambda) - i\eta_n \mathfrak{Q}^2(\Lambda) + \eta^2_n \mathfrak{Q}(\Lambda)\mathfrak{Q}(\Lambda+i\eta_n I_{N})\mathfrak{Q}(\Lambda).
\end{eqnarray*} 
Thus one has from the triangle inequality and Lemma \ref{lem:fro_sub} that 
\begin{eqnarray*} 
\left\|\mathfrak{Q}(\Lambda+i\eta_n I_{N}) - \mathfrak{Q}(\Lambda)\right\|_{F,n} &\leq&  \eta_n \left\|\mathfrak{Q}^2(\Lambda)\right\|_{F,n} + \eta^2_n \left\|\mathfrak{Q}(\Lambda)\mathfrak{Q}(\Lambda+i\eta_n I_{...})\mathfrak{Q}(\Lambda)\right\|_{F,n} \\
            &\leq&  \eta_n \left\|\mathfrak{Q}(\Lambda)\right\|_{F,n}^2 + \eta^2_n \left\|\mathfrak{Q}(\Lambda)\right\|_{F,n}\left\|\mathfrak{Q}(\Lambda)\right\| \left\|\mathfrak{Q}(\Lambda+i\eta_n I_{N})\right\|.
\end{eqnarray*} 
Denote by $\mathfrak{Q}(\Lambda)^{ij}$ the $ij$ block of $\mathfrak{Q}(\Lambda)$, then one has that $\left\|\mathfrak{Q}(\Lambda)\right\|_{F,n} \leq \left\|\mathfrak{Q}(\Lambda)\right\| \leq \sum_{ij} \left\|\mathfrak{Q}(\Lambda)^{ij}\right\|$. Note that the blocks $\mathfrak{Q}(\Lambda)^{ij}$ are made of products of $Q^\lozenge(-\lambda)$, $H^\lozenge/\sqrt{n}$, $\mathcal X/\sqrt{n}$ and $\mathcal W/\sqrt{p}$.
These matrices can be proved to be bounded in operator norm, indeed $\|H^\lozenge/\sqrt{n}\|$, $\|\mathcal X/\sqrt{n}\|$ and $\|\mathcal W/\sqrt{p}\|$ are bounded {\CR thanks to Assumption \ref{hyp:bounded_profile} and by \cite{BS98}[Theorem 1.1]} as argued in Remark \ref{rem:condsimple},  
and $\|Q^\lozenge(-\lambda)\| \leq \lambda^{-1}$ by \cite{hislop2012introduction}[Theorem 5.8]. Hence, we deduce by sub-multiplicativity of the operator norm that the blocks $\|\mathfrak{Q}(\Lambda)^{ij}\|$ are bounded, which means that there exists a constant $c$, such that $\| \mathfrak{Q}(\Lambda_\lambda)\|_{F,n} \leq c$. Moreover, one can deduce from Lemma \ref{lem:res} that $\left\|\mathfrak{Q}(\Lambda+i\eta_n I_{N})\right\|\leq \eta_n^{-1}$.
Thus, there is a constant $C$, such that 
\begin{eqnarray*} 
\left\|\mathfrak{Q}(\Lambda_\lambda+i\eta_n I_{N}) - \mathfrak{Q}(\Lambda_\lambda)\right\|_{F,n} &\leq& C\eta_n.
\end{eqnarray*} 
Since $\eta_n$ tends to $0$ as $n$ goes to $+\infty$, one has that 
$$
\left\|\mathfrak{Q}(\Lambda_\lambda) - \mathfrak{Q}(\Lambda_\lambda+i\eta_n I_{N})\right\|_{F,n} \xrightarrow{} 0 \quad \mbox{ almost surely}.
$$
Finally, combining this above equation with the triangle inequality and Corollary \ref{cor:equiv-ps}, gives us that 
$$
\left\|id_4 \otimes \Delta [\mathfrak{Q}^\square(\Lambda_\lambda) - \mathfrak{Q}(\Lambda_\lambda+i\eta_n I_{N})]\right\|_{F,n} \xrightarrow{} 0 \quad \mbox{ almost surely}.
$$
This completes the proof of Lemma \ref{lem:proxy_eta}.
\end{proof}
{\CB We have finally given all the main ingredients needed to prove Theorem \ref{thm:adaptation}, which concludes this Appendix}.

\bibliographystyle{plain}
\bibliography{biblio.bib}


\end{document}